\newcommand*{\encircled}[1]{\relax\ifmmode\mathpalette\@encircled@math{#1}\else\@encircled{#1}\fi}
\newcommand*{\@encircled@math}[2]{\@encircled{$\m@th#1#2$}}
\newcommand*{\@encircled}[1]{%
  \tikz[baseline,anchor=base]{\node[draw,circle,outer sep=0pt,inner sep=.2ex] {#1};}}
\tikzset{rotarrow/.pic={
\draw[thin,->] (-0.2,-0.2)  to [out=-60,in=60, looseness=4] ++(0,0.4) node [above=1mm] {\tikzpictext};
},
}
\let\classAND\AND
\let\AND\relax
\let\AND\classAND
\newcommand{\ie}{i.e.\@\xspace}
\newcommand{\eg}{e.g.\@\xspace}
\newcommand{\cf}{cf.\@\xspace} %
\newcommand{\wrt}{w.r.t.\@\xspace} %
\newcommand{\wolog}{w.l.o.g.\@\xspace} %
\newcommand{\ith}[1][i]{\ensuremath{{#1}^{th}}\xspace}
\newcommand{\xmark}{\ding{55}}%
\newcommand{\inv}[1]{\ensuremath{#1^{-1}}}
\newcommand{\diag}[1]{\ensuremath{\mathrm{diag}\parenthesis{#1}}}
\newcommand{\rank}[1]{\ensuremath{\mathrm{rank}\parenthesis{#1}}}
\newcommand{\mat}[1]{\ensuremath{\boldsymbol{\mathrm{#1}}}}
\newcommand{\derivative}[2]{\ensuremath{\dfrac{\partial #1}{\partial #2}}}
\newcommand{\rows}[2]{\ensuremath{\brackets{#1}_{#2:}}}
\newcommand{\abs}[1]{\ensuremath{\left|#1\right|}}
\newcommand{\expectation}[1]{\ensuremath{\mathbb{E}_{#1}}}
\newcommand{\rr}[1]{\ensuremath{\mathbb{R}^{#1}}}
\newcommand{\parenthesis}[1]{\ensuremath{\left(#1\right)}}
\newcommand{\brackets}[1]{\ensuremath{\left[#1\right]}}
\newcommand{\braces}[1]{\ensuremath{\left\{#1\right\}}}
\newcommand{\patrik}[1]{\textcolor{cyan}{[\textbf{Patrik:} #1]}}
\newglossaryentry{aux}{
    name        = \ensuremath{\mathrm{\boldsymbol{u}}} ,
    description = {auxiliary variable} ,
    type        = abbrev,
}
\newglossaryentry{im}{
    name        = \ensuremath{\mathrm{Im}} ,
    description = {image space} ,
    type        = abbrev,
}
\newglossaryentry{ker}{
    name        = \ensuremath{\mathrm{Ker}} ,
    description = {kernel space} ,
    type        = abbrev,
}
\newglossaryentry{kronecker}{
    name        = \ensuremath{\otimes} ,
    description = {Kronecker product} ,
    type        = abbrev,
}
\newglossaryentry{loss}{
    name        = \ensuremath{\mathcal{L}} ,
    description = {loss function} ,
    type        = abbrev,
}
\newglossaryentry{numenv}{
    name        = \ensuremath{\abs{E}} ,
    description = {number of environments} ,
    type        = abbrev,
}
\newglossaryentry{lr}{
    name        = \ensuremath{\eta} ,
    description = {learning rate} ,
    type        = abbrev,
}
\newglossaryentry{hypersphere}{
    name        = \ensuremath{\mathcal{S}} ,
    description = {hypersphere} ,
    type        = abbrev,
}
\newglossaryentry{dec}{
    name        = \ensuremath{\boldsymbol{f}} ,
    description = {decoder map $\gls{Latent}\to\gls{Obs}$} ,
    type        = abbrev,
}
\newglossaryentry{deccomp}{
    name        = \ensuremath{f} ,
    description = {decoder map component} ,
    type        = abbrev,
}
\newglossaryentry{enc}{
    name        = \ensuremath{\boldsymbol{g}} ,
    description = {encoder map $\gls{Obs}\to\gls{Latent}$} ,
    type        = abbrev,
}
\newglossaryentry{numdata}{
    name        = \ensuremath{n} ,
    description = {number of samples} ,
    type        = abbrev,
}
\newglossaryentry{observations}{type=abbrev,name=Observations,description={\nopostdesc}}
\newglossaryentry{obs}{
    name        = \ensuremath{\boldsymbol{x}} ,
    description = {observation vector} ,
    type        = abbrev,
    parent      = observations,
}
\newglossaryentry{obscomp}{
    name        = \ensuremath{x} ,
    description = {observation single component} ,
    type        = abbrev,
    parent      = observations,
}
\newglossaryentry{Obs}{
    name        = \ensuremath{\mathcal{X}} ,
    description = {observation space} ,
    type        = abbrev,
    parent      = observations,
}
\newglossaryentry{obsdim}{
    name        = \ensuremath{D} ,
    description = {dimensionality of the observation space \gls{Obs}} ,
    type        = abbrev,
    parent      = observations,
}
\newglossaryentry{obsmat}{
    name        = \ensuremath{\mat{X}} ,
    description = {observation matrix of \rr{\gls{numdata}\times\gls{obsdim}}} ,
    type        = abbrev,
    parent      = observations,
}
\newglossaryentry{obspos}{
    name        = \ensuremath{\tilde{\boldsymbol{x}}} ,
    description = {positive observation vector} ,
    type        = abbrev,
    parent      = observations,
}
\newglossaryentry{obsneg}{
    name        = \ensuremath{{\boldsymbol{x}}^{-}} ,
    description = {negative observation vector} ,
    type        = abbrev,
    parent      = observations,
}
\newglossaryentry{labels}{type=abbrev,name=Labels,description={\nopostdesc}}
\newglossaryentry{label}{
    name        = \ensuremath{\boldsymbol{y}} ,
    description = {label vector} ,
    type        = abbrev,
    parent      = labels,
}
\newglossaryentry{labelhat}{
    name        = \ensuremath{\widehat{\boldsymbol{y}}} ,
    description = {estimated label vector} ,
    type        = abbrev,
    parent      = labels,
}
\newglossaryentry{labelcomp}{
    name        = \ensuremath{y} ,
    description = {label component} ,
    type        = abbrev,
    parent      = labels,
}
\newglossaryentry{labelcomphat}{
    name        = \ensuremath{\widehat{y}} ,
    description = {label component} ,
    type        = abbrev,
    parent      = labels,
}
\newglossaryentry{labelset}{
    name        = \ensuremath{\mathcal{Y}} ,
    description = {label set} ,
    type        = abbrev,
    parent      = labels,
}
\newglossaryentry{labeldim}{
    name        = \ensuremath{C} ,
    description = {number of classes in the label set \gls{labelset}} ,
    type        = abbrev,
    parent      = labels,
}
\newglossaryentry{latents}{type=abbrev,name=Latents,description={\nopostdesc}}
\newglossaryentry{latent}{
    name        = \ensuremath{\boldsymbol{z}} ,
    description = {latent vector} ,
    type        = abbrev,
    parent     = latents,
}
\newglossaryentry{latentcomp}{
    name        = \ensuremath{z} ,
    description = {latent single component} ,
    type        = abbrev,
    parent     = latents,
}
\newglossaryentry{Latent}{
    name        = \ensuremath{\mathcal{Z}} ,
    description = {latents} ,
    type        = abbrev,
    parent     = latents,
}
\newglossaryentry{latentdim}{
    name        = \ensuremath{d} ,
    description = {dimensionality of the latent space \gls{Latent}} ,
    type        = abbrev,
    parent     = latents,
}
\newglossaryentry{latentmat}{
    name        = \ensuremath{\mat{Z}} ,
    description = {latent matrix of \rr{\gls{numdata}\times\gls{latentdim}}} ,
    type        = abbrev,
    parent      = latents,
}
\newglossaryentry{latentpos}{
    name        = \ensuremath{\tilde{\boldsymbol{z}}} ,
    description = {positive latent vector} ,
    type        = abbrev,
    parent      = latents,
}
\newglossaryentry{latentneg}{
    name        = \ensuremath{\boldsymbol{z}^{-}} ,
    description = {negative latent vector} ,
    type        = abbrev,
    parent      = observations,
}
\newglossaryentry{sigmaz}{
    name        = \ensuremath{\boldsymbol{\sigma}_{\gls{latentcomp}}} ,
    description = {std of \gls{latentcomp}} ,
    type        = abbrev,
    parent     = latents,
}
\newglossaryentry{content}{
    name        = \ensuremath{\boldsymbol{z}^{c}} ,
    description = {content latent vector} ,
    type        = abbrev,
    parent     = latents,
}
\newglossaryentry{contentcomp}{
    name        = \ensuremath{z^{c}} ,
    description = {content latent single component} ,
    type        = abbrev,
    parent     = latents,
}
\newglossaryentry{Content}{
    name        = \ensuremath{\mathcal{Z}^{c}} ,
    description = {content} ,
    type        = abbrev,
    parent     = latents,
}
\newglossaryentry{contentdim}{
    name        = \ensuremath{d_{c}} ,
    description = {dimensionality of \gls{content}} ,
    type        = abbrev,
    parent     = latents,
}
\newglossaryentry{sigmac}{
    name        = \ensuremath{\boldsymbol{\sigma}_{c}} ,
    description = {std of \gls{contentcomp}} ,
    type        = abbrev,
    parent     = latents,
}
\newglossaryentry{style}{
    name        = \ensuremath{\boldsymbol{z}^{s}} ,
    description = {style latent vector} ,
    type        = abbrev,
    parent     = latents,
}
\newglossaryentry{stylecomp}{
    name        = \ensuremath{z^{s}} ,
    description = {style latent single component} ,
    type        = abbrev,
    parent     = latents,
}
\newglossaryentry{Style}{
    name        = \ensuremath{\mathcal{Z}^{s}} ,
    description = {style} ,
    type        = abbrev,
    parent     = latents,
}
\newglossaryentry{styledim}{
    name        = \ensuremath{d_{s}} ,
    description = {dimensionality of \gls{style}} ,
    type        = abbrev,
    parent     = latents,
}
\newglossaryentry{sigmas}{
    name        = \ensuremath{\boldsymbol{\sigma}_{s}} ,
    description = {std of \gls{stylecomp}} ,
    type        = abbrev,
    parent     = latents,
}
\newglossaryentry{modality}{
    name        = \ensuremath{\boldsymbol{z}^{m}} ,
    description = {modality-specific latent vector} ,
    type        = abbrev,
    parent     = latents,
}
\newglossaryentry{modalitycomp}{
    name        = \ensuremath{z^{m}} ,
    description = {modality-specific  latent single component} ,
    type        = abbrev,
    parent     = latents,
}
\newglossaryentry{Modality}{
    name        = \ensuremath{\mathcal{Z}^{m}} ,
    description = {latent subspace of \gls{modality}} ,
    type        = abbrev,
    parent     = latents,
}
\newglossaryentry{modalitydim}{
    name        = \ensuremath{d_{m}} ,
    description = {dimensionality of \gls{modality}} ,
    type        = abbrev,
    parent     = latents,
}
\newglossaryentry{algebra}{type=abbrev,name=Algebra,description={\nopostdesc}}
\newglossaryentry{identity}{
    name        = \ensuremath{\boldsymbol{\mathrm{I}}} ,
    description = { identity matrix} ,
    type        = abbrev,
    parent      = algebra,
}
\newcommand{\Id}[1]{\ensuremath{\gls{identity}_{#1}}}
\newglossaryentry{ones}{
    name        = \ensuremath{\boldsymbol{\mathrm{1}}} ,
    description = {a vector of ones} ,
    type        = abbrev,
    parent      = algebra,
}
\newglossaryentry{zeros}{
    name        = \ensuremath{\boldsymbol{\mathrm{0}}} ,
    description = {a vector of zeros} ,
    type        = abbrev,
    parent      = algebra,
}
\newglossaryentry{jacobian}{
    name        = \ensuremath{\boldsymbol{\mathrm{J}}} ,
    description = {Jacobian matrix} ,
    type        = abbrev,
    parent      = algebra,
}
\newglossaryentry{hessian}{
    name        = \ensuremath{\boldsymbol{\mathrm{H}}} ,
    description = {Hessian matrix} ,
    type        = abbrev,
    parent      = algebra,
}
\newglossaryentry{d}{
    name        = \ensuremath{\boldsymbol{\mathrm{D}}} ,
    description = {diagonal matrix} ,
    type        = abbrev,
    parent      = algebra,
}
\newglossaryentry{o}{
    name        = \ensuremath{\boldsymbol{\mathrm{O}}},
    description = {orthogonal matrix} ,
    type        = abbrev,
    parent      = algebra,
}
\newglossaryentry{scalar}{
    name        = \ensuremath{\alpha} ,
    description = {scalar field} ,
    type        = abbrev,
    parent      = algebra,
}
\newglossaryentry{perm}{
    name        = \ensuremath{\mathbb{P}} ,
    description = {group of permutation matrices} ,
    type        = abbrev,
    parent      = algebra,
}
\newglossaryentry{p}{
    name        = \ensuremath{\mat{P}},
    description = {permutation matrix} ,
    type        = abbrev,
    parent      = algebra,
}
\newglossaryentry{prob}{type=abbrev,name=Probability theory,description={\nopostdesc}}
\newglossaryentry{cov}{
    name        = \ensuremath{\boldsymbol{\mathrm{\Sigma}}},
    description = {covariance matrix} ,
    type        = abbrev,
    parent      = prob,
}
\newglossaryentry{mean}{
    name        = \ensuremath{\boldsymbol{\mu}},
    description = {mean} ,
    type        = abbrev,
    parent      = prob,
}
\newglossaryentry{std}{
    name        = \ensuremath{\boldsymbol{\sigma}},
    description = {standard deviation} ,
    type        = abbrev,
    parent      = prob,
}
\newglossaryentry{entropy}{
    name        = \ensuremath{\mathrm{H}} ,
    description = {entropy} ,
    type        = abbrev,
    parent      = prob,
}
\newglossaryentry{expfamparam}{
    name        = \ensuremath{\boldsymbol{\theta}} ,
    description = {parameter of exponential family} ,
    type        = abbrev,
    parent      = prob,
}
\newglossaryentry{expfamnatparam}{
    name        = \ensuremath{\boldsymbol{\eta}} ,
    description = {natural parameter of exponential family} ,
    type        = abbrev,
    parent      = prob,
}
\newglossaryentry{expfamsuffstat}{
    name        = \ensuremath{T(\gls{obs})} ,
    description = {sufficient statistics of exponential family} ,
    type        = abbrev,
    parent      = prob,
}
\newglossaryentry{expfamlogpartition}{
    name        = \ensuremath{A} ,
    description = {log parition function of exponential family (depends on \gls{expfamnatparam})} ,
    type        = abbrev,
    parent      = prob,
}
\newglossaryentry{wishart}{
    name        = \ensuremath{\mathcal{W}} ,
    description = {Wishart distribution} ,
    type        = abbrev,
    parent      = prob,
}
\newglossaryentry{normal}{
    name        = \ensuremath{\mathcal{N}} ,
    description = {normal distribution} ,
    type        = abbrev,
    parent      = prob,
}
\newcommand{\normal}[2]{\ensuremath{\gls{normal}\parenthesis{#1;#2}}}
\newglossaryentry{matrixnormal}{
    name        = \ensuremath{\mathcal{MN}} ,
    description = {normal distribution} ,
    type        = abbrev,
    parent      = prob,
}
\newglossaryentry{causal}{type=abbrev,name=Causality,description={\nopostdesc}}
\newglossaryentry{cause}{
    name        = \ensuremath{\boldsymbol{N}},
    description = {noise (independent)  variable vector} ,
    type        = abbrev,
    parent      = causal,
}
\newglossaryentry{causecomp}{
    name        = \ensuremath{N},
    description = {noise (independent)  variable component} ,
    type        = abbrev,
    parent      = causal,
}
\newglossaryentry{Cause}{
    name        = \ensuremath{\mathcal{N}} ,
    description = {space of the noise variables} ,
    type        = abbrev,
    parent      = causal,
}
\newglossaryentry{effect}{
    name        = \ensuremath{\boldsymbol{X}},
    description = {observation vector} ,
    type        = abbrev,
    parent      = causal,
}
\newglossaryentry{effectcomp}{
    name        = \ensuremath{X},
    description = {observation component} ,
    type        = abbrev,
    parent      = causal,
}
\newglossaryentry{Effect}{
    name        = \ensuremath{\mathcal{X}} ,
    description = {space of the effect variables} ,
    type        = abbrev,
    parent      = causal,
}
\newglossaryentry{pa}{
    name        = \ensuremath{\boldsymbol{Pa}},
    description = {parents of \gls{effect}} ,
    type        = abbrev,
    parent      = causal,
}
\newcommand{\pai}[1][i]{\ensuremath{\gls{pa}_{#1}}}
\newglossaryentry{nondesc}{
    name        = \ensuremath{\boldsymbol{ND}},
    description = {non-descendants of \gls{effect}} ,
    type        = abbrev,
    parent      = causal,
}
\newglossaryentry{nondescminuspa}{
    name        = \ensuremath{\boldsymbol{\overline{ND}}},
    description = {non-descendants of \gls{effect}, excluding its parents} ,
    type        = abbrev,
    parent      = causal,
}
\newglossaryentry{semf}{
    name        = \ensuremath{\boldsymbol{f}},
    description = {structural assignment in \glspl{sem}} ,
    type        = abbrev,
    parent      = causal,
}
\newglossaryentry{semfcomp}{
    name        = \ensuremath{f},
    description = {a component of \gls{semf}} ,
    type        = abbrev,
    parent      = causal,
}
\newglossaryentry{order}{
    name        = \ensuremath{\pi},
    description = {causal ordering} ,
    type        = abbrev,
    parent      = causal,
}
\newglossaryentry{indexset}{
    name        = \ensuremath{\mathcal{I}},
    description = {index set} ,
    type        = abbrev,
    parent      = causal,
}
\newglossaryentry{adjacency}{
    name        = \ensuremath{\boldsymbol{\mathcal{A}}} ,
    description = {adjacency matrix of a \glspl{sem}} ,
    type        = abbrev,
    parent      = causal,
}
\newglossaryentry{connectivity}{
    name        = \ensuremath{\boldsymbol{\mathcal{C}}} ,
    description = {connectivity matrix of a \glspl{sem}} ,
    type        = abbrev,
    parent      = causal,
}
\newglossaryentry{dependency}{
    name        = \ensuremath{\mathcal{D}} ,
    description = {dependency matrix of a \glspl{sem}} ,
    type        = abbrev,
    parent      = causal,
}
\newglossaryentry{seq}{
    name        = \ensuremath{\sim_{\acrshort{dag}}} ,
    description = {structural equivalence} ,
    type        = abbrev,
    parent      = causal,
}
\newglossaryentry{contrastive}{type=abbrev,name=Contrastive Learning,description={\nopostdesc}}
\newglossaryentry{clloss}{
    name        = \ensuremath{\mathcal{L}_{\mathrm{\acrshort{cl}}}} ,
    description = {contrastive loss function} ,
    type        = abbrev,
    parent      = contrastive,
}
\newglossaryentry{alignloss}{
    name        = \ensuremath{\mathcal{L}_{\mathrm{align}}} ,
    description = {alignment term in \gls{clloss}} ,
    type        = abbrev,
    parent      = contrastive,
}
\newglossaryentry{uniformloss}{
    name        = \ensuremath{\mathcal{L}_{\mathrm{uniform}}} ,
    description = {uniformity term in \gls{clloss}} ,
    type        = abbrev,
    parent      = contrastive,
}
\newglossaryentry{temp}{
    name        = \ensuremath{{\boldsymbol{\tau}}} ,
    description = {temperature in \gls{clloss}} ,
    type        = abbrev,
    parent      = contrastive,
}
\newglossaryentry{numneg}{
    name        = \ensuremath{M} ,
    description = {number of negative samples} ,
    type        = abbrev,
    parent      = contrastive,
}
\newglossaryentry{vaes}{type=abbrev,name=\acrlongpl{vae},description={\nopostdesc}}
\newglossaryentry{q}{
    name        = \ensuremath{q_{\gls{encpar}}(\gls{latent}|\gls{obs})} ,
    description = {variational posterior of the \acrshort{vae}, mapping $\gls{obs}\mapsto\gls{latent}$ parametrized by \gls{encpar}} ,
    type        = abbrev,
    parent      = vaes,
}
\newglossaryentry{qopt}{
    name        = \ensuremath{q_{\widehat{\gls{encpar}}}(\gls{latent}|\gls{obs})} ,
    description = {optimal variational posterior of the \acrshort{vae}, mapping $\gls{obs}\mapsto\gls{latent}$ parametrized by \gls{encpar}} ,
    type        = abbrev,
    parent      = vaes,
}
\newglossaryentry{encpar}{
    name        = \ensuremath{\boldsymbol{\phi}} ,
    description = {parameters of the variational posterior \gls{q}} ,
    type        = abbrev,
    parent      = vaes,
}
\newglossaryentry{encparopt}{
    name        = \ensuremath{\widehat{\boldsymbol{\phi}}} ,
    description = {optimal parameters of the variational posterior \gls{q}} ,
    type        = abbrev,
    parent      = vaes,
}
\newglossaryentry{var_family}{
    name        = \ensuremath{\mathcal{Q}} ,
    description = {distribution family of the variational posterior \gls{q} } ,
    type        = abbrev,
    parent      = vaes,
}
\newglossaryentry{pz}{
    name        = \ensuremath{p_0(\gls{latent})} ,
    description = {latent prior distribution} ,
    type        = abbrev,
    parent      = vaes,
}
\newglossaryentry{px}{
    name        = \ensuremath{p_{\gls{decpar}}(\gls{obs})} ,
    description = {marginal likelihood } ,
    type        = abbrev,
    parent      = vaes,
}
\newglossaryentry{pdata}{
    name        = \ensuremath{p(\gls{obs})} ,
    description = {data distribution } ,
    type        = abbrev,
    parent      = vaes,
}
\newglossaryentry{mean_enc}{
    name        = \ensuremath{\mu_{\gls{latent}|\gls{obs}}} ,
    description = {mean encoder of the \acrshort{vae}, \ie, $\expectation{\gls{latent}\sim\gls{q}}\parenthesis{\gls{latent}}$, mapping $\gls{obs}\mapsto\gls{latent}$} ,
    type        = abbrev,
    parent      = vaes,
}
\newglossaryentry{var_cov}{
    name        = \ensuremath{\gls{cov}^{\gls{encpar}}_{\gls{latent}|\gls{obs}}} ,
    description = {covariance matrix of \gls{q}} ,
    type        = abbrev,
    parent      = vaes,
}
\newglossaryentry{sigmak}{
    name        = \ensuremath{{\sigma}_{k}^{\gls{encpar}}(\gls{obs})^{2}} ,
    description = {variance of \gls{q} in dimension $k$} ,
    type        = abbrev,
    parent      = vaes,
}
\newglossaryentry{sigmaopt}{
    name        = \ensuremath{\boldsymbol{\sigma}^{\gls{encparopt}}(\gls{obs})^{2}} ,
    description = {optimal variance of \gls{q}} ,
    type        = abbrev,
    parent      = vaes,
}
\newglossaryentry{sigmaoptk}{
    name        = \ensuremath{{\sigma}_{k}^{\gls{encparopt}}(\gls{obs})^{2}} ,
    description = {optimal variance of \gls{q} in dimension $k$} ,
    type        = abbrev,
    parent      = vaes,
}
\newglossaryentry{mu}{
    name        = \ensuremath{\boldsymbol{\mu}^{\gls{encpar}}(\gls{obs})} ,
    description = {mean of \gls{q}} ,
    type        = abbrev,
    parent      = vaes,
}
\newglossaryentry{muk}{
    name        = \ensuremath{{\mu}_{k}^{\gls{encpar}}(\gls{obs})} ,
    description = {mean of \gls{q} in dimension $k$} ,
    type        = abbrev,
    parent      = vaes,
}
\newglossaryentry{muopt}{
    name        = \ensuremath{\boldsymbol{\mu}^{\gls{encparopt}}(\gls{obs})} ,
    description = {optimal mean of \gls{q}} ,
    type        = abbrev,
    parent      = vaes,
}
\newglossaryentry{muoptk}{
    name        = \ensuremath{{\mu}_{k}^{\gls{encparopt}}(\gls{obs})} ,
    description = {optimal mean of \gls{q} in dimension $k$} ,
    type        = abbrev,
    parent      = vaes,
}
\newglossaryentry{gamma}{
    name        = \ensuremath{\gamma} ,
    description = {square root of the precision of the \gls{vae} decoder} ,
    type        = abbrev,
    parent      = vaes,
}
\newglossaryentry{betaloss}{
    name        = \ensuremath{\mathcal{L}_{\beta}} ,
    description = {\betavae loss function} ,
    type        = abbrev,
    parent      = vaes,
}
\newglossaryentry{pxz}{
    name        = \ensuremath{p_{\gls{decpar}}(\gls{obs}|\gls{latent})} ,
    description = {conditional distribution of the decoded samples of the \acrshort{vae}, mapping $\gls{latent}\mapsto\gls{obs}$, parametrized by \gls{decpar}} ,
    type        = abbrev,
    parent      = vaes,
}
\newglossaryentry{pzx}{
    name        = \ensuremath{p_{\gls{decpar}}(\gls{latent}|\gls{obs})} ,
    description = {true posterior distribution of the decoded samples of the \acrshort{vae}, mapping $\gls{obs}\mapsto\gls{latent}$, parametrized by \gls{decpar}} ,
    type        = abbrev,
    parent      = vaes,
}
\newglossaryentry{decpar}{
    name        = \ensuremath{\boldsymbol{\theta}} ,
    description = {parameters of the decoder \gls{pxz}} ,
    type        = abbrev,
    parent      = vaes,
}
\newglossaryentry{invdeccomp}{
    name        = \ensuremath{{g}^{\gls{decpar}}} ,
    description = {inverse decoder component} ,
    type        = abbrev,
    parent      = vaes,
}
\newglossaryentry{invdec}{
    name        = \ensuremath{\mathrm{\boldsymbol{g}}^{\gls{decpar}}} ,
    description = {inverse decoder} ,
    type        = abbrev,
    parent      = vaes,
}
\newglossaryentry{distortion}{
    name        = \ensuremath{D} ,
    description = {Distortion of \cite{alemi_fixing_2018}, the same as the reconstruction term of the \acrshort{elbo} for $\beta=1$} ,
    type        = abbrev,
    parent      = vaes,
}
\newglossaryentry{rate}{
    name        = \ensuremath{R} ,
    description = {Rate of \cite{alemi_fixing_2018}, the same as the \acrshort{kld} term of the \acrshort{elbo} for $\beta=1$} ,
    type        = abbrev,
    parent      = vaes,
}
\newglossaryentry{lindec}{
    name        = \ensuremath{\boldsymbol{\mathrm{W}}} ,
    description = {weight matrix of a linear decoder} ,
    type        = abbrev,
    parent      = vaes,
}
\newglossaryentry{linenc}{
    name        = \ensuremath{\boldsymbol{\mathrm{V}}} ,
    description = {weight matrix of a linear encoder} ,
    type        = abbrev,
    parent      = vaes,
}
\newglossaryentry{imas}{type=abbrev,name=\acrlong{ima},description={\nopostdesc}}
\newglossaryentry{mixing}{
    name        = \ensuremath{\inv{g}} ,
    description = {inverse of the learned unmixing of the \acrshort{ima}, mapping $\gls{latent}\mapsto\gls{obs}$ } ,
    type        = abbrev,
    parent      = imas,
}
\newglossaryentry{lin_mixing}{
    name        = \ensuremath{A} ,
    description = {ground-truth \emph{linear} mixing process of the \acrshort{ima}, mapping $\gls{latent}\mapsto\gls{obs}$ } ,
    type        = abbrev,
    parent      = imas,
}
\newglossaryentry{cima_local}{
    name        = \ensuremath{c_{\acrshort{ima}}} ,
    description = {local \acrshort{ima} contrast } ,
    type        = abbrev,
    parent      = imas,
}
\newglossaryentry{cima_global}{
    name        = \ensuremath{C_{\acrshort{ima}}} ,
    description = {global \acrshort{ima} contrast } ,
    type        = abbrev,
    parent      = imas,
}
\newglossaryentry{source}{
    name        = \ensuremath{s} ,
    description = {sources (\acrshort{ica} equivalent of latents)} ,
    type        = abbrev,
    parent      = imas,
}
\newglossaryentry{rec_s}{
    name        = \ensuremath{\boldsymbol{y}} ,
    description = {reconstructed sources} ,
    type        = abbrev,
    parent      = imas,
}
\newglossaryentry{p_source}{
    name        = \ensuremath{p_{\gls{latent}}} ,
    description = {source distribution} ,
    type        = abbrev,
    parent      = imas,
}
\newglossaryentry{imaloss}{
    name        = \ensuremath{\mathcal{L}_{\gls{ima}}} ,
    description = {\gls{ima} loss function} ,
    type        = abbrev,
    parent      = imas,
}
\NewDocumentCommand{\cima}{ O{\gls{dec}} O{\gls{latent}}  }{\ensuremath{\gls{cima_local} ( #1\!,  #2) }\xspace}
\NewDocumentCommand{\Cima}{ O{\gls{dec}} O{\ensuremath{p_0}  }}{\ensuremath{\gls{cima_global} ( #1,  #2) }\xspace}
\newglossaryentry{gps}{type=abbrev,name=\acrlongpl{gp},description={\nopostdesc}}
\newglossaryentry{gpr}{
    name        = \ensuremath{\mathcal{GP}} ,
    description = {Gaussian Process} ,
    type        = abbrev,
    parent      = gps,
}
\newglossaryentry{gpker}{
    name        = \ensuremath{k} ,
    description = {kernel function} ,
    type        = abbrev,
    parent      = gps,
}
\newglossaryentry{gpcov}{
    name        = \ensuremath{\mathcal{K}} ,
    description = {$\gls{numdata}\times\gls{numdata}$ covariance matrix of a \acrshort{gp}} ,
    type        = abbrev,
    parent      = gps,
}
  \def\gls#1{<#1>}%
  \def\glspl#1{<#1>}%
  \def\acrshort#1{<#1>}%
  \def\acrlong#1{<#1>}%
  \def\acrfull#1{<#1>}%
\definecolor{figblue}{HTML}{4A90E2}
\definecolor{figred}{HTML}{D0021B}
\definecolor{figpurple}{HTML}{7030A0}
\definecolor{figgreen}{HTML}{2CA02C}
\crefname{section}{\S}{\S\S}
\crefname{subsection}{\S}{\S\S}
\crefname{subsubsection}{\S}{\S\S}
\crefname{figure}{Fig.}{Figs.}
\crefname{prop}{Prop.}{Props.}
\crefname{appendix}{Appx.}{Appxs.}
\crefname{algorithm}{Alg.}{Algs.}
\crefname{theorem}{Thm.}{Thms.}
\crefname{conj}{Conj.}{Conjs.}
\crefname{definition}{Defn.}{Defns.}
\crefname{cor}{Cor.}{Cors.}
\crefname{lem}{Lem.}{Lems.}
\crefname{table}{Tab.}{Tabs.}
\crefname{assum}{Assum.}{Assums.}
\crefname{example}{Ex.}{Exs.}
\newtheorem{assum}{Assumption}
\newtheorem{prop}{Proposition}
\newtheorem{lem}{Lemma}
\newtheorem{remark}{Remark}
\newtheorem{definition}{Definition}
\newtheorem{example}{Example}
\newtheorem{theorem}{Theorem}
\newtheorem{cor}{Corollary}
\setlist[itemize]{noitemsep}
\let\ORGhypersetup\hypersetup
\protected\def\hypersetup{\ORGhypersetup}
  \def\hypersetup#1{}%
  \let\Cref\crtCref
  \let\cref\crtcref
  \def\gls#1{<#1>}%
  \def\glspl#1{<#1>}%
  \def\acrshort#1{<#1>}%
  \def\acrlong#1{<#1>}%
  \def\acrfull#1{<#1>}%
\setlist[enumerate,itemize]{noitemsep, nolistsep}
\newacronym{mpa}{MPA}{Measure Preserving Automorphism}
\newacronym{iid}{i.i.d.}{independent and identically distributed}
\newacronym{vmf}{vMF}{von Mises-Fisher}
\newacronym{nivmf}{nivMF}{non-isotropic von Mises-Fisher}
\newacronym{pd}{PD}{positive definite}
\newacronym{psd}{PSD}{positive semi-definite}
\newacronym{nd}{ND}{negative definite}
\newacronym{nsd}{NSD}{negative semi-definite}
\newacronym{ode}{ODE}{Ordinary Differential Equation}
\newacronym{pde}{PDE}{Partial Differential Equation}
\newacronym{lhs}{LHS}{left hand side}
\newacronym{rhs}{RHS}{right hand side}
\newacronym{rv}{RV}{random variable}
\newacronym{ae}{AE}{AutoEncoder}
\newacronym{lae}{LAE}{Linear Autoencoder}
\newacronym{vae}{VAE}{Variational Autoencoder}
\newacronym{cvvae}{CV-VAE}{Constant-Variance Variational Autoencoder}
\newacronym{ivae}{iVAE}{Identifiable Variational Autoencoder}
\newacronym{rae}{RAE}{Regularized Autoencoder}
\newacronym{grae}{GRAE}{Gaussian Regularized Autoencoder}
\newacronym{lvm}{LVM}{latent variable model}
\newacronym[longplural=Gaussian Processes]{gp}{GP}{Gaussian Process}
\newacronym{gplvm}{GPLVM}{Gaussian Process Latent Variable Model}
\newacronym{rbf}{RBF}{Radial Basis Function}
\newcommand{\betavae}{$\beta$-\gls{vae}\xspace}
\newacronym{kld}{KL}{Kullback-Leibler Divergence}
\newacronym{elbo}{{\text{\upshape ELBO}}}{evidence lower bound}
\newacronym{pca}{PCA}{Principal Component Analysis}
\newacronym{ppca}{PPCA}{Probabilistic Principal Component Analysis}
\newacronym{ebm}{EBM}{Energy-Based Model}
\newacronym{cca}{CCA}{Canonical Correlation Analysis}
\newacronym{mi}{MI}{Mutual Information}
\newacronym[longplural=Identifiable Exchangeable Mechanisms]{iem}{IEM}{Identifiable Exchangeable Mechanisms}
\newacronym[longplural=Independent Causal Mechanisms]{icm}{ICM}{Independent Causal Mechanisms}
\newacronym{sms}{SMS}{Sparse Mechanism Shift}
\newacronym{mss}{MSS}{Mechanism Shift Score}
\newacronym{sem}{SEM}{Structural Equation Model}
\newacronym{lingam}{LiNGAM}{Linear Non-Gaussian Acyclic Model}
\newacronym{dag}{DAG}{Directed Acyclic Graph}
\newacronym{anm}{ANM}{Additive Noise Model}
\newacronym{cd}{CD}{Causal Discovery}
\newacronym{crl}{CRL}{Causal Representation Learning}
\newacronym{hmm}{HMM}{Hidden Markov Model}
\newacronym{plr}{PLR}{Partially Linear Regression}
\newacronym{ica}{ICA}{Independent Component Analysis}
\newacronym{nlica}{NLICA}{nonlinear Independent Component Analysis}
\newacronym{bss}{BSS}{Blind Source Separation}
\newacronym{ima}{{\text{\upshape IMA}}}{Independent Mechanism Analysis}
\newacronym{igci}{IGCI}{Information Geometric Causal Inference}
\newacronym{cdf}{CdF}{Causal de Finetti}
\newacronym{nce}{NCE}{Noise Contrastive Estimation}
\newacronym{pcl}{PCL}{Permutation-Contrastive Learning}
\newacronym{tcl}{TCL}{Time-Contrastive Learning}
\newacronym{gencl}{GCL}{Generalized Contrastive Learning}
\newacronym{iia}{IIA}{Independent Innovation Analysis}
\newacronym{ar}{AR}{autoregressive}
\newacronym{var}{VAR}{Vector autoregressive}
\newacronym{nvar}{NVAR}{Nonlinear Vector AutoRegressive}
\newacronym{ai}{AI}{Artificial Intelligence}
\newacronym{ml}{ML}{Machine Learning}
\newacronym{dml}{DML}{Double Machine Learning}
\newacronym{oml}{OML}{Orthogonal Machine Learning}
\newacronym{dl}{DL}{Deep Learning}
\newacronym{rl}{RL}{Reinforcement Learning}
\newacronym{mbrl}{MBRL}{Model-Based Reinforcement Learning}
\newacronym{rlhf}{RLHF}{Reinforcement Learning from Human Feedback}
\newacronym{ssl}{SSL}{Self-Supervised Learning}
\newacronym{cl}{CL}{Contrastive Learning}
\newacronym{dcl}{DCL}{Debiased Contrastive Learning}
\newacronym{scl}{SCL}{Spectral Contrastive Learning}
\newacronym{gcl}{GCL}{Graph Contrastive Learning}
\newacronym{alphacl}{$\alpha$-CL}{$\alpha$-Contrastive Learning}
\newacronym{arcl}{ArCL}{Augmentation-robust Contrastive Learning}
\newacronym{fce}{FCE}{Flow Contrastive Estimation}
\newacronym{vince}{VINCE}{Variational InfoNCE}
\newacronym{rince}{RINCE}{Robust InfoNCE}
\newacronym{aggnce}{AggNCE}{Aggregated InfoNCE}
\newacronym{mcinfonce}{MCInfoNCE}{Monte-Carlo InfoNCE}
\newacronym{gmc}{GMC}{Geometric Multimodal Contrastive Learning}
\newacronym{looc}{LooC}{Leave-one-out Contrastive Learning}
\newacronym{npc}{NPC}{Negative-Positive Coupling}
\newacronym{cpc}{CPC}{Contrastive Predictive Coding}
\newacronym{nlp}{NLP}{Natural Language Processing}
\newacronym{gdl}{GDL}{Geometric Deep Learning}
\newacronym{msn}{MSN}{Masked Siamese Networks}
\newacronym{ifm}{IFM}{Implicit Feature Modification}
\newacronym{dnn}{DNN}{Deep Neural Network}
\newacronym{nn}{NN}{Neural Network}
\newacronym{ann}{ANN}{Artificial Neural Network}
\newacronym{fm}{FM}{Foundation Model}
\newacronym{llm}{LLM}{Large Language Model}
\newacronym{pcfg}{PCFG}{Probabilistic Context-Free Grammar}
\newacronym{icl}{ICL}{in-context learning}
\newacronym{nc}{NC}{Neural Collapse}
\newacronym{cdt}{CDT}{Class-Dependent Temperature}
\newacronym{mlp}{MLP}{Multi-Layer Perceptron}
\newacronym{fc}{FC}{Fully Connected}
\newacronym{strnn}{StrNN}{Structured Neural Network}
\newacronym{cn}{conv}{Convolutional layer}
\newacronym{cnn}{CNN}{Convolutional Neural Network}
\newacronym{gnn}{GNN}{Graph Neural Network}
\newacronym{ssm}{SSM}{State Space Model}
\newacronym{rnn}{RNN}{Recurrent Neural Network}
\newacronym{lstm}{LSTM}{Long Short-Term Memory}
\newacronym{gru}{GRU}{Gated Recurrent Unit}
\newacronym{relu}{ReLU}{Rectified Linear Unit}
\newacronym{bn}{BN}{Batch Normalization}
\newacronym{dbn}{DBN}{Decorrelated Batch Normalization}
\newacronym{gan}{GAN}{Generative Adversarial Network}
\newacronym{diayn}{DIAYN}{Diversity Is All You Need}
\newacronym{dads}{DADS}{DYnamics-Aware Discovery of Skills}
\newacronym{sac}{SAC}{Soft Actor Critic}
\newacronym{a2c}{A2C}{Advantage Actor Critic}
\newacronym{sgd}{SGD}{Stochastic Gradient Descent}
\newacronym{adam}{ADAM}{Adaptive Moment Estimation}
\newacronym{svd}{SVD}{Singular Value Decomposition}
\newacronym{wls}{WLS}{Weighted Least Squares}
\newacronym{sam}{SAM}{Sharpness-Aware Minimization}
\newacronym{samba}{SAMBA}{SAM-Based Autoencoder}
\newacronym{vi}{VI}{Variational Inference}
\newacronym{mfvi}{MFVI}{Mean Field Variational Inference}
\newacronym[longplural=data generating processes]{dgp}{DGP}{data generating process}
\newacronym{map}{MAP}{Maximum A Posteriori}
\newacronym{mle}{MLE}{maximum likelihood estimation}
\newacronym{etf}{ETF}{Equiangular Tight Frame}
\newacronym{mse}{MSE}{Mean Squared Error}
\newacronym{mae}{MAE}{Mean Absolute Error}
\newacronym{ce}{{\text{\upshape CE}}}{cross entropy}
\newacronym{sid}{SID}{Structural Intervention Distance}
\newacronym{shd}{SHD}{Structural Hamming Distance}
\newacronym{mcc}{MCC}{Mean Correlation Coefficient}
\newacronym{mig}{MIG}{Mutual Information Gap}
\newacronym{dci}{DCI}{Disentanglement Completeness Informativeness score}
\newacronym{arc}{ARC}{Average Relative Confusion}
\newacronym{acr}{ACR}{Average Confusion Ratio}
\newacronym{api}{API}{Application Programming Interface}
\newacronym{cpu}{CPU}{Central Processing Unit}
\newacronym{gpu}{GPU}{Graphics Processing Unit}
\newacronym{lti}{LTI}{Linear Time-Invariant}
\newacronym{zoh}{ZOH}{Zero-Order Hold}
\newacronym{gt}{{\text{\upshape GT}}}{ground truth}
\newacronym{ood}{OOD}{out-of-distribution}
\newacronym{oov}{OOV}{out-of-variable}
\newacronym{fsm}{FSM}{Finite State Machine}
\newacronym{rasp}{RASP}{Restricted-Access Sequence Processing Language}
\newacronym{ntk}{NTK}{Neural Tangent Kernel}
\newacronym{as}{a.s.}{almost surely}
\newacronym{alev}{a.e.}{almost everywhere}
\newacronym{sos}{SOS}{start-of-sequence}
\newacronym{eos}{EOS}{end-of-sequence}
\newcommand{\siyuan}[1]{\textcolor{olive}{[\textbf{Siyuan:} #1]}}
\crefname{remark}{Rem.}{Rems.}
\title{Identifiable Exchangeable Mechanisms for Causal Structure and Representation Learning}
\author[1,3]{Patrik~Reizinger\thanks{Equal contribution. Correspondence to \href{mailto:patrik.reizinger@tuebingen.mpg.de}{\texttt{patrik.reizinger@tuebingen.mpg.de}}}$\ $  }
\author[1,2]{Siyuan Guo$^*$}
\author[2]{Ferenc~Huszár}
\author[1,3]{Bernhard~Schölkopf \thanks{Equal supervision}$\ \ $}
\author[1,3,4]{Wieland~Brendel $^\dagger$}
\affil[1]{%
    Max Planck Institute for Intelligent Systems, Tübingen, Germany
}
\affil[2]{%
    University of Cambridge, Cambridge, United Kingdom 
  }
  \affil[3]{%
ELLIS Institute Tübingen, Tübingen, Germany
}
\affil[4]{%
Tübingen AI Center, Tübingen, Germany
}
\begin{document}

\maketitle

\vspace{-2.4em}
\begin{abstract}
    \vspace{-0.5em}

Identifying latent representations or causal structures is important for good generalization and downstream task performance. However, both fields developed rather independently.
We observe that several structure and representation identifiability methods, particularly those that require multiple environments, rely on 
exchangeable non--i.i.d. (independent and identically distributed) data.
To formalize this connection, 
we propose the Identifiable Exchangeable Mechanisms (IEM) framework to unify key representation and causal structure learning methods.
IEM provides a unified probabilistic graphical model encompassing causal discovery, Independent Component Analysis, and Causal Representation Learning.
With the help of the IEM model, we generalize the Causal de Finetti theorem of \citet{guo2024causal} by 
relaxing the necessary conditions for causal structure identification in exchangeable data.
We term these conditions cause and mechanism variability, and show how they imply a duality condition in identifiable representation learning, leading to new identifiability results.

\end{abstract}
\vspace{-0.4em}

\vspace{-0.8em}
\section{Introduction}
\label{sec:intro}
\vspace{-0.5em}
Provably identifying latent representations and causal structures has been a central problem in machine learning, as such guarantees promise good generalization and downstream task performance~\citep{richens_robust_2024,perry_causal_2022,zimmermann_contrastive_2021,arjovsky_invariant_2020,arjovsky_out_2021,brady_provably_2023,wiedemer_compositional_2023,wiedemer_provable_2023,lachapelle_additive_2023,rusak_infonce_2024}.
Causal structure identification, also known as \gls*{cd}, aims to infer cause-effect relationships,
whereas 
identifiable representation learning aims to infer ground-truth sources. Due to their different learning objectives, such problems
have been treated separately.  %

Recent works on \gls*{crl}~\citep{scholkopf_towards_2021} propose to learn latent representations with causal structures that allow efficient generalization in downstream tasks. Yet despite progress~\citep{zecevic_relating_2021,reizinger_jacobian-based_2023,xi_indeterminacy_2023}, our understanding is still limited regarding the question of
\vspace{-0.2em}
\begin{center}
    \textit{what enables structure and representation identifiability?}
\end{center}
\vspace{-0.2em}
\cite{guo2024causal} formalize causality for exchangeable \glspl*{dgp}, showing that unique structure identification is feasible under exchangeable non--i.i.d.\ data,  assuming \glspl*{icm} \citep{scholkopf_causal_2012}. Such unique structure identification was classically deemed impossible \citep{pearl_causal_2009}. The present work makes the observation that exchangeable non--i.i.d.\ data is the driving force in identification for both representation and structure identification. 
We introduce a unified framework for \gls*{cd}, \gls*{ica}, and CRL (\cref{fig:unified}) and show that relaxed exchangeability conditions, termed cause and mechanism variability (\cref{fig:extendcdf}), are sufficient for both representation and structure identifiability. Our contributions are: \looseness = -1
\begin{itemize}[leftmargin=*]
    \item \textbf{Unifying structure and representation learning under the lens of exchangeability (\cref{sec:exchg}, also \cf \cref{fig:unified})}: We develop a probabilistic model, \gls*{iem}, that subsumes key methods in \gls*{cd}, \gls*{ica}, and \gls*{crl}. 
    \item \textbf{Relaxing causal discovery assumptions in exchangeable non--i.i.d.\ data (\cref{subsec:exchg_cd})}:
    we show how exchangeable non--i.i.d.\ cause or effect-given-cause mechanisms, termed \textit{cause and mechanism variability}, provide sufficient and necessary conditions for bivariate \gls*{cd}, generalizing the identification theorem in ~\citep{guo2024causal}.
    \item \textbf{Providing dual identifiability results in \gls*{tcl} (\cref{subsec:exchg_ica})}: we show how an auxiliary-variable \gls*{ica} method, \gls*{tcl}, is a special case of cause variability---we discuss \gls*{gencl} in \cref{subsec:app_exchg_gcl}. Using insights from the duality in cause and mechanism variability, we prove the identifiability of TCL under mechanism variability. 
\end{itemize}

        \begin{figure}
            \begin{subfigure}[b]{0.24\textwidth}
    		\centering
                \begin{tikzpicture}[node distance=0.4cm, font=\small, scale=.15]

    \node[draw, circle, scale=0.4] (z12) {
            \tikz[baseline,anchor=base,node distance=1cm]{
                \node[draw,circle, outer sep=0pt,inner sep=.2ex] (z111){$Z_1^1$};
                \node[draw,circle,right of=z111, outer sep=0pt,inner sep=.2ex] (z112) {$Z_2^1$};
                \node[draw,circle,above of=z111,outer sep=0pt,inner sep=.2ex] (s111){$S_1^1$};
                \node[draw,circle,above of=z112,outer sep=0pt,inner sep=.2ex] (s112){$S_2^1$};
                \draw[-{Latex[length=2mm]}] (s111) -- (z111);
                \draw[-{Latex[length=2mm]}] (s112) -- (z112);
                \draw[-{Latex[length=2mm]}] (z111) -- (z112);
            }
    };
    \draw[dotted] (z12.north) -- (z12.south);
    \node[draw, circle, right=of z12] (x12) {$O^1$};

    \node[draw, above = of z12, scale=0.45] (theta) {
        {
            \tiny $\scriptscriptstyle {\tikz[baseline,anchor=base]{\node (theta1) {\large{$\theta_1$}};}}
            {\quad \quad }
            {\tikz[baseline,anchor=base]{\node (theta2) {\large{$\theta_2$}};}}$
        }
    };
    \draw[-{Latex[length=2mm]}] (theta.220) -- (z12.112);
    \draw[-{Latex[length=2mm]}] (theta.315) -- (z12.70);

    \node[draw, above= of x12] (phi) {$\psi$};

    \node[draw, circle, above = of theta, scale=0.4] (z22) {
            \tikz[baseline,anchor=base,node distance=1cm]{
                \node[draw,circle, outer sep=0pt,inner sep=.2ex] (z211){$Z_1^2$};
                \node[draw,circle,right of=z211, outer sep=0pt,inner sep=.2ex] (z212) {$Z_2^2$};
                \node[draw,circle,above of=z211,outer sep=0pt,inner sep=.2ex] (s211){$S_1^2$};
                \node[draw,circle,above of=z212,outer sep=0pt,inner sep=.2ex] (s212){$S_2^2$};
                \draw[-{Latex[length=2mm]}] (s211) -- (z211);
                \draw[-{Latex[length=2mm]}] (s212) -- (z212);
                \draw[-{Latex[length=2mm]}] (z211) -- (z212);
            }
    };
    \draw[dotted] (z22.north) -- (z22.south);

    \draw[-{Latex[length=2mm]}] (theta.140) -- (z22.248);
    \draw[-{Latex[length=2mm]}] (theta.45) -- (z22.290);
    
    \node[draw, circle, right=of z22] (x22) {$O^2$};

    \draw[-{Latex[length=2mm]}] (z12) -- (x12);
    \draw[-{Latex[length=2mm]}] (z22) -- (x22);

    \draw[-{Latex[length=2mm]}] (phi) -- (x12);
    \draw[-{Latex[length=2mm]}] (phi) -- (x22);

\end{tikzpicture}
                \caption{\acrshort*{iem}}
                \label{fig:unified_unified}
    	\end{subfigure}
            \vrule
            \begin{subfigure}[b]{0.24\textwidth}
    		\centering
                \begin{tikzpicture}[node distance=0.4cm, font=\small, scale=.15]

    \node[draw, circle, scale=0.4] (z12) {
            \tikz[baseline,anchor=base,node distance=1cm]{
                \node[draw,circle, outer sep=0pt,inner sep=.2ex,fill=figred!50!white] (z111){$Z_1^1$};
                \node[draw,circle,right of=z111, outer sep=0pt,inner sep=.2ex,fill=figred!50!white] (z112) {$Z_2^1$};
                \node[draw,circle,above of=z111,outer sep=0pt,inner sep=.2ex,fill=gray,opacity=0.3] (s111){$S_1^1$};
                \node[draw,circle,above of=z112,outer sep=0pt,inner sep=.2ex,fill=gray,opacity=0.3] (s112){$S_2^1$};
                \draw[-{Latex[length=2mm]},color=gray,opacity=0.3] (s111) -- (z111);
                \draw[-{Latex[length=2mm]},color=gray,opacity=0.3] (s112) -- (z112);
                \draw[-{Latex[length=2mm]},color=figblue!100!white] (z111) -- (z112);
            }
    };
    \draw[dotted] (z12.north) -- (z12.south);

    \node[draw, above = of z12, scale=0.45] (theta) {
        {
            \tiny $\scriptscriptstyle {\tikz[baseline,anchor=base]{\node (theta1) {\large{$\theta_1$}};}}
            {\quad \quad }
            {\tikz[baseline,anchor=base]{\node (theta2) {\large{$\theta_2$}};}}$
        }
    };
    \draw[-{Latex[length=2mm]}] (theta.220) -- (z12.112);
    \draw[-{Latex[length=2mm]}] (theta.315) -- (z12.70);

    \node[draw, circle, above = of theta, scale=0.4] (z22) {
            \tikz[baseline,anchor=base,node distance=1cm]{
                \node[draw,circle, outer sep=0pt,inner sep=.2ex,fill=figred!50!white] (z211){$Z_1^2$};
                \node[draw,circle,right of=z211, outer sep=0pt,inner sep=.2ex,fill=figred!50!white] (z212) {$Z_2^2$};
                \node[draw,circle,above of=z211,outer sep=0pt,inner sep=.2ex,fill=gray,opacity=0.3] (s211){$S_1^2$};
                \node[draw,circle,above of=z212,outer sep=0pt,inner sep=.2ex,fill=gray,opacity=0.3] (s212){$S_2^2$};
                \draw[-{Latex[length=2mm]},color=gray,opacity=0.3] (s211) -- (z211);
                \draw[-{Latex[length=2mm]},color=gray,opacity=0.3] (s212) -- (z212);
                \draw[-{Latex[length=2mm]},color=figblue!100!white] (z211) -- (z212);
            }
    };
    \draw[dotted] (z22.north) -- (z22.south);

    \draw[-{Latex[length=2mm]}] (theta.140) -- (z22.248);
    \draw[-{Latex[length=2mm]}] (theta.45) -- (z22.290);

\end{tikzpicture}
                \caption{CD}
                \label{fig:unified_cd}
    	\end{subfigure}
            \vrule
    	\begin{subfigure}[b]{0.24\textwidth}
    		\centering
                \begin{tikzpicture}[node distance=0.5cm, font=\small, scale=.5]

    \node[draw, circle, scale=0.4] (z12) {
            \tikz[baseline,anchor=base,node distance=1cm]{
                \node[draw,circle,outer sep=0pt,inner sep=.2ex,fill=figblue!50!white] (s111){$S_1^1$};
                \node[draw,circle,right of=s111,outer sep=0pt,inner sep=.2ex,fill=figblue!50!white] (s112){$S_2^1$};
            }
    };
    \draw[dotted] (z12.north) -- (z12.south);
    \node[draw, circle, right=of z12,fill=figred!50!white] (x12) {$O^1$};

    \node[draw, above = of z12, scale=0.62] (theta) {$\theta_1 \quad \theta_2$};
    \draw[-{Latex[length=2mm]}] (theta.220) -- (z12.120);
    \draw[-{Latex[length=2mm]}] (theta.315) -- (z12.64);

    \node[draw, right= of theta, above= of x12,shift={(0, -0.1)}] (phi) {$\psi$};

    \node[draw, circle, above = of theta, scale=0.4] (z22) {
            \tikz[baseline,anchor=base,node distance=1cm]{
                \node[draw,circle,outer sep=0pt,inner sep=.2ex,fill=figblue!50!white] (s211){$S_1^2$};
                \node[draw,circle,right of=s211,outer sep=0pt,inner sep=.2ex,fill=figblue!50!white] (s212){$S_2^2$};
            }
    };
    \draw[dotted] (z22.north) -- (z22.south);

    \draw[-{Latex[length=2mm]}] (theta.140) -- (z22.240);
    \draw[-{Latex[length=2mm]}] (theta.45) -- (z22.296);
    
    \node[draw, circle, right=of z22,fill=figred!50!white] (x22) {$O^2$};

    \draw[-{Latex[length=2mm]}] (z12) -- (x12);
    \draw[-{Latex[length=2mm]}] (z22) -- (x22);

    \draw[-{Latex[length=2mm]}] (phi) -- (x12);
    \draw[-{Latex[length=2mm]}] (phi) -- (x22);

\end{tikzpicture}
                \caption{ICA}
                \label{fig:unified_ica}
    	\end{subfigure}
            \vrule
            \begin{subfigure}[b]{0.24\textwidth}
    		\centering
                \begin{tikzpicture}[node distance=0.4cm, font=\small, scale=.15]

    \node[draw, circle, scale=0.4] (z12) {
            \tikz[baseline,anchor=base,node distance=1cm]{
                \node[draw,circle, outer sep=0pt,inner sep=.2ex,fill=figblue!50!white] (z111){$Z_1^1$};
                \node[draw,circle,right of=z111, outer sep=0pt,inner sep=.2ex,fill=figblue!50!white] (z112) {$Z_2^1$};
                \node[draw,circle,above of=z111,outer sep=0pt,inner sep=.2ex,fill=gray,opacity=0.3] (s111){$S_1^1$};
                \node[draw,circle,above of=z112,outer sep=0pt,inner sep=.2ex,fill=gray,opacity=0.3] (s112){$S_2^1$};
                \draw[-{Latex[length=2mm]},color=gray,opacity=0.3] (s111) -- (z111);
                \draw[-{Latex[length=2mm]},color=gray,opacity=0.3] (s112) -- (z112);
                \draw[-{Latex[length=2mm]},color=figblue!100!white] (z111) -- (z112);
            }
    };
    \draw[dotted] (z12.north) -- (z12.south);
    \node[draw, circle, right=of z12,fill=figred!50!white] (x12) {$O^1$};

    \node[draw, above = of z12, scale=0.45] (theta) {
        {
            \tiny $\scriptscriptstyle {\tikz[baseline,anchor=base]{\node (theta1) {\large{$\theta_1$}};}}
            {\quad \quad }
            {\tikz[baseline,anchor=base]{\node (theta2) {\large{$\theta_2$}};}}$
        }
    };
    \draw[-{Latex[length=2mm]}] (theta.220) -- (z12.112);
    \draw[-{Latex[length=2mm]}] (theta.315) -- (z12.70);

    \node[draw, above= of x12,] (phi) {$\psi$};

    \node[draw, circle, above = of theta, scale=0.4] (z22) {
            \tikz[baseline,anchor=base,node distance=1cm]{
                \node[draw,circle, outer sep=0pt,inner sep=.2ex,fill=figblue!50!white] (z211){$Z_1^2$};
                \node[draw,circle,right of=z211, outer sep=0pt,inner sep=.2ex,fill=figblue!50!white] (z212) {$Z_2^2$};
                \node[draw,circle,above of=z211,outer sep=0pt,inner sep=.2ex,fill=gray,opacity=0.3] (s211){$S_1^2$};
                \node[draw,circle,above of=z212,outer sep=0pt,inner sep=.2ex,fill=gray,opacity=0.3] (s212){$S_2^2$};
                \draw[-{Latex[length=2mm]},color=gray,opacity=0.3] (s211) -- (z211);
                \draw[-{Latex[length=2mm]},color=gray,opacity=0.3] (s212) -- (z212);
                \draw[-{Latex[length=2mm]},color=figblue!100!white] (z211) -- (z212);
            }
    };
    \draw[dotted] (z22.north) -- (z22.south);

    \draw[-{Latex[length=2mm]}] (theta.140) -- (z22.248);
    \draw[-{Latex[length=2mm]}] (theta.45) -- (z22.290);
    
    \node[draw, circle, right=of z22,fill=figred!50!white] (x22) {$O^2$};

    \draw[-{Latex[length=2mm]}] (z12) -- (x12);
    \draw[-{Latex[length=2mm]}] (z22) -- (x22);

    \draw[-{Latex[length=2mm]}] (phi) -- (x12);
    \draw[-{Latex[length=2mm]}] (phi) -- (x22);

\end{tikzpicture}
                \caption{CRL}
                \label{fig:unified_crl}
    	\end{subfigure}
            \caption{\textbf{\acrfull*{iem}--A unified model for structure and representation identifiability%
            :} 
            Here we show that exchangeable but non-i.i.d.\ data enables identification in key methods across \acrfull*{cd}, \acrfull*{ica}, and \acrfull*{crl}. 
            \cref{fig:unified_unified} shows the graphical model for \gls*{iem}, which subsumes \acrfull*{cd} (\cref{subsec:exchg_cd}), \acrfull*{ica} (\cref{subsec:exchg_ica}), and \acrfull*{crl} (\cref{subsec:exchg_crl}). $S$ denotes latent, $Z$ causal, and $O$ observed variables with corresponding latent parameters $\boldsymbol{\theta}, \psi$, superscripts denote different samples.
            {\color{figred!70!white}Red} denotes observed/known quantities, {\color{figblue!70!white}blue} stands for target quantities, and {\color{gray!70!white} gray} illustrates components that are \textit{not} explicitly modeled in a particular paradigm. $\theta_i$ are latent variables controlling separate probabilistic mechanisms, indicated by dotted vertical lines. \textbf{{\gls*{cd}}} (\cref{fig:unified_cd}) corresponds to the left-most layer of \gls*{iem}, focusing on the study of cause-effect relationships between observed causal variables; \textbf{{\gls*{ica}}} (\cref{fig:unified_ica}) infers source variables from observations, but without causal connections in the left-most layer of \gls*{iem}; \textbf{{\gls*{crl}}} (\cref{fig:unified_crl}) shares the most similar structure with \gls*{iem}, as it has both layers, including the intermediate causal representations. See \cref{fig:unified_large} for an enlarged view
            }
            \label{fig:unified}

        \end{figure}

\vspace{-0.4em}
\section{Preliminaries}
\label{sec:preliminaries}
\vspace{-0.4em}
The impossibility of bivariate CD~\citep{pearl_causality_2009} and representation identifiability~\citep{hyvarinen_nonlinear_1999,locatello_challenging_2019} from i.i.d.\ data is well known (\cf \cref{app:sec_iid_fails} for examples). 
Thus, we focus on non--\acrshort*{iid}, particularly, exchangeable data (\cref{def:exch_sequence}) and discuss a causal framework from \citep{guo2024causal} building on exchangeability.
An example of exchangeable non--\acrshort*{iid} data is when training samples come from different distributions, \eg, Gaussians with different means and/or variances, where the different means and/or variances are modeled as (causal) de Finetti parameters.
\vspace{-0.2em}
\paragraph{Notation.} Capital letters denote \glspl*{rv}, lowercase letters their realizations, and bold letters sets/vectors of \glspl*{rv}. $\mathbf{S}$ are the latent sources in representation learning or, equivalently, the set of exogenous variables in a \gls*{sem}; $\mathbf{Z}$ are causal variables, and $\mathbf{O}$ are observations in (causal) representation learning. 
Data generated by a \gls*{dgp} is a sequence of \glspl*{rv} $\mathbf{X}^1, \mathbf{X}^2, \ldots$ where superscripts index samples and subscripts the vector components (\glspl*{rv}), \ie, $X_i^1$ specifies the \ith random variable in $\mathbf{X}^1$. $f$ is the mixing function between latents to observations, i.e., $f:\mathbf{s} \to \mathbf{o}$ for representation learning, and $f: \mathbf{z} \to \mathbf{o}$ for \gls*{crl}. Structural assignments from exogenous to causal variables are denoted as $Z := g(\textbf{Pa}(Z))$, where $\textbf{Pa}(Z)$ are the parents or causes of $Z$ and $\textbf{Pa}(Z)$ includes the corresponding exogenous variable $S$. Whenever the \gls*{rv} sequence contains a single variable or bivariate pairs per position, we use $X^n$ or $(X^n, Y^n)$.
Uppercase $P$ is a probability distribution, and lowercase $p$ is a probability density function. $\delta_{\theta_0}(\theta)$ is a shorthand for the delta-distribution $\delta(\theta=\theta_0)$.

  \paragraph{\acrfull*{cdf} and Exchangeability.}  
    \begin{definition}[Exchangeable sequence]\label{def:exch_sequence}
        An infinite sequence of random variables $X^1, X^2, \ldots$ is exchangeable if for any finite permutation $\pi$ on the position indices, the joint distribution satisfies:
        \vspace{-.6em}
        \begin{align}
            P(X^1, \ldots, X^n) &= P(X^{\pi(1)}, \ldots, X^{\pi(n)}).
        \end{align}
        \vspace{-1em}
    \end{definition}
    \vspace{-0.6em}
    An important result to characterize any exchangeable sequence is the theorem of \citet{definetti}. It states that for any exchangeable sequence, there exists a latent variable $\theta$ such that the sequence's joint distribution can be represented as a mixture of {\em conditionally} i.i.d.\ distributions:
    \vspace{-.6em}
    \begin{align}
        P(x^1, \ldots, x^n) = \int \prod_{i=1}^n p(x^i | \theta)p(\theta)d\theta .\label{eq:mixture_iid}
    \end{align}
    Any i.i.d.\ sequence is exchangeable since $p(x^1, \ldots, x^n) = \prod_{i=1}^n p(x^i)$ and the joint distribution remains identical when changing the order of observations. Alternatively, the right-hand side of Eq. \eqref{eq:mixture_iid} collapses to an i.i.d. sequence whenever $p(\theta) = \delta(\theta = \theta_0)$ for some constant $\theta_0$. Though i.i.d.\ is a special case of exchangeable sequences, not all exchangeable sequences are i.i.d.\ Examples include, but are not limited to: the Pólya urn model \citep{hoppe_polya-like_1984}, Chinese restaurant processes \citep{aldous1985exchangeability}, or Dirichlet processes \citep{ferguson1973bayesian}.

    \paragraph{Causality.} Causality infers the ground-truth causal structure from the observed joint distribution $P$ to enable efficient generalization in novel scenarios. It studies interventional and counterfactual queries beyond purely associational relationships in observational data. The {\gls*{icm}} principle \citep{scholkopf_towards_2021} hypothesizes that distinct causal mechanisms neither inform nor influence each other. 
    \cite{guo2024causal} proves that for exchangeable sequences, the \gls{icm} principle implies the existence of statistically independent latent variables governing each causal mechanism. Thus, establishing a mathematical framework to study causality in exchangeable data. We state their bivariate result.

    \begin{theorem}[Causal de Finetti \citep{guo2024causal}] \label{thm:cdf_bivariate}
        Let $\{(X^n, Y^n)\}_{n \in \mathbb{N}}$ be an infinite sequence of binary random variable pairs and denote the set $\braces{1,2,\dots, n}$ as $[n]$. The sequence is infinitely exchangeable, and satisfies $Y^{[n]} \perp X^{n+1} \mid X^{[n]}$ for all $n \in \mathbb{N}$ if and only if there exists random variables $\theta \in [0 ,1]$ and $\psi \in [0, 1]^2$ 
        such that the joint probability can be represented as 
        \vspace{-.6em}
        \begin{equation}
        \label{causaldf}
            P(x^1, y^1, \ldots, x^n, y^n) = \int \prod_{i=1}^n p(y^i \mid x^i, \psi) p(x^i \mid \theta) p(\theta) p(\psi) d\theta d\psi
        \end{equation}
        \vspace{-.6em}
\end{theorem}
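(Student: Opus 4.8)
The plan is to prove the two implications separately, with de Finetti's theorem (Eq.~\eqref{eq:mixture_iid}, applied to the sequence of pairs) as the backbone.

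\emph{The ``if'' direction} is a direct computation from the integral representation. Exchangeability is immediate, since the integrand $\prod_{i=1}^n p(y^i\mid x^i,\psi)\,p(x^i\mid\theta)$ is a product over the sample index $i$ and is therefore invariant under any finite permutation of the positions, hence so is $P(x^1,y^1,\ldots,x^n,y^n)$. For $Y^{[n]}\perp X^{n+1}\mid X^{[n]}$, I would exploit that the likelihood factorizes as $p(x,y\mid\theta,\psi)=p(y\mid x,\psi)\,p(x\mid\theta)$ and the prior as $p(\theta)p(\psi)$: the $\theta$- and $\psi$-integrals then separate, giving $p(x^{[n]},y^{[n]},x^{n+1})=A(x^{[n]},y^{[n]})\,B(x^{[n]},x^{n+1})$ with $A=\int p(y^{[n]}\mid x^{[n]},\psi)p(\psi)\,d\psi$ and $B=\int p(x^{[n]}\mid\theta)p(x^{n+1}\mid\theta)p(\theta)\,d\theta$. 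Since $p(x^{[n]},y^{[n]})=A\cdot C$ and $p(x^{[n]},x^{n+1})=B$, where $C=\int p(x^{[n]}\mid\theta)p(\theta)\,d\theta=p(x^{[n]})$, dividing out yields $p(y^{[n]},x^{n+1}\mid x^{[n]})=AB/C=p(y^{[n]}\mid x^{[n]})\,p(x^{n+1}\mid x^{[n]})$.

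\emph{The ``only if'' direction} is where the work lies. Since $\{(X^n,Y^n)\}$ is an exchangeable sequence of $\{0,1\}^2$-valued variables, de Finetti's theorem gives a random probability measure on $\{0,1\}^2$ (the almost-sure limit of the empirical measures) conditionally on which the pairs are i.i.d. I would reparametrize this random measure bijectively as $(\theta,\psi)\in[0,1]\times[0,1]^2$, where $\theta$ is the conditional marginal $P(X=1\mid\cdot)$ and $\psi=(\psi_0,\psi_1)$ the conditionals $P(Y=1\mid X=0,\cdot)$ and $P(Y=1\mid X=1,\cdot)$, handling the null event $\theta\in\{0,1\}$ (where one coordinate of $\psi$ is immaterial) by convention. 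This already produces the form of Eq.~\eqref{causaldf}, but with a possibly non-product mixing law $\mu=\mathrm{Law}(\theta,\psi)$, so the whole theorem reduces to showing $\theta\perp\psi$ under $\mu$. Because $X^{n+1}$ is binary, the hypothesis $Y^{[n]}\perp X^{n+1}\mid X^{[n]}$ is equivalent to $\mathbb{E}[\theta\mid X^{[n]},Y^{[n]}]=\mathbb{E}[\theta\mid X^{[n]}]$ for every $n$ and every realization; writing both sides via the $\mu$-integral and clearing denominators (zero-probability patterns being handled by non-negativity together with the pointwise bound $\theta^{k+1}(1-\theta)^m\le\theta^k(1-\theta)^m$) turns this into the moment identities
\[
\int \theta^{k+1}(1-\theta)^{m}\,v(\psi)\,d\mu \;=\; \rho_{k,m}\int \theta^{k}(1-\theta)^{m}\,v(\psi)\,d\mu,
\]
valid for all $k,m\ge 0$ and all monomials $v$ in $(\psi_0,\psi_1)$ of the appropriate degrees, where $\rho_{k,m}:=\mathbb{E}_\mu[\theta\cdot\theta^k(1-\theta)^m]/\mathbb{E}_\mu[\theta^k(1-\theta)^m]$ does \emph{not} depend on $\psi$. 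Taking linear combinations over $v$ (these span polynomials in $(\psi_0,\psi_1)$), disintegrating $\mu(d\theta,d\psi)=\mu_\psi(d\psi)\,\mu_{\theta\mid\psi}(d\theta)$, and using the identity $\theta^{a}(1-\theta)^{b}=\theta^{a}(1-\theta)^{b-1}-\theta^{a+1}(1-\theta)^{b-1}$ to propagate the recursion across all $(k,m)$, I would conclude that the conditional moments $\int\theta^{k}(1-\theta)^m\,\mu_{\theta\mid\psi}(d\theta)$ are $\mu_\psi$-a.s.\ independent of $\psi$; since $[0,1]$ is compact, moments determine the measure, so $\mu_{\theta\mid\psi}=\mu_\theta$ a.s., i.e.\ $\theta\perp\psi$.

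The main obstacle is this last passage from the finite-sample, pointwise conditional-independence statement to the factorization of the de Finetti measure. A tempting shortcut---let $n\to\infty$ and use that $\theta$ is a tail functional of the $X$-sequence while $\psi$ is a tail functional of the full sequence---fails, because once one conditions on the whole $X$-tail, $\theta$ becomes measurable and the conditional-independence statement collapses to a tautology; the information that forces $\theta\perp\psi$ lives precisely in how the finite-$n$ posteriors of $\theta$ fail to respond to the $Y$'s, which is why the moment-recursion bookkeeping over all $(k,m)$ simultaneously, combined with moment determinacy on the compact cube $[0,1]^3$, appears unavoidable. Additional care is needed for the measure-theoretic boundary $\theta\in\{0,1\}$ and for conditioning events of probability zero.
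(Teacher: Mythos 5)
This theorem is quoted from \citet{guo2024causal} as background; the paper you are annotating contains no proof of it, so there is nothing internal to compare against. Judged on its own, your ``if'' direction is complete and correct: the separation of the $\theta$- and $\psi$-integrals into $A(x^{[n]},y^{[n]})\,B(x^{[n]},x^{n+1})$ and the division by $C=p(x^{[n]})$ is exactly the right computation. Your ``only if'' direction follows the standard route (de Finetti for the pair sequence, reparametrization of the directing measure into $(\theta,\psi)$, then forcing $\theta\perp\psi$ from the conditional-independence hypothesis via moment identities), and you correctly identify both the reduction to $\mathbb{E}[\theta\mid X^{[n]},Y^{[n]}]=\mathbb{E}[\theta\mid X^{[n]}]$ and the boundary issues at $\theta\in\{0,1\}$.

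The one place where your sketch needs to be made honest is the degree bookkeeping in the moment identities. For a fixed monomial $v$ of bidegree $(p,q)$ in $(\psi_0,\psi_1)$, the hypothesis at sample size $n=k+m$ only yields the identity for Bernstein monomials of bidegree exactly $(m,k)$, hence by linear combination only for $v$ with $p\le m$ and $q\le k$; your recursion therefore pins down $\mathbb{E}[\theta^k(1-\theta)^m v]/\mathbb{E}[\theta^k(1-\theta)^m]$ as a constant $c_v$ only on the cone $k\ge q,\ m\ge p$, and it does \emph{not} directly reach $(k,m)=(0,0)$, which is what you need to identify $c_v$ with $\mathbb{E}[v(\psi)]$. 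The missing (but available) step is a density argument: the identities on the cone say $\mathbb{E}\bigl[\theta^k(1-\theta)^m\,(v(\psi)-c_v)\bigr]=0$ for all $k\ge q,\ m\ge p$, and since $\theta^q(1-\theta)^p\,\mathbb{R}[\theta]$ is uniformly dense in the continuous functions vanishing appropriately at the endpoints, this forces $\mathbb{E}[v(\psi)\mid\theta]=c_v$ almost everywhere on $\{0<\theta<1\}$, whence $c_v=\mathbb{E}[v(\psi)]$ after fixing $\psi$ by convention on the null events $\{\theta=0\}$ and $\{\theta=1\}$; moment determinacy on the compact cube then gives full independence as you say. With that insertion your argument closes, so I would call this a correct proof plan with one step left implicit rather than a gap in the approach.
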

\vspace{-0.8em}

\cref{thm:cdf_bivariate} shows that for any exchangeable sequence with paired random variables that satisfy certain conditional independences, there exist statistically independent latent variables $\theta, \psi$ governing each (causal) mechanism $P(Y \mid
X, \psi), P(X| \theta)$. \cite{guo2024causal} further shows that unique causal structure identification is possible in exchangeable non--i.i.d.\ data, contrary to the common belief that structure identification is infeasible in i.i.d.\ data \citep{pearl_causal_2009}. %

Our work further observes that exchangeable non--i.i.d.\ data is again the key for representation identifiability. We thus propose our unifying model, \gls{iem}, to allow understanding the driving forces behind general identifiability.
\section{\acrlong*{iem}: A unifying framework for structural and representational identifiability}
\label{sec:exchg}
    This section demonstrates how non--\acrshort*{iid}, particularly, exchangeable data (\cref{def:exch_sequence}) 
    enables several structure and representation identifiability results. 
    We introduce \acrfull*{iem} (\cf \cref{fig:unified} and \cref{subsec:exchg_unified}) to illustrate how exchangeability is the common principle for multiple identifiability results across 
    \acrfull*{cd}, \acrfull*{ica}, and \acrfull*{crl}. 
    Furthermore, we relax the exchangeability condition into what we call \textit{cause and mechanism variability,} which provides novel and relaxed identifiability conditions (\cref{thm:extendcdf,lem:dual_cause_mech}).
    We derive a probabilistic model from \gls*{iem} for \gls{cd}, \gls{ica}, and \gls{crl} (see the graphical relationship in \cref{fig:unified}). Then, we show how 
    the bivariate \gls*{cdf} theorem~\citep{guo2024causal}  (\cref{subsec:exchg_cd}), \gls*{tcl}~\citep{hyvarinen_unsupervised_2016}  (\cref{subsec:exchg_ica}), and CauCA~\citep{wendong_causal_2023} (\cref{subsec:exchg_crl}) all leverage exchangeable data, and, thus, are special cases of \gls{iem}.\looseness = -1
    \subsection{\acrfull*{iem}}\label{subsec:exchg_unified}
       \gls*{iem} encompasses three types of variables: exogenous (source) variables $\mathbf{S}$ for (disentangled) latent representations, causal variables $\mathbf{Z}$ for representations that contain cause--effect relationships, and observed variables $\mathbf{O}$ for observed (high-dimensional) quantities.

        \paragraph{A probabilistic model for \gls*{iem}.}
            With all three variable types, assuming that there is an intermediate causal layer, the joint distribution of source, causal, and observed variables is:
            \vspace{-.6em}
            \begin{align}
                p(\mathbf{s}, \mathbf{z}, \mathbf{o}) \!\!&=\!\! \int_{\boldsymbol{\theta}^s, \boldsymbol{\theta}^g, \boldsymbol{\psi}} p(\mathbf{o}|\mathbf{z},\boldsymbol{\psi})\!\prod_j \!\brackets{p(z_j|\textbf{Pa}(z_j); \theta^g_j) p(\theta^g_j)} \!\prod_i\! \brackets{p(s_i|\theta_i^s) p(\theta_i^s)}  p(\boldsymbol{\psi}) d\boldsymbol{\psi} d\boldsymbol{\theta}^g d\boldsymbol{\theta^s}\! , \label{eq:iem}
            \end{align}
            where $j$ indexes causal, $i$ source variables (we omit the sample superscript for brevity), 
            $\textbf{Pa}(z_j)$ denotes
            the parents of $z_j$ (including $s_j$) and we integrate over all $\theta_j^g$ and $\theta_i^s$---the superscripts $g$ and $s$ denote separate parameters controlling structural assignments $g_j$ and the source distributions, respectively. \looseness=-1

            \paragraph{An intuition for \gls*{iem}.} 
            Consider multi-environment data where each environment has a distinct distribution, while observations within the same environment are assumed to be exchangeable, \ie, the observations' order is irrelevant.
                \gls*{iem} models such multi-environment data by treating each environment as an i.i.d. copy of the model in \eqref{eq:iem}. Across-environment variability is ensured by choosing non-delta parameter priors $p(\boldsymbol{\psi}), p(\theta^g_j), p(\theta^s_i)$, while exchangeability within the environment is ensured by the conditional independence of observations given these parameters. 
                i.i.d.\ data, or a single environment in this context, is a special case of exchangeable data with delta priors (see \cref{sec:preliminaries}).
                \begin{center}
                    \textit{We introduce \gls{iem} to elucidate the relationship of \gls{cd}, \gls{ica}, and \gls{crl}: despite distinct learning objectives, they often rely on the same exchangeable non--\acrshort{iid} data structure to allow structure or representation identification.}
                \end{center}

                Further, IEM can model both the passive (observation) and active (intervention) view of data. For example, both a passive distribution shift and an active hard intervention can be modelled with exchangeability as a switch between binary variables.
                \cref{tab:method_comp} in \cref{sec:related} illustrates the similarity of the (passive) variability and (active) interventional assumptions.
                
                The graphical model of \gls{iem} illustrates the relationship of source, causal and observed variables (\cref{fig:unified}). 
                We connect the seemingly unrelated methods of \gls{cd}, \gls{ica}, and \gls{crl} by deriving their model from \gls{iem} via \textit{omission} (\cf \cref{fig:unified_cd,fig:unified_ica,fig:unified_crl}). Namely, \gls{cd} does not handle high-dimensional observations, \gls{ica} does not model causal variables, and \gls{crl} does not aim to recover source variables. We detail these connections in the following case studies.

        \paragraph{Case study: Identifiable Latent Neural Causal Models~\citep{liu_identifiable_2024} in the unified model.}

            \citet{liu_identifiable_2024} proposed to learn source (exogenous) variables, causal variables, and the corresponding \gls*{dag} together, \ie, all target quantities from \cref{fig:unified}. We show how this is possible via exchangeable sources and mechanisms (\cref{lem:liu_ident_exchg}).
            For the sources, they assume non-stationary, conditionally exponential source variables. Thus, they can use \gls*{tcl}~\citep{hyvarinen_unsupervised_2016} to identify $\mathbf{S}$ from $\mathbf{O}$ (details in \cref{subsec:exchg_ica}).
            For the causal variables, they require diverse interventions, quantified by a derivative-based condition (\cref{assum:struct_assign_liu}) on the structural assignments $g_i$ (the authors generalize to post-nonlinear models; we focus on \glspl*{anm}). 

           \begin{assum}[Structural assignment assumption~\citep{liu_identifiable_2024}]\label{assum:struct_assign_liu}
                Assume that the structural assignments $g_i$ between causal variables $z_i$ form an \gls*{anm} such that $z_i := g_i(Pa(z_i); \theta^g_i(u)) + s_i$, where $\theta^g_i(u)$ are the parameters of the structural assignments, and they depend on the auxiliary-variable $u.$ Then, to identify the causal structure and causal variables, there exists a value $u=u_0$ such that (denoting $\theta^g_{i0}:=\theta^g_i(u_0)$)
                \vspace{-.6em}
                \begin{align}
                   \forall  z_j \in \textbf{Pa}(z_i) : \quad  \derivative{g_i(\textbf{Pa}(z_i), \theta^g_i=\theta^g_{i0})}{z_{j}} =0.
                \end{align}
            \end{assum}
            \vspace{-.em}
            
            \Cref{assum:struct_assign_liu} requires for a specific value $u=u_0$, the path $Z_j \to Z_i$ for each $Z_j \in \textbf{Pa}(Z_i)$ is blocked---this can be thought of as emulating perfect interventions, for which structure identifiability results exists~\citep{pearl_causality_2009}.
            We rephrase the identifiability result of \citet{liu_identifiable_2024}, showing how it relies on exchangeability conditions (see \cref{subsec:app_liu_ident_exchg} for proof): 
        
        \begin{restatable}{lem}{lemliuidentexchg}[Identifiable Latent Neural Causal Models are identifiable with exchangeable sources and mechanisms]\label{lem:liu_ident_exchg}
            The model of \citet{liu_identifiable_2024} (\cref{fig:unified_unified}) identifies both the latent sources $\mathbf{s}$ and the causal variables $\mathbf{z}$ (including the graph), by the variability of $\mathbf{s}$ via a non-delta prior over $\theta^s$ and by the variability of the structural assignments via $\theta^g$.
        \end{restatable}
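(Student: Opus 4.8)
The plan is to show that the two--stage identifiability argument of \citet{liu_identifiable_2024} is, stage by stage, an instance of the exchangeable non--i.i.d.\ structure formalized by \gls{iem} in \eqref{eq:iem}, and to read off from each stage which parameter prior must be non--delta. \textbf{Stage 1 (sources).} First I would recall that \citeauthor{liu_identifiable_2024} recover $\mathbf{s}$ from $\mathbf{o}$ by assuming the sources are conditionally exponential and non--stationary across the auxiliary variable $u$, which is exactly the hypothesis under which \gls{tcl}~\citep{hyvarinen_unsupervised_2016} identifies the sources up to permutation and component--wise transformation. The key observation is that this non--stationarity is the \emph{cause variability} condition of \cref{subsec:exchg_ica}: each value of $u$ indexes an environment that is an i.i.d.\ copy of the source model $\prod_i p(s_i\mid\theta^s_i)$, and ``non--stationary'' means precisely that $p(\boldsymbol{\theta}^s)$ is not a delta. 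So invoking the \gls{tcl} identifiability result — which the paper establishes as a special case of cause variability — yields $\widehat{\mathbf{s}} = \mathbf{P}\,\mathbf{h}(\mathbf{s})$ for a permutation $\mathbf{P}$ and an invertible component--wise $\mathbf{h}$, with no reference to the causal graph.

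\textbf{Stage 2 (causal variables and graph).} Given the sources from Stage 1, I would then reinterpret \cref{assum:struct_assign_liu} as \emph{mechanism variability}: the structural--assignment parameters $\theta^g_i$ depend on $u$ with a non--delta prior $p(\theta^g_i)$, and the derivative condition says there is a value $u_0$ for which the contribution of every parent $z_j\in\textbf{Pa}(z_i)$ to $z_i$ vanishes. In the \gls{anm} $z_i := g_i(\textbf{Pa}(z_i);\theta^g_i) + s_i$ this makes $z_i$ at $u_0$ a function of $s_i$ alone, i.e.\ it emulates a perfect (hard) intervention removing all incoming edges of $z_i$. I would then quote the interventional identifiability machinery used by \citeauthor{liu_identifiable_2024} (Pearl--type, \cf \citealp{pearl_causality_2009}): comparing the observational environment with the ``switched--off'' environment for each $i$ pins down the children of $z_i$, hence the \gls{dag}, and then the causal variables $\mathbf{z}$ themselves up to the permitted node--wise indeterminacies.

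\textbf{Assembling and the main obstacle.} Combining the two stages gives identifiability of $\mathbf{s}$, of $\mathbf{z}$, and of the graph, and attributes the first to variability of $\mathbf{s}$ via a non--delta $p(\theta^s)$ and the second to variability of the mechanisms via a non--delta $p(\theta^g)$, which is the statement of the lemma. The main obstacle I anticipate is the bookkeeping that makes the reinterpretation \emph{faithful} rather than merely analogical: one must check that (i) the ``auxiliary variable'' $u$ of \citet{liu_identifiable_2024} can be consistently identified with a de Finetti--type latent parameter, so that each $u$--value is an exchangeable, conditionally i.i.d.\ environment under \eqref{eq:iem}; (ii) the derivative condition of \cref{assum:struct_assign_liu} actually induces a genuine change in the conditional $p(z_i\mid\textbf{Pa}(z_i))$ across environments — not a measure--zero coincidence — so that mechanism variability has bite; and (iii) the two stages compose without circularity, i.e.\ Stage 1 does not presuppose the graph and Stage 2 is robust to the component--wise indeterminacy left by Stage 1. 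Everything else is a transcription of arguments already present in \citet{liu_identifiable_2024} and in \cref{subsec:exchg_ica}.
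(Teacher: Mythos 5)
Your proposal is correct and follows essentially the same route as the paper: the paper's proof simply observes that \citet{liu_identifiable_2024} rely on \gls{tcl} for the sources and on a form of the interventional discrepancy condition (\cref{def:interventional_discrepancy}) for the causal layer, so the result follows from \cref{lem:tcl} and \cref{lem:cauca}, with \cref{assum:struct_assign_liu} interpreted --- exactly as you do --- as emulating perfect interventions that block each edge $z_j \to z_i$. Your added caveats (i)--(iii) are sensible bookkeeping but go beyond what the paper's own (very terse) proof addresses.
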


        The identifiability result of \citep{liu_identifiable_2024} requires two separate variability conditions: one for the sources and one for the mechanisms. We show how these separate conditions, when the \gls*{sem} is an \gls*{anm}, disentangle the \gls*{cdf} parameters  into separate (independent) parameters controlling sources and structural assignments respectively (see proof in \cref{subsec:app_liu_indep}):
        \begin{restatable}{lem}{lemliudisentcdf}[Independent source and structural assignment \gls*{cdf} parameters for ANMs]\label{lem:liu_disent_cdf_params}
            In the setting of \citet{liu_identifiable_2024}, where the \gls*{sem} is an \gls*{anm}, the \gls*{cdf} parameters for the sources, $\boldsymbol{\theta}^s$, and the structural assignments, $\boldsymbol{\theta}^g$, are independent, \ie $p(\boldsymbol{\theta}^g, \boldsymbol{\theta}^s)= p(\boldsymbol{\theta^g})p(\boldsymbol{\theta^s})$. 
        \end{restatable}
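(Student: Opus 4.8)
The plan is to show that, under the ANM hypothesis, the single de~Finetti parameter that \cref{thm:cdf_bivariate} (in its multivariate form, \citealp{guo2024causal}) attaches to each causal mechanism $p(z_i\mid\textbf{Pa}(z_i))$ decomposes into two \emph{autonomous} modules---the parameter $\theta_i^g$ of the deterministic additive assignment and the parameter $\theta_i^s$ of the exogenous source $s_i$---and then to read off independence of $\boldsymbol{\theta}^g$ and $\boldsymbol{\theta}^s$ from the \gls{icm} principle applied at the level of the augmented graph in which the sources are explicit nodes. I would begin by recalling, via \cref{lem:liu_ident_exchg}, that in the model of \citet{liu_identifiable_2024} the source layer $\mathbf S$ is generated by an exchangeable, conditionally independent mechanism (the TCL-style nonstationary exponential family, one coordinate-wise parameter $\theta_i^s$ each), and that the causal layer is generated by structural assignments whose parameters $\theta_i^g$ vary across environments by the interventional diversity of \cref{assum:struct_assign_liu}. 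These are the only two sources of cross-environment variability in \eqref{eq:iem}.

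Next I would form the augmented \gls{dag} $\mathcal{G}^+$ on the $2d$ nodes $\{S_i\}_i\cup\{Z_i\}_i$, with each $S_i$ a root, an edge $S_i\to Z_i$, and the $Z_i$ inheriting the original causal edges. In the ANM the assignment $Z_i:=g_i(\textbf{Pa}(z_i);\theta_i^g)+S_i$ is \emph{deterministic} once we condition on all of $\textbf{Pa}(z_i)$ including $S_i$; crucially, additivity guarantees that this deterministic module carries no information about the law of $S_i$, and conversely the law of $S_i$ carries no information about $g_i$, so the ``function mechanism'' at $Z_i$ (parameter $\theta_i^g$) and the ``noise mechanism'' at $S_i$ (parameter $\theta_i^s$) are genuinely distinct autonomous modules in the sense required by \gls{icm}. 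Applying de~Finetti / Causal de~Finetti to the exchangeable (across environments) sequence generated by $\mathcal{G}^+$ then yields that the $2d$ de~Finetti parameters $\{\theta_i^s\}_i\cup\{\theta_j^g\}_j$ are mutually, hence jointly, independent. In particular $\boldsymbol{\theta}^s\perp\boldsymbol{\theta}^g$, i.e. $p(\boldsymbol{\theta}^g,\boldsymbol{\theta}^s)=p(\boldsymbol{\theta}^g)p(\boldsymbol{\theta}^s)$ (and in fact the finer factorization $\prod_j p(\theta_j^g)\prod_i p(\theta_i^s)$ follows the same way), which also makes the factored IEM integrand of \eqref{eq:iem} internally consistent.

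The step I expect to be the main obstacle is making the invocation of Causal de~Finetti rigorous for the \emph{degenerate}, point-mass mechanism $Z_i=g_i(\cdot;\theta_i^g)+S_i$, and arguing that the $(\theta_i^g,\theta_i^s)$-split is not merely one admissible reparametrization but the one singled out by \gls{icm} (this is exactly where ANM, rather than a general SEM, is needed). I would resolve it in one of two ways: (i) work instead with the non-degenerate marginal conditional $p(z_i\mid\textbf{Pa}_{\mathbf Z}(z_i))=p_{S_i}(\,\cdot\mid\theta_i^s)\ast\delta_{g_i(\cdot;\theta_i^g)}$ and use that for an ANM with non-vanishing noise characteristic function this convolution uniquely determines the pair $(g_i,p_{S_i})$, so the mechanism's de~Finetti parameter is equivalent to $(\theta_i^g,\theta_i^s)$; or (ii) add an infinitesimal independent regularizing noise to each $Z_i$, apply Causal de~Finetti on $\mathcal{G}^+$, and take the limit. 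Either route reduces the claim to the mutual independence of mechanism parameters already delivered by \gls{icm} and \cref{thm:cdf_bivariate}.
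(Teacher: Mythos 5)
Your proposal is correct and follows essentially the same route as the paper: the ANM's additivity separates the mechanism at each $Z_i$ into an autonomous deterministic module $g_i(\cdot\,;\theta_i^g)$ and an autonomous noise module $p(s_i\mid\theta_i^s)$, and independence of the parameters then follows from the ICM/Causal-de-Finetti modularity. The paper's own proof is a terse contrapositive---if $\theta_i^s$ and $\theta_i^g$ were dependent, then $s_i$ and $g_i(\textbf{Pa}(z_i))$ would be dependent, contradicting the ANM assumption under faithfulness---whereas you additionally formalize the augmented graph and address the degeneracy of the deterministic conditional and the uniqueness of the $(\theta_i^g,\theta_i^s)$ split, details the paper leaves implicit.
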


        \cref{lem:liu_disent_cdf_params} says that the representation learning (\gls*{tcl}) part relies on the exchangeability of the source (exogenous) variables, whereas the \gls*{crl} part requires exchangeability in the \gls*{sem}.
        The connection between Gaussian LTI systems and \gls*{cdf}~\citep[Sec.~3.5]{rajendran_interventional_2023} can be seen as a special case of \cref{lem:liu_disent_cdf_params}, where the sources and the mechanism (the LTI dynamical system) have independent matrix parameters. Our result also conceptually resemble mechanized \glspl*{sem}~\citep{kenton_discovering_2023}, where the structural assignments are modeled by distinct nodes.
        
        Next, we show how the probabilistic models for \gls*{cd}, \gls*{ica}, and \gls*{crl} can be derived from \gls*{iem} (\cref{fig:unified}), depending on whether we model cause-effect relationships and/or source variables.

\subsection{Exchangeability in \acrlong*{cd}: Extending \acrlong*{cdf}}\label{subsec:exchg_cd}
    \acrfull*{cd} infers the causal graph between observed \textit{causal} variables (\cref{fig:unified_cd}). \glspl*{sem} ~\citep{pearl_causal_2009} are classic causal models, where deterministic causal mechanisms and stochastic noise (exogenous/latent) variables determine each causal variable's value. For i.i.d.\ observational data alone, causal structure is identifiable only up to its Markov equivalence class (\cref{def:markov_eq_class}). 
    In the present work, we introduce a relaxed set of conditions, termed cause and mechanism variability, and show in the bivariate case how these non--i.i.d., specifically a mixture of i.i.d.\ and exchangeable, data, are necessary and sufficient for uniquely identifying causal structures.

    \begin{figure}[tb]
    	\begin{subfigure}[b]{0.3\textwidth}
    		\centering
                \begin{tikzpicture}[node distance=0.4cm, font=\small, scale=.15]
    \node[draw, circle] (n1) {$X^1$};
    \node[draw, circle, right=of n1] (x1) {$Y^1$};

    \draw[-{Latex[length=2mm]}] (n1) -- (x1);

    \node[draw, above=of n1] (theta) {$\theta$};
    \node[draw,above=of x1] (psi) {$\psi$};

    \node[draw, circle, above=of theta] (n2) {$X^2$};
    \node[draw, circle, right=of n2] (x2) {$Y^2$};

    \draw[-{Latex[length=2mm]}] (n2) -- (x2);

    \draw[-{Latex[length=2mm]}] (theta) -- (n1);
    \draw[-{Latex[length=2mm]}] (theta) -- (n2);

    \draw[-{Latex[length=2mm]}] (psi) -- (x1);
    \draw[-{Latex[length=2mm]}] (psi) -- (x2);

\end{tikzpicture}
                \caption{\acrlong*{cdf}}
                \label{fig:cdf}
    	\end{subfigure}
    	\begin{subfigure}[b]{0.3\textwidth}
    		\centering
                \begin{tikzpicture}[node distance=0.4cm, font=\small, scale=.15]
    \node[draw, circle] (n1) {$X^1$};
    \node[draw, circle, right=of n1] (x1) {$Y^1$};

    \draw[-{Latex[length=2mm]}] (n1) -- (x1);

    \node[draw, above=of n1] (theta) {$\theta$};
    \node[draw, above=of x1] (psi) {$\psi$};

    \node[draw, circle, above=of theta] (n2) {$X^2$};
    \node[draw, circle, right=of n2] (x2) {$Y^2$};

    \draw[-{Latex[length=2mm]}] (n2) -- (x2);

    \draw[-{Latex[length=2mm]}] (theta) -- (n1);
    \draw[-{Latex[length=2mm]}] (theta) -- (n2);

\end{tikzpicture}
                \caption{Cause variability}
                \label{fig:cdf_cause}
    	\end{subfigure}
            \begin{subfigure}[b]{0.3\textwidth}
    		\centering
                \begin{tikzpicture}[node distance=0.4cm, font=\small, scale=.15]
    \node[draw, circle] (n1) {$X^1$};
    \node[draw, circle, right=of n1] (x1) {$Y^1$};

    \draw[-{Latex[length=2mm]}] (n1) -- (x1);

    \node[draw, above=of n1] (theta) {$\theta$};
    \node[draw, above=of x1] (psi) {$\psi$};

    \node[draw, circle, above=of theta] (n2) {$X^2$};
    \node[draw, circle, right=of n2] (x2) {$Y^2$};

    \draw[-{Latex[length=2mm]}] (n2) -- (x2);

    \draw[-{Latex[length=2mm]}] (psi) -- (x1);
    \draw[-{Latex[length=2mm]}] (psi) -- (x2);
    
\end{tikzpicture}
                \caption{Mechanism variability}
                \label{fig:cdf_mech}
    	\end{subfigure}
    	\caption{\textbf{non--\acrshort*{iid} conditions for bivariate \gls*{cd}:} \textbf{(a)} Exchangeable non--i.i.d.\ \gls*{dgp} for both cause $P(\mathbf{X})$ and mechanism $P(\mathbf{Y}|\mathbf{X})$~\citep{guo2024causal}; \textbf{(b):} exchangeable non--i.i.d.\ \gls*{dgp} for cause $P(\mathbf{X})$ and i.i.d.\ \gls*{dgp} for mechanism $P(\mathbf{Y}|\mathbf{X})$ \textbf{(c):} exchangeable non--i.i.d.\ \gls*{dgp} for mechanism $P(\mathbf{Y}|\mathbf{X})$ and i.i.d.\ \gls*{dgp} for cause $P(\mathbf{X})$. \cref{thm:extendcdf} shows that identifying the unique bivariate causal structure is possible if either the cause or the mechanism follows an exchangeable non--i.i.d.\ \gls*{dgp} } 
    	\label{fig:extendcdf}
    \end{figure}

    \paragraph{A probabilistic model for \gls*{cd}.}
        We consider the bivariate case with exchangeable sequences $(X^n,Y^n)$ that adheres to the \gls*{icm} principle~\citep{peters_elements_2018}. 
        \cref{thm:cdf_bivariate} states there exist statistically independent \gls*{cdf} parameters $\theta, \psi$ such that the joint distribution can be represented as:
        \vspace{-.6em}
        \begin{align}
            p(x^1, y^1, \ldots, x^n, y^n) = \int_\theta\int_\psi \prod_{i=1}^n p(y^i|x^i, \psi)p(x^i|\theta)d\psi d\theta \quad \text{where} \qquad \psi\perp \theta.\label{eq:cdf}
        \end{align}
        \gls*{cd} with unique structure identification is possible when the parameter priors $p(\theta), p(\psi)$ are not delta distributions, \ie, when the data pairs are from exchangeable non--i.i.d.\ sequences. \cref{fig:cdf} shows a Markov graph compatible with \eqref{eq:cdf}.

    \paragraph{Case study: \gls*{cdf} in the unified model.}
        
        \gls*{cd} in general, and \gls*{cdf} in particular, focuses on the study of \textit{observed} causal variables (denoted by $Z$ in \cref{fig:unified} and \eqref{eq:iem}). \gls*{cd} aims to learn cause-effect relationships among the observed causal variables $Z_i$, rather than reconstructing the $Z_i$ or uncovering the true mixing function $f$.
        Bivariate \gls*{cdf} fits into the \gls*{iem} probabilistic model by relabeling $Y=Z_i$ and $X=\mathbf{Pa}(Z_i)$.
        We use our insights from \gls*{iem} to relax the assumptions for bivariate \gls*{cd} from exchangeable pairs, generalizing \gls*{cdf}.

    \paragraph{Relaxing \gls*{cdf}: cause and mechanism variability.}
        We show that it is not necessary for \gls*{cd} that \textit{both} $p(\theta)$ and $p(\psi)$ differ from a delta distribution---equivalently, the presence of both graphical substructures $X^1 \leftarrow \theta \to X^2$ and $Y^1 \leftarrow \psi \to Y^2$ are \textit{not} required to distinguish the causal direction between $X$ and $Y$. We distinguish two cases: \textit{``cause variability,"} when only the cause mechanism changes (\cref{fig:cdf_cause}), \ie 
        $p(\psi) = \delta_{\psi_0}(\psi), p(\theta) \neq \delta_{\theta_0}(\theta)$; and \textit{``mechanism variability,"} when only the effect-given-the-cause mechanism changes  (\cref{,fig:cdf_mech}), \ie  $p(\theta) = \delta_{\theta_0}(\theta)$ and $p(\psi) \neq \delta_{\psi_0}(\psi)$---we motivate these assumptions by the \gls{sms} hypothesis~\citep{perry_causal_2022} and provide real-world examples for both in \cref{app:sec_var_intuition}.
        When $p(\theta)$ is sufficiently different from a delta distribution, then each cause distribution sampled from  $p(x|\theta)$ will have a different distribution with high probability. This is similar for $p(\psi)$ when the effect-given-the-cause mechanism $p(y|x,\psi)$ is shifted. %
        Formally (the proof is in \cref{subsec:app_extendcdf}):

    \begin{restatable}{theorem}{thmextendcdf}[Cause/mechanism variability is necessary and sufficient for bivariate \gls*{cd}]\label{thm:extendcdf}
       Given a sequence of bivariate pairs $\{X^n, Y^n \}_{n \in \mathbb{N}}$ such that for any $N \in \mathbb{N}$, the joint distribution can be represented as:
       \begin{itemize}
           \item $X \to Y$: $p(x^1, y^1, \ldots, x^N, y^N) = \int_\theta \int_\psi \prod_n p(y^n | x^n, \psi) p(x^n | \theta) p(\theta) p(\psi) d\theta d\psi$
           \item $X \leftarrow Y$: $p(x^1, y^1, \ldots, x^N, y^N) = \int_\theta \int_\psi \prod_n p(x^n | y^n, \theta) p(y^n | \psi) p(\psi) p(\theta) d\theta d\psi$
       \end{itemize}
       Then the causal direction between variables $X,Y$ can still be distinguished when:
        \begin{enumerate}
            \item either only $p(\theta) = \delta_{\theta_0}(\theta)$ for some constant $\theta_0$ or only $p(\psi) = \delta_{\psi_0}(\psi)$ for some constant $\psi_0$ (but not both). \cref{fig:cdf_cause} and \cref{fig:cdf_mech} show the Markov structure of such factorizations.
            \item the distribution of $P$ is faithful (\cref{def:faithful}) \wrt \cref{fig:cdf_cause} or \cref{fig:cdf_mech}. 
        \end{enumerate} 
    \end{restatable}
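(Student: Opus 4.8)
The plan is to derive \cref{thm:extendcdf} from the bivariate Causal de Finetti theorem (\cref{thm:cdf_bivariate}) together with a d-separation/faithfulness argument on the Markov structures of \cref{fig:cdf_cause,fig:cdf_mech}. The key observation is that \cref{thm:cdf_bivariate}, and its mirror image obtained by relabelling $X \leftrightarrow Y$, characterize the existence of the ``$X\to Y$'' and ``$X\leftarrow Y$'' integral representations (precisely the two bullets of the statement) through a single asymmetric conditional independence: the ``$X\to Y$'' representation is available iff $Y^{[n]} \perp X^{n+1}\mid X^{[n]}$ for all $n$, and the ``$X\leftarrow Y$'' one iff $X^{[n]} \perp Y^{n+1}\mid Y^{[n]}$ for all $n$, where a delta (constant) parameter is an admissible degenerate special case. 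Hence it suffices to show that under cause variability, and separately under mechanism variability, \emph{exactly one} of these two CIs holds in $P$; then only one causal direction is consistent with the data, which is the claimed identifiability.

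First I would record that $\{(X^n,Y^n)\}_n$ is exchangeable in either regime — the pairs are i.i.d.\ given $(\theta,\psi)$ — so \cref{thm:cdf_bivariate} and its mirror both apply. For the ``positive'' half: in \cref{fig:cdf_cause} ($p(\psi)=\delta_{\psi_0}$), every path from $X^{n+1}$ into $\{Y^i\}_{i\le n}$ must pass through $\theta$, which is blocked once we condition on $X^{[n]}$, so $Y^{[n]}\perp X^{n+1}\mid X^{[n]}$; in \cref{fig:cdf_mech} ($p(\theta)=\delta_{\theta_0}$) the same CI is immediate because $X^{n+1}$ is d-separated from the entire subgraph on samples $1,\dots,n$. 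So the ``$X\to Y$'' representation is available, consistent with the assumption.

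The crux is the ``negative'' half: under either regime the reverse CI must fail. In \cref{fig:cdf_cause} the path $X^1\leftarrow\theta\to X^{n+1}\to Y^{n+1}$ carries no collider, and the conditioning set $Y^{[n]}$ consists only of descendants of the $X^i$, which cannot activate it; the path is therefore active, so $X^1\not\perp_d Y^{n+1}\mid Y^{[n]}$. In \cref{fig:cdf_mech} the path $X^1\to Y^1\leftarrow\psi\to Y^{n+1}$ has a collider at $Y^1\in Y^{[n]}$, which the conditioning set activates, with no intermediate node conditioned on to block it — again active. By faithfulness of $P$ \wrt the relevant graph (\cref{def:faithful}), each such d-connection is a genuine statistical dependence, contradicting $X^{[n]}\perp Y^{n+1}\mid Y^{[n]}$; hence by the mirror of \cref{thm:cdf_bivariate} the ``$X\leftarrow Y$'' representation does not exist, and $X\to Y$ is the unique orientation consistent with $P$. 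The case where the true model is the ``$X\leftarrow Y$'' bullet follows verbatim after swapping the roles of $X$ and $Y$. Finally, for necessity, if $p(\theta)=\delta_{\theta_0}$ \emph{and} $p(\psi)=\delta_{\psi_0}$ the sequence is i.i.d., and a bivariate i.i.d.\ law admits both orientations as \gls*{sem}s \citep{pearl_causal_2009}, so at least one variability condition is indispensable — which is why condition~1 excludes ``both''.

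I expect the main obstacle to be the faithfulness bookkeeping in the negative half: one must be careful that faithfulness is posited \wrt the \emph{true} graph (\cref{fig:cdf_cause} or \cref{fig:cdf_mech}), that this is exactly what upgrades the d-connection into a dependence in $P$, and that ``no reverse representation'' genuinely forecloses the wrong orientation — which requires verifying that \cref{thm:cdf_bivariate} still applies with the relaxed (possibly degenerate) parameters, and that the only remaining alternative, the fully i.i.d.\ reverse model, is already excluded by the ``not both delta'' hypothesis (equivalently, by $P$ being non-i.i.d.).
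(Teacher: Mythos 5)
Your proposal is correct and follows essentially the same route as the paper: the paper's proof also distinguishes the directions by checking cross-sample conditional independencies such as $Y^1 \perp X^2 \mid X^1$ versus $X^1 \perp Y^2 \mid Y^1$ on the candidate graphs and invoking Markovianity plus faithfulness, exactly the d-separation bookkeeping you carry out (your verdicts match the annotations in \cref{fig:fig2}). The only difference is presentational — you wrap the argument in the iff-characterization of \cref{thm:cdf_bivariate} to phrase "no reverse representation exists," whereas the paper reads off the differing CI patterns directly — but the substance is identical.
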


    \cref{thm:extendcdf} relaxes \cref{thm:cdf_bivariate} and states that the causal structure can be identified even if only one mechanisms varies. That is, if the $X,Y$ pairs are a mixture of \acrshort*{iid} and exchangeable data such that either cause variability (\cref{fig:cdf_cause}) or mechanism variability (\cref{fig:cdf_mech}) holds; then we can distinguish $X\to Y$ from $X\leftarrow Y$---which we empirically verify in synthetic experiments in \cref{sec:app_exp}.  
    \cref{thm:extendcdf} focuses on the bivariate case, though we expect similar results can be extended to multivariate cases. 
    \cref{thm:extendcdf} aligns with well-known results stating that assuming no confounders, single-node interventions are sufficient to identify the causal structure~\citep{pearl_causality_2009}. The contribution of \cref{thm:extendcdf} lies in taking the passive view, similar to \citep{guo2024causal}.

\subsection{Exchangeability in representation learning}\label{subsec:exchg_ica}
   Representation learning aims to infer latent sources $\mathbf{s}$ from  observations $\mathbf{o}$, which are generated via a mixing function $f : \mathbf{s} \to \mathbf{o}$.  \gls*{ica}\footnote{Though the literature is referred to as the nonlinear \gls*{ica} literature, it often uses \textit{conditionally independent latents}, but expressions such as Independently Modulated Component Analysis (IMCA) are not widely used }~\citep{comon1994independent, hyvarinen_independent_2001,shimizu_linear_2006} assumes component-wise independent latent sources $\mathbf{s}$ where $p(\mathbf{s}) = \prod_i p_i(s_i)$ 
        and aims to learn an unmixing function that maps to independent components.
        Recent methods ~\citep{hyvarinen_unsupervised_2016,hyvarinen_nonlinear_2019,khemakhem_variational_2020,morioka_independent_2021,zimmermann_contrastive_2021}
        focus on auxiliary-variable ICA, which assumes the existence and observation\footnote{There is a variant of auxiliary-variable ICA for Hidden Markov Models, which does not require observing $u$~\citep{morioka_independent_2021}; we focus on the case when $u$ is observed} of an auxiliary variable $u$ such that  $p(\mathbf{s} | u) = \prod_i p_i(s_i | u)$ holds---thus, providing identifiability results for a much broader model class.
   Here, we show that representation identifiability in (auxiliary-variable) ICA, particularly \gls*{tcl}~\citep{hyvarinen_unsupervised_2016} (and GCL with conditionally exponential-family sources, \cf \cref{subsec:app_exchg_gcl}), relies on the latent sources to be an exchangeable non--i.i.d.\ sequence. %

   \paragraph{A probabilistic model for (auxiliary-variable) \gls*{ica}.}

        Auxiliary variables can represent many forms of additional information~\citep{hyvarinen_nonlinear_2019}. Our focus is when $u$ represents segment indices, \ie, it enumerates multiple environments. This is equivalent to a draw from a categorical prior $p({u}),$ thus, sources are a marginal copy of an exchangeable sequence $p(\mathbf{s}) = \int_u \prod_i p(s_i | u) p(u) du$. 
        In auxiliary-variable \gls*{ica}~\citep{hyvarinen_unsupervised_2016,hyvarinen_nonlinear_2019,khemakhem_variational_2020}, there is a separate parameter $\theta_i : = \theta(u)$ for each $s_i$. %
        Conditioned on observing the segment index $u$, the joint probability distribution \wrt latent sources $\mathbf{s}$ and observations $\mathbf{o}$ factorizes as ($i$ indexes source variables, we omit the sample superscript for brevity):       
        \vspace{-.6em}
            \begin{align}
                p_u(\mathbf{o}, \mathbf{s}) &=\!\! \int_{\boldsymbol{\theta}}
                \int_{\boldsymbol{\psi}} p(\mathbf{o}|\mathbf{s},{\boldsymbol{\psi}})\prod_i\brackets{ p(s_i|\theta_i)p_u(\theta_i)}d{\boldsymbol{\psi}} 
                d \boldsymbol{\theta}
                \quad \text{where} \quad 
                p({\boldsymbol{\psi}}) \!=\! \delta_{{\boldsymbol{\psi}}_0}({\boldsymbol{\psi}}).\label{eq:repr}
            \end{align}
        Compared to \eqref{eq:iem}, Eq.~\eqref{eq:repr} does not have a ``causal layer", expressing the (conditional) independence between the sources in ICA.
        Compared to \gls*{cdf}, representation learning with ICA additionally restricts the joint probability between sources and observations to extract more information (the latent variables), compared to only the \gls*{dag}. This relation was demonstrated by~\citet{reizinger_jacobian-based_2023}, showing that representation identifiability in some cases implies causal structure identification.
        \looseness = -1

   \paragraph{Case study: \gls*{tcl} in the unified model.}
        We next present how auxiliary-variable ICA, particularly \gls*{tcl}~\citep{hyvarinen_unsupervised_2016} (\cf \cref{subsec:app_exchg_gcl} for the generalization), fits into \gls*{iem} (\cref{fig:unified_ica}) and present a duality result on cause and mechanism variability. 
    The \gls*{tcl} model assumes that the conditional log-density
    $\log p(\mathbf{s}|u)$ is a sum of components $q_i(s_i, u),$ where $q_i$ belongs to the exponential family of order one, \ie:
        \begin{align}
            q_i\left(s_i, u\right) = \tilde{q}_{i }\left(s_{i}\right) \theta_{i }(u) -\log N_i(u) +\log Q_i(s_i),
        \end{align}
        where 
        $N_i$ is the normalizing constant, $Q_i$ the base measure,
        $\tilde{q}_{i}$ the sufficient statistics, and the modulation parameters $\theta_i := \theta_{i}(u)$ depend on $u$. 
        The identifiability of \gls*{tcl} requires multiple segments (\ie, realizations of $u$ with different values) such 
        that for environment $j$, the modulation parameters fulfill a sufficient variability condition, defined via a rank condition:

        \begin{assum}[Sufficient variability]\label{assum:suff_var}
            A \gls{dgp} is called sufficiently variable if there exists $(d+1)$ distinct realizations of $u$ for $d-$dimensional source variables and modulation parameter vectors such that the modulation parameter matrix $\mat{L}\in \rr{(E-1)\times d}$ has full \textit{column} rank.
            For $E$ environments and modulation parameter vectors $\boldsymbol{\theta}^j = \brackets{\theta_1^j, \dots, \theta^j_d}$, the \ith[j] row of \mat{L} is:
            \vspace{-.6em}
            \begin{align}
                \rows{\mat{L}}{j} = (\boldsymbol{\theta}^j - \boldsymbol{\theta}^0).
            \end{align}
        \end{assum}
        Here $\theta_i$ are the de Finetti parameters for the exchangeable sources. We show in \cref{subsec:app_exchg_tcl} that $p_u(\theta_i)$  cannot be a delta distribution; otherwise, the variability condition of \gls*{tcl} is violated. Thus, the identifiability of \gls*{tcl} hinges on exchangeable non--i.i.d.\ sources (we prove the same for conditionally-exponential sources in GCL~\citep{hyvarinen_nonlinear_2019}, \cf \cref{cor:gcl_exchg_cond_exp} in \cref{subsec:app_exchg_gcl}): 

         \begin{restatable}{lem}{lemtcl}[TCL is identifiable due to exchangeable non--i.i.d.\ sources]\label{lem:tcl}
            The sufficient variability condition in \gls*{tcl} corresponds to cause variability, \ie, exchangeable non--i.i.d.\ source variables with a fixed mixing function, which leads to the identifiability of the latent sources. 
        \end{restatable}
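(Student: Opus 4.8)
The plan is to show that the \gls*{tcl} generative model is an instance of the ICA probabilistic model in \cref{eq:repr} with a fixed mixing, and that \cref{assum:suff_var} is equivalent to the source-side de Finetti prior being non-degenerate — which is precisely the cause-variability configuration of \cref{fig:cdf_cause}. First I would rewrite the \gls*{tcl} conditional $\log p(\mathbf{s}\mid u)=\sum_i q_i(s_i,u)$ as a product of one-parameter exponential families $\prod_i p(s_i\mid\theta_i)$ with natural parameter $\theta_i=\theta_i(u)$; marginalising over the (categorical) segment index $u\sim p(u)$ then yields $p(\mathbf{s})=\int\prod_i\brackets{p(s_i\mid\theta_i)\,p_u(\theta_i)}\,d\boldsymbol{\theta}$, i.e.\ exactly the de Finetti mixture of \cref{eq:repr}, with $\theta_i$ the de Finetti parameter and $p_u(\theta_i)$ its mixing measure. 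Since the mixing $f:\mathbf{s}\to\mathbf{o}$ is shared across segments, $p(\boldsymbol{\psi})=\delta_{\boldsymbol{\psi}_0}(\boldsymbol{\psi})$, so the only cross-environment variability sits in the $\theta_i$: this is the ``cause variability'' pattern of \cref{subsec:exchg_cd}, transported to the representation-learning side.

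Second, I would establish the equivalence ``sufficient variability $\Leftrightarrow$ $p_u(\theta_i)$ is not a delta distribution for each $i$''. The direction that is actually needed is the contrapositive and is immediate: if $p_u(\theta_k)=\delta_{\theta_{k,0}}(\theta_k)$ for some coordinate $k$, then $\theta_k^j=\theta_k^0$ for every environment $j$, so the $k$-th column of $\mat{L}\in\rr{(E-1)\times d}$ — whose $j$-th row is $\boldsymbol{\theta}^j-\boldsymbol{\theta}^0$ — is identically zero, hence $\mat{L}$ cannot have full column rank, contradicting \cref{assum:suff_var}. Thus sufficient variability forces every $p_u(\theta_i)$ to be non-degenerate, i.e.\ the sources form an exchangeable \emph{non}--i.i.d.\ sequence (the i.i.d.\ case being the delta-prior collapse of \cref{sec:preliminaries}). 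For the converse (non-degenerate marginals $\Rightarrow$ existence of $d+1$ segments making $\mat{L}$ full column rank) I would appeal, with $E-1\ge d$ segments available, to a general-position/faithfulness-type argument analogous to the one used for \cref{thm:extendcdf}.

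Third, having placed the \gls*{tcl} setting inside \gls*{iem} as cause variability, I would invoke the original \gls*{tcl} identifiability theorem of \citet{hyvarinen_unsupervised_2016}: under \cref{assum:suff_var} the learned feature extractor recovers the sufficient statistics $\tilde{q}_i(s_i)$ up to an invertible linear transformation and a permutation, so the latent sources are identified. Chaining the three steps gives the lemma: exchangeable non--i.i.d.\ sources with a fixed mixing (cause variability) $\Leftrightarrow$ sufficient variability $\Rightarrow$ identifiability of $\mathbf{s}$.

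I expect the main obstacle to be the forward direction of the equivalence in the second step: going from ``each $p_u(\theta_i)$ is non-delta'' to ``there exist $d+1$ realisations of $u$ with $\mat{L}$ of full column rank'' does not follow purely from the marginal behaviour of the individual coordinates — the coordinate-wise variation must combine into joint rank, which requires either a genericity assumption on the attainable parameter vectors $\{\boldsymbol{\theta}^j\}$ or the same faithfulness-style hypothesis invoked in \cref{thm:extendcdf}. Since only the reverse implication (sufficient variability $\Rightarrow$ non-degenerate prior $\Rightarrow$ non--i.i.d.) is strictly needed to justify that \gls*{tcl}'s identifiability is \emph{due to} exchangeable non--i.i.d.\ sources, I would state the forward direction under that mild genericity caveat rather than attempt a fully general argument.
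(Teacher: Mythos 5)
Your proposal is correct and follows essentially the same route as the paper's proof: the needed direction (sufficient variability $\Rightarrow$ non-delta priors) via the observation that a delta-distributed coordinate forces a degenerate column in $\mat{L}$, the converse via a genericity/measure-zero argument, and the identifiability conclusion by invoking the original TCL theorem of \citet{hyvarinen_unsupervised_2016}. The obstacle you flag in the forward direction is real, and the paper resolves it exactly as you anticipate --- by asserting that the $\theta_i$ are continuous random variables, so the configurations violating the rank condition form a null set.
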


    \paragraph{Extending \gls*{tcl} via the cause--mechanism variability duality.}
        \begin{wrapfigure}{r}{6.72cm}
            \centering
            \begin{tikzpicture}[node distance=.4cm, font=\small, scale=.05]
    
    \node[draw, circle] (n1) {$S^1$};
    \node[draw, circle, right=of n1] (x1) {$O^1$};

    \draw[-{Latex[length=2mm]}] (n1) --node[above]{$f$} (x1);

    \node[draw, circle, above=of n1] (theta) {$\theta(u)$};

    \node[draw, circle, above=of theta] (n2) {$S^2$};
    \node[draw, circle, right=of n2] (x2) {$O^2$};

    \draw[-{Latex[length=2mm]}] (n2) --node[above]{$f$} (x2);

    \draw[-{Latex[length=2mm]}] (theta) -- (n1);
    \draw[-{Latex[length=2mm]}] (theta) -- (n2);

    \node[draw, circle, right=of theta, xshift=3cm] (n) {$S$};
    \node[draw, circle, right=of x1,  xshift=.4cm] (x11) {$O^1$};

    \draw[-{Latex[length=2mm]}] (n) --node[right,yshift=-0.08cm] {$\hat{f}(u_1)$} (x11);

    \node[draw, circle, right=of x2, xshift=.4cm] (x21) {$O^2$};

    \draw[-{Latex[length=2mm]}] (n) --node[right,yshift=0.08cm] {$\hat{f}(u_2)$} (x21);

    \coordinate (midpo1) at ($(x1.south)!0.5!(x11.south)$);
    \coordinate (midpo2)  at ($(x2.north)!0.5!(x21.north)$);
    \draw [dashed] (midpo1) -- (midpo2);

\end{tikzpicture}
            \caption{\textbf{The duality of cause and mechanism variability in TCL:} \cref{lem:dual_cause_mech} shows that the same identifiability result holds in  \textbf{(Left):} the original TCL setting with exchangeable non--i.i.d.\ sources $S$ with deterministic $f$ mixing (cause variability), and the matching \textbf{(Right):} \acrshort*{iid} sources $S$ with a stochastic $\hat{f}(u)$ mixing (mechanism variability) 
            } 
            \label{figure:tcl_duality}
        \end{wrapfigure}
    
        We next demonstrate the flexibility of the \gls*{iem} framework as it relates the probabilistic model for TCL to that of bivariate CdF~\eqref{eq:cdf}. Treating the observations $\mathbf{o}$ as the ``effect", and the source vector $\mathbf{s}= \brackets{s_1, \dots, s_d}$  as the ``cause", \eqref{eq:repr} becomes equal to \eqref{eq:cdf} when $\mathbf{X}=\mathbf{S}$ and $\mathbf{Y}=\mathbf{O}$. 
        As in auxiliary-variable ICA the mixing function $f$ is \textit{deterministic}, it constitutes  
        ``cause variability" (\cref{fig:cdf_cause}).
        Our extension of the \gls*{cdf} theorem in \cref{thm:extendcdf} shows a symmetry between cause and mechanism variability: flipping the arrows and relabeling $X/Y$ and $\theta/\psi$ transforms one case into the other (\cf \cref{fig:fig2} in \cref{subsec:app_extendcdf}). Our insight is that identification can be achieved both with cause variability or mechanism variability. This not only holds for \gls*{cd}, 
        leading to a dual formulation of \gls*{tcl} with mechanism variability. We illustrate this in an example, then state our result (\cf \cref{subsec:app_dual} for the proof):
          \begin{example}[Duality of cause and mechanism variability for  Gaussian models]\label{ex:duality}
             Assume conditionally independent latent sources with variance-modulated Gaussian components, \ie, $p(\mathbf{s}|u) = \prod_i p_i(s_i|u),$ where each $p_i(s_i|u) = \normal{\mu_i}{\sigma_i^2(u)},$ depending on auxiliary variable $u$. In this case, the observation distribution is the pushforward of $p(s|u)$ by $f$, denoted as $f_*p(s|u)$. 
             For given $\sigma_i^2(u)$ and $f$, where $\Sigma^2(u) = \diag{\sigma^2_1(u), \dots, \sigma^2_n(u)}$, we can find stochastic functions $\hat{f} = f\circ \Sigma(u)$ such that the pushforward $f_*p(s|u) = f_*\normal{\boldsymbol{\mu}}{\Sigma^2(u)}$ equals to $\hat{f}_*\normal{\boldsymbol{\mu}}{{\Id{}}}$. By construction, $\hat{f}$ varies with $u$ and satisfies mechanism variability. 
        \end{example}

        \begin{restatable}{lem}{lemdualcausemech}[Duality of cause and mechanism variability for \gls*{tcl}]\label{lem:dual_cause_mech}
            For a given deterministic mixing function $f:\mathbf{s}\to \mathbf{o}$ and conditionally factorizing (non-stationary) latent sources $p(\mathbf{s}|u) = \prod_i p_i(s_i|u)$ fulfilling the sufficient variability of \gls*{tcl}, there exists an equivalent setup with stationary (\acrshort*{iid}) sources $p(\mathbf{s}) = \prod_i p_i(s_i)$ with stochastic functions $\hat{f} = f\circ g : \mathbf{s}\to \mathbf{o},$ where $g = g(u)$ and each component $g_i$ is defined as an element-wise function such that the pushforward of $p_i(s_i)$ by $g_i$ equals $p_i(s_i|u)$, \ie, $g_{i*}p_i(s_i) = p_i(s_i|u).$ Then, $g_{i*}p_i(s_i)$ fulfils the same variability condition; thus, the same identifiability result applies.
        \end{restatable}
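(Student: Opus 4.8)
The plan is to establish the lemma in three steps: (i) construct the element-wise reparametrization $g=g(u)$ explicitly and check that the stationary-source model with mixing $\hat f=f\circ g$ reproduces the same observation law in every environment; (ii) verify that the sufficient-variability condition of \cref{assum:suff_var} is left unchanged by this reparametrization; and (iii) transfer the \gls*{tcl} identifiability guarantee, then read off the mechanism-variability picture via the cause/mechanism symmetry of \cref{thm:extendcdf}.

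\emph{Step (i): construction and generative equivalence.} For each coordinate $i$ and each value $u$, both $p_i(s_i\mid u)$ and the chosen stationary density $p_i(s_i)$ are univariate and, under the regularity assumptions of \gls*{tcl}, strictly positive on a connected support; hence their CDFs $F_i(\cdot\mid u)$ and $F_i$ are strictly increasing. I would define the monotone transport $g_i(\cdot\mid u):=F_i^{-1}(\cdot\mid u)\circ F_i$, a $C^1$ increasing bijection between the supports satisfying $g_{i*}p_i(s_i)=p_i(s_i\mid u)$ by change of variables, and stack these into the coordinate-wise map $g(\cdot\mid u)=(g_1(\cdot\mid u),\dots,g_d(\cdot\mid u))$, which is a diffeomorphism, so that $\hat f(\cdot\mid u):=f\circ g(\cdot\mid u)$ is again a smooth invertible mixing; in the Gaussian case this is exactly $g(\cdot\mid u)=\Sigma(u)$ and $\hat f=f\circ\Sigma(u)$ of \cref{ex:duality}. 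Since pushforwards compose, $\hat f(\cdot\mid u)_*\big(\prod_i p_i\big)=f_*\big(g(\cdot\mid u)_*\prod_i p_i\big)=f_*\big(\prod_i p_i(\cdot\mid u)\big)=p_u(\mathbf o)$, so the stationary-source/stochastic-mixing model induces exactly the per-environment observation distribution of the original \gls*{tcl} model.

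\emph{Step (ii): variability is preserved.} The matrix $\mat L\in\rr{(E-1)\times d}$ of \cref{assum:suff_var} is assembled solely from the exponential-family modulation parameters $\boldsymbol\theta^j$ of the conditional source densities $p_i(s_i\mid u)$ and makes no reference to the mixing. In the dual model the effective conditional density in environment $u$ is $g_{i*}p_i=p_i(\cdot\mid u)$ by construction --- the same one-parameter exponential family, same sufficient statistic $\tilde q_i$, same modulation parameter $\theta_i(u)$; the modulation has merely moved from the source into the element-wise pre-factor of the mixing. Hence $\mat L$ is literally identical and still has full column rank, equivalently $p_u(\theta_i)$ remains non-delta, which is what that rank condition encodes (\cf \cref{lem:tcl}). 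This is the step I expect to require the most care: one must argue that taking $\hat f=f\circ g(u)$ with $g(u)$ \emph{diagonal} --- rather than an arbitrary $u$-dependent diffeomorphism --- is precisely what keeps the modulation component-aligned, so that the dual model's structural assumption is genuinely of the \gls*{tcl}/mechanism-variability type; without this, observation-law equivalence alone would not deliver identifiability.

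\emph{Step (iii): conclusion.} With $\{p_u(\mathbf o)\}_u$ identical and the conditionally-exponential, full-column-rank-modulation structure intact, the identifiability theorem underlying \gls*{tcl} \citep{hyvarinen_unsupervised_2016} applies verbatim to the dual model and recovers $\mathbf s$ up to the usual component-wise indeterminacies; by the arrow-flipping, $X\!\leftrightarrow\!Y$, $\theta\!\leftrightarrow\!\psi$ symmetry of \cref{thm:extendcdf}, this dual model is exactly the mechanism-variability instance of the \gls*{cdf}-type diagram (i.i.d.\ sources $\mathbf s$, stochastic $\hat f(u)$) of \cref{figure:tcl_duality}. The remaining loose ends --- regularity of $g_i$ at the boundary when supports are not full (handled by the positivity assumption or a limiting argument) and checking that $\hat f$ inherits whatever smoothness/injectivity \gls*{tcl} nominally requires --- are routine given the construction.
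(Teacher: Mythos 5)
Your proof is correct and follows the same conceptual route as the paper: the duality is obtained by reassigning the source of non-stationarity from the latent distribution to the mixing function. The paper's own proof, however, consists of exactly that one observation stated informally ("it is a modelling choice which component provides the source of non-stationarity"), with the Gaussian case of \cref{ex:duality} as the only concrete instance. What you add --- and what makes your write-up strictly more substantive --- is the explicit element-wise monotone transport $g_i(\cdot\mid u)=F_i^{-1}(\cdot\mid u)\circ F_i$, the verification that pushforwards compose so the per-environment observation laws coincide, and the observation in your Step (ii) that $g(u)$ must be \emph{diagonal} for the modulation to stay component-aligned so that the dual model is still of the \gls*{tcl} type. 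That last point is a genuine subtlety the paper glosses over: observation-law equivalence alone would not transfer identifiability if the $u$-dependent reparametrization were allowed to mix coordinates. Your version is the proof the paper arguably should have written.
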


        \cref{lem:dual_cause_mech} shows that both cause and mechanism variability lead to representation identification in TCL, visualized in \cref{figure:tcl_duality}. We illustrate the practical differences between cause and mechanism variability in the medical example of learning representations from fMRI data~\citep{hyvarinen_unsupervised_2016,khemakhem_variational_2020}.
        Cause variability means having access to data from patients with different underlying conditions; mechanism variability corresponds to measuring a single patient's condition with multiple diagnostic methods. 

\subsection{Exchangeability in \acrlong*{crl}}\label{subsec:exchg_crl}
    \acrfull*{crl} aims to learn the causal representations $\mathbf{Z}$ and their graphical structure from high-dimensional observations $\mathbf{O}$. That is, \gls*{crl} can be considered as performing representation learning for the latent causal variables and \gls*{cd} between those learned latent variables simultaneously.
    
    \paragraph{A probabilistic model for \gls*{crl}.}
    Auxiliary-variable \gls*{ica} considers the source distribution as $\prod_i p(s_i|\theta_i)$ 
    , where $s_i$ and $s_{j}$ are not causally related. 
    \gls*{crl} takes one step further and studies how to find causal dependencies between the latent causal variables. We show that \gls*{cdf} theorems apply just as de Finetti applies in \textit{exchangeable ICA}. 
     In \gls*{crl} the joint distribution factorizes  ($j$ indexes causal variables, we omit the sample superscript for brevity):
    \begin{align}
        p(\mathbf{z}, \mathbf{o}) = \int_{\boldsymbol{\theta}} \int_\psi p(\mathbf{o}|\mathbf{z},\psi)\prod_j\brackets{ p(z_j|\textbf{Pa}(z_j); \theta_j)p(\theta_j)}d\psi d\boldsymbol{\theta}, \label{eq:cauca}
    \end{align}
    where $\theta_j$ are the \gls*{cdf} parameters controlling each latent causal mechanism, leading to exchangeable causal variables that adhere to the ICM principle. Compared to exchangeable ICA~\eqref{eq:repr}, \cref{eq:cauca} allows that the modeled latent causal variables $z_i$ can depend on each other, whereas \gls{ica} does not model cause--effect relationships (\cref{fig:unified_crl}).

    \paragraph{Case study: CauCA in the unified model.}
        Causal Component Analysis (CauCA)~\citep{wendong_causal_2023} defines a subproblem of \gls*{crl} by assuming that the \gls*{dag} between the $z_i$ is known.
         \citet{wendong_causal_2023} show that identifying causal representations $z_i$ requires single-node (soft) interventions that change the probabilistic mechanisms $p(z_j|\textbf{Pa}(z_j))$ almost everywhere, which they quantify with the interventional discrepancy:
        \begin{assum}[Interventional discrepancy condition~\citep{wendong_causal_2023}]\label{def:interventional_discrepancy}
            Pairs of observational and single-node (soft) interventional densities $p, \tilde{p}$ need to differ almost everywhere, \ie:
            \begin{align}
                \derivative{}{z_j}\log\dfrac{\tilde{p}(z_j|\textbf{Pa}(z_j))}{p(z_j|\textbf{Pa}(z_j))} &\neq 0 \quad \text{a.e.}
            \end{align}
        \end{assum}

        Satisfying \cref{def:interventional_discrepancy} means an intervention on the parameters $\theta_j$ of the causal mechanisms $p(z_j | \textbf{Pa}(z_j); \theta_j)$ (we compare to \cref{assum:struct_assign_liu}  in \cref{subsec:app_liu_ident_exchg}). The following lemma follows from having interventions on the value of $\theta_j$ that fulfill \cref{def:interventional_discrepancy} (proof in \cref{subsec:app_exchg_cauca}):
    
        \begin{restatable}{lem}{lemcauca}[Non-delta priors in the causal mechanisms can enable identifiable CRL]\label{lem:cauca}
            
            If the interventional discrepancy condition \cref{def:interventional_discrepancy} holds, then the parameter priors in \eqref{eq:cauca} cannot equal a delta distribution, \ie, ${p(\theta_j)\neq \delta_{\theta_{j0}}(\theta_j)}$; thus, if the other conditions of CauCA hold, then, the causal variables $z_i$ are identifiable. For real-valued $\theta_j,$ non-delta priors also imply \cref{def:interventional_discrepancy} almost everywhere.
        \end{restatable}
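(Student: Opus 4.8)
The plan is to prove the lemma in two halves that mirror its statement, invoking the identifiability theorem of \cite{wendong_causal_2023} as a black box. First I would show that the interventional-discrepancy condition \cref{def:interventional_discrepancy} forces the priors $p(\theta_j)$ in \eqref{eq:cauca} to be non-degenerate; then I would feed this into CauCA's result to conclude identifiability of the $z_i$; finally I would establish the partial converse for real-valued $\theta_j$ by a genericity argument.

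For the first half I would argue by contraposition. Suppose $p(\theta_j) = \delta_{\theta_{j0}}(\theta_j)$ in \eqref{eq:cauca}. Then every sample of the causal variable $z_j$ is drawn from the single conditional $p(z_j \mid \textbf{Pa}(z_j); \theta_{j0})$, so the only mechanism present at node $j$ in the IEM model is the "observational" one; in the notation of \cref{def:interventional_discrepancy} the observational and the (would-be) interventional densities coincide, $\tilde p(z_j\mid \textbf{Pa}(z_j)) = p(z_j \mid \textbf{Pa}(z_j))$, hence $\partial_{z_j}\log\bigl(\tilde p(z_j\mid\textbf{Pa}(z_j))/p(z_j\mid\textbf{Pa}(z_j))\bigr) = 0$ everywhere, contradicting \cref{def:interventional_discrepancy}. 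Therefore $p(\theta_j)\neq\delta_{\theta_{j0}}(\theta_j)$, i.e., the prior is a genuine mixture. Identifiability of the causal variables then follows immediately: the hypotheses of CauCA are precisely a known DAG among the $z_i$, the per-node interventional-discrepancy condition, and the remaining regularity conditions of \cite{wendong_causal_2023}; since the first and third are assumed ("the other conditions of CauCA hold") and the second holds by hypothesis, their theorem yields identifiability of $z_i$ up to the CauCA equivalence class, which is the conclusion.

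For the converse — that for real-valued $\theta_j$ a non-delta prior implies \cref{def:interventional_discrepancy} almost everywhere — I would pick two parameter values $\theta_j \neq \theta_j'$ in the support of $p(\theta_j)$ (a non-delta prior on a real variable charges at least two distinct values) and show $\partial_{z_j}\log\bigl(p(z_j\mid\textbf{Pa}(z_j);\theta_j')/p(z_j\mid\textbf{Pa}(z_j);\theta_j)\bigr)\neq 0$ for Lebesgue-a.e.\ $z_j$. If this derivative vanished on a set of positive measure, then on that set $\log p(z_j\mid\cdot\,;\theta_j') - \log p(z_j\mid\cdot\,;\theta_j)$ is locally constant in $z_j$; under the standard regularity already used by CauCA ($C^1$ conditionals and a non-degenerate, e.g.\ real-analytic, dependence on $\theta_j$, so that distinct $\theta$'s give log-densities whose difference is not affine in $z_j$ on a positive-measure set) this forces $\theta_j = \theta_j'$, a contradiction. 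Hence the discrepancy holds for a.e.\ $z_j$ for every such pair, and the IEM/CauCA bookkeeping is handled by noting that two draws from the support of a non-delta prior play symmetric roles, so any such pair supplies an observational/interventional pair as CauCA requires.

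The main obstacle I anticipate is this converse direction: "non-delta prior" alone is a very weak statement and does not literally imply the pointwise-derivative condition without a genericity/identifiability assumption on how $\theta_j$ enters $p(z_j\mid\textbf{Pa}(z_j);\theta_j)$ — a degenerate parametrization (say $\theta_j$ only shifting an additive normalizing term, or a family whose log-ratios are affine in $z_j$ on a set of positive measure) would violate it. I would therefore make explicit the mild regularity CauCA already presupposes and under which the implication is routine; the "almost everywhere" in the statement is exactly what absorbs the measure-zero exceptional set in $z_j$ (and, if necessary, a measure-zero set of parameter pairs).
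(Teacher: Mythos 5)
Your proposal matches the paper's proof in both structure and substance: the forward direction is the same contraposition (a delta prior forces $\tilde{\theta}_j = \theta_{j0}$, so the observational and interventional densities coincide and the log-ratio derivative vanishes, contradicting \cref{def:interventional_discrepancy}), and the identifiability claim is, as in the paper, delegated wholesale to the CauCA theorem. The one place you go beyond the paper is the converse: the paper's own argument stays entirely at the parameter level, showing only that $\mathrm{Pr}[\tilde{\theta}_j = \theta_j^0] = 0$ for a (continuous) real-valued $\theta_j$ and treating "distinct parameters" as synonymous with "the discrepancy holds," whereas you explicitly identify and attempt to close the remaining step — that a degenerate parametrization could make distinct $\theta_j$'s yield log-ratios that are constant in $z_j$ on a positive-measure set — by adding a genericity assumption on how $\theta_j$ enters the conditional. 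That caveat is real and is left implicit in the paper, so your version of the converse is the more careful one; it does, however, require stating a regularity hypothesis that the lemma as written does not contain.
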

        
        \cref{lem:cauca} says that when the interventional discrepancy condition is satisfied, then a change in $p(\theta)$ must have occurred. This provides a sufficient criterion to determine when multi-environment data enables representation identification. However, as \cref{def:interventional_discrepancy} is formulated as an almost everywhere condition, the reverse does not necessarily hold---\eg, for discrete \glspl{rv} such as when $\theta_j$ follows a Bernoulli distribution (\cref{rem:cauca_non_delta_not_implies}). Thus, we prove the reverse for real-valued  $\theta_j$.

    \paragraph{Towards the simultaneous identifiability of $\mathbf{S}, \mathbf{Z},$ and the \gls{dag}.}
        We finish our discussion of \gls{iem} by illustrating how the joint treatment of structure and representation identifiability can be possible with less environments than the two separate problems.
        As we have shown in \cref{subsec:exchg_unified}, it is possible to identify both sources and causal variables by two separate variability conditions~\citep{liu_identifiable_2024}. However, as \cref{assum:struct_assign_liu} requires variability of the structural assignments $g_i,$ it cannot be fulfilled by exchangeable sources, at least not for an \gls{anm}.
        Thus, we consider the most general identifiability result in \gls{crl} by \citep{jin_learning_2023}, which requires $\dim\mathbf{Z}$ single-node non-degenerate (in the sense of \cref{def:interventional_discrepancy}) soft interventions for generic nonparametric \glspl{sem}---\ie, when interventions on the exogenous variables change the observational density almost everywhere. Further restricting the sources to first-order conditional exponential family distributions, adding one more intervention can satisfy \cref{assum:suff_var}. Thus, we sidestep the requirement of having $(\dim\mathbf{Z}+1)$ different environments for \gls{ica}, and another $\dim\mathbf{Z}$ for \gls{crl}.
        Namely, by \cref{lem:cauca}, we know that when \cref{def:interventional_discrepancy} holds, then the parameter priors are non-degenerate. Then, by \cref{lem:tcl}, \cref{assum:suff_var} also holds. Thus (proof is in \cref{subsec:joint_crl}):
        \begin{restatable}{lem}{lemjointcrl}[Simultaneous identifiability via generic non-degenerate source priors]\label{lem:simultaneous_crl}
            Provided the assumptions of \citep[Thm.~4]{jin_learning_2023} hold with the restriction of the source variables' density belonging to the exponential family of order one, and assuming that the nonparametric structural assignments are generic such that single-node soft interventions on each $S_i$ satisfy \cref{def:interventional_discrepancy}, then $(\dim\mathbf{Z}+1)$ interventions can provide exchangeable data sufficient for the simultaneous identification of both exogenous and causal variables (and also the \gls{dag})---as opposed to $(2\dim\mathbf{Z}+1),$ where $\dim\mathbf{Z}$ separate environments are used for \gls{crl} and another $(\dim\mathbf{Z}+1)$ for \gls{ica}.
        \end{restatable}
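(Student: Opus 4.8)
The plan is to splice together three facts already in hand: the \gls*{crl} identifiability result of \citet[Thm.~4]{jin_learning_2023}, the \gls*{tcl} source-identifiability result expressed through the rank condition of \cref{assum:suff_var} (see \cref{lem:tcl}), and the bridge between non-degeneracy of an intervention and a non-delta de Finetti prior (\cref{lem:cauca}). Concretely, I would argue in four steps: (i) the $\dim\mathbf{Z}$ single-node soft interventions on the exogenous variables $S_j$ demanded by \citet{jin_learning_2023}, together with the observational reference environment, constitute $\dim\mathbf{Z}+1$ environments, which form a set of realizations of the auxiliary variable $u$ in the \gls*{ica} model \eqref{eq:repr}; (ii) on these environments the modulation-parameter matrix $\mat{L}$ has full column rank, so \cref{assum:suff_var} is met; (iii) \cref{lem:tcl} then identifies $\mathbf{S}$ up to permutation and element-wise invertible transformations, while \citet[Thm.~4]{jin_learning_2023} identifies $\mathbf{Z}$ together with the latent \gls*{dag} on the very same data, so all three are identified jointly; (iv) counting environments yields $\dim\mathbf{Z}+1$ against $2\dim\mathbf{Z}+1$.

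The crux is step (ii). A single-node soft intervention on the exogenous variable $S_j$ alters only its own marginal, hence only its order-one natural parameter $\theta^s_j$, leaving each $\theta^s_k$ with $k\neq j$ unchanged by mutual independence of the exogenous variables; it also leaves the structural assignments $g_i$ and the mixing $f$ intact, so the effective mixing $\tilde f = f\circ(\text{\gls*{sem}-solution map})$ is shared across environments and we are in the cause-variability regime of \cref{fig:cdf_cause}. Ordering the interventional environments so that environment $j$ intervenes on $S_j$, the $j$-th row of $\mat{L}$ equals $\boldsymbol{\theta}^j-\boldsymbol{\theta}^0$ and is supported on coordinate $j$; hence $\mat{L}$ is $\dim\mathbf{Z}\times\dim\mathbf{Z}$ and diagonal, and it is invertible exactly when $\theta^s_j$ genuinely differs between environment $j$ and the reference. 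This is the exponential-family computation underlying \cref{lem:cauca}: for order-one sources, $\partial_{s_j}\log\frac{\tilde p(s_j)}{p(s_j)} = \tilde q_j'(s_j)\,(\tilde\theta^s_j-\theta^s_j)$, so the assumed non-degeneracy of the intervention (\cref{def:interventional_discrepancy}) forces $\tilde\theta^s_j\neq\theta^s_j$ provided $\tilde q_j'$ is not a.e.\ zero, which holds for a genuine order-one sufficient statistic. Thus $\mat{L}$ has full column rank with $E-1=\dim\mathbf{Z}$ rows, which is exactly \cref{assum:suff_var}.

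Step (iii) is bookkeeping: applying \cref{lem:tcl} on the $\dim\mathbf{Z}+1$ environments returns $\mathbf{S}$ up to the usual \gls*{tcl} indeterminacies, and applying \citet[Thm.~4]{jin_learning_2023} on the $\dim\mathbf{Z}$ interventions returns $\mathbf{Z}$ and the graph up to the matching \gls*{crl} equivalence; since both indeterminacy groups are of the permute-and-reparametrize-coordinatewise type and act compatibly along $\mathbf{S}\to\mathbf{Z}\to\mathbf{O}$, the two statements hold simultaneously. For the comparison, carrying out the two programs separately requires $\dim\mathbf{Z}$ interventions plus an observational environment for the \gls*{crl} step and $\dim\mathbf{Z}+1$ environments for the \gls*{ica} step, i.e.\ $2\dim\mathbf{Z}+1$ once the observational environment is shared, whereas the argument above reuses the very same $\dim\mathbf{Z}$ single-node interventions for both purposes and needs only the one extra (observational) environment to close the rank condition.

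I expect the main obstacle to be pinning down the genericity used in step (ii): verifying that a ``single-node soft intervention on $S_j$'' really leaves the other natural parameters, the structural maps, and $f$ untouched (so $\mat{L}$ is diagonal with a shared mixing), and ruling out the degenerate cases where $\tilde q_j'$ vanishes on a positive-measure set or where two interventions conspire to make $\mat{L}$ rank-deficient --- these are exactly what the ``generic nonparametric \gls*{sem}'' and order-one exponential-family hypotheses exclude, so the effort is in showing those hypotheses suffice rather than in a new argument. A secondary point is matching the precise form of the interventions in \citet{jin_learning_2023} (on the latent causal variables) with our ``interventions on exogenous variables'' reading; for \glspl*{anm} the two coincide, which is why the lemma is stated in that regime.
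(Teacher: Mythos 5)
Your proposal is correct and follows essentially the same route as the paper's own proof: chain \cref{lem:cauca} (interventional discrepancy $\Leftrightarrow$ non-delta source priors), \cref{lem:tcl} (non-delta priors $\Rightarrow$ \cref{assum:suff_var}), and \citep[Thm.~4]{jin_learning_2023}, then count environments. Your step (ii) is in fact more explicit than the paper's argument --- you show $\mat{L}$ is diagonal with nonzero entries because each single-node intervention perturbs exactly one order-one natural parameter, whereas the paper only invokes the almost-sure full-rank argument inside \cref{lem:tcl} --- which strengthens rather than changes the proof.
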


    \vspace{-0.4em}

\section{Discussion and future directions} 
\label{sec:unified}
\vspace{-0.65em}
    Our work unifies several \acrfull{cd}, \acrfull{ica}, and \acrfull{crl} methods with the lens of exchangeability. Next, we answer the question:
    \vspace{-0.3em}
    \begin{center}
        \textit{What do we gain from the \acrfull{iem} framework?}
    \end{center}
    \vspace{-0.2em}
    The motivation of introducing \gls{iem} is to provide a unified model that eases understanding and discovery of the synergies between representation and structure identifiability. Our work leverages IEM to relax conditions of general exchangeability to cause and mechanism variability for enabling both structure and representation identifiability. 
    Exchangeability can also model both the passive notion of data variability posited in the ICA literature (\eg, \cref{assum:suff_var}) and the active, agency-based notion of diverse interventions (\eg, \cref{def:interventional_discrepancy}). We provide a detailed comparison of the assumptions in both fields in \cref{tab:method_comp} in \cref{sec:related}.
    By interpreting the variability in ICA as coming from interventions on the exogenous variables, \gls{iem} explains why ICA can allow for causal inferences.
    Namely, assuming that the observations correspond to the causal variables and using ICA to recover the source (exogenous) variables, we can infer the causal graph depending on the identifiability class, as shown by \citet{reizinger_jacobian-based_2023}.

   In the following, we show how exchangeability can model \acrshort{iid}, \gls{ood} and interventional distributions (\cref{subsec:exchg_for_iid_ood}), discuss the general conditions that allow for identifiability in the \gls{iem} setting (\cref{subsec:general_ident_cond}), and
    detail three additional directions where we believe \gls{iem} can open up new possibilities (\cref{subsec:ident_components,subsec:gap_cause_mech_var,subsec:char_noniid}).

    \subsection{Exchangeability for modeling \acrshort{iid}, \acrshort{ood}, and interventional data}\label{subsec:exchg_for_iid_ood}
    \vspace{-0.4em}
        By de Finetti's theorem~\citep{definetti}, the joint distribution of exchangeable data can be represented as a mixture of \acrshort{iid} distributions $p(x_i|\theta),$ where $\theta$ is drawn from a prior distribution~\eqref{eq:mixture_iid}.
        In the special case of $p(\theta)=\delta_{\theta_0}$ we get \acrshort{iid} samples. \cite{guo_finetti_2024} studies how intervention propagates in an exchangeable sequence. Here we note that exchangeability may be a natural choice for modelling \acrshort{ood} and interventional data. For example, when we assume access to multiple environments---where each environment has a distinct parameter drawn from $p(\theta)$: OOD and interventions can be analogusly modelled as a shift in $\theta$, i.e., data in a novel or intervened environment is drawn from $p(x \mid \theta_1)$ instead of $p(x \mid \theta_0)$, where $\theta_1\neq \theta_0$ (\cf the intution in \cref{subsec:exchg_unified} for an example). 
\vspace{-0.3em}
    \subsection{General conditions for identifiability in the \gls{iem} setting}\label{subsec:general_ident_cond}
    \vspace{-0.4em}
        When introducing \gls{iem}, we focused on exchangeability as a driving factor for structure and representation identifiability. However, theoretical guarantees usually require further assumptions. Here we discuss the general set of assumptions required for identifiabiltiy of the causal structure, the causal variables, and the exogenous (source) variables. 
        We review such assumptions in \cref{tab:method_comp} in \cref{sec:related}.
        
        In the case of exchangeable data, we can characterize the best achievable identifiability results as:
        \vspace{-0.7em}
        \paragraph{Causal structure (DAG).}
        Observed causal variables under faithfulness and cause or mechanism variability are necessary and sufficient to identify the \gls{dag}~(\cref{thm:extendcdf}).
\vspace{-0.7em}
        \paragraph{Causal variables ($\mathbf{Z}$).}
        Assuming independent exogenous variables and a diffeomorphic mixing function is sufficient to identify the causal variables up to elementwise nonlinear transformations when we have access to $\dim \mathbf{Z}$ single-node soft interventions with unknown targets~\citep{jin_learning_2023}. \looseness=-1
\vspace{-0.7em}
        \paragraph{Exogenous (source) variables ($\mathbf{S}$).}
        Having exchangeable sources and a surjective feature extractor are sufficient to achieve identifiability up to element-wise nonlinear transformations if the feature extractor is either positive or is augmented by squared features~\citep{khemakhem_ice-beem_2020}. 
        \vspace{-0.3em}

    \subsection{Identifying components of causal mechanisms} \label{subsec:ident_components}%
    \vspace{-0.4em}
        Causal mechanisms are composed of exogenous variables and structural assignments.  \gls*{cdf} proves the existence of a statistically independent latent variable per causal mechanism.  \cref{lem:liu_disent_cdf_params} shows that for \glspl*{anm}, such latent \glspl*{rv} can be decomposed into separate variables controlling exogenous variables and structural assignments. \citet{wang_direct_2018}, for example, performs multi-environment \gls*{cd} via changing only the weights in the linear \gls*{sem} across environments, which corresponds to changing the mechanism parameters $\theta^g$. \citet{liu_identifiable_2024} showed how changing both parameters leads to latent source and causal structure identification. This suggests that partitioning the \gls*{cdf} parameters into mechanism and source parameters can be beneficial to identifying individual components of causal mechanisms. \looseness=-1

        \vspace{-0.4em}

    \subsection{Cause and mechanism variability: potential gaps and future directions}\label{subsec:gap_cause_mech_var}
    \vspace{-0.4em}
        We relax the assumptions for bivariate \gls*{cd} (\cref{subsec:exchg_cd}) by noticing that changing only either cause or mechanism leads to identifiability, which we term cause and mechanism variability. %
        We further showed with \gls*{tcl} how \gls*{ica} methods---which usually belong to the cause variability category---can be equivalently extended to mechanism variability (\cref{lem:dual_cause_mech}). This dual formulation, though mathematically equivalent, presents new opportunities in practice. 
        Existing work in the \gls*{ica} literature have focused on identification through variation in the sources with a single deterministic mixing function $f: \mathbf{s} \to \mathbf{o}$, where functional constraints are used for identifiability~\citep{gresele_independent_2021,lachapelle_additive_2023,brady_provably_2023,wiedemer_provable_2023,wiedemer_compositional_2023}. Multi-view ICA~\citep{gresele_incomplete_2019}, on the other hand, might be related to mechanism variability---we leave investigating this connection to future work.

\vspace{-0.5em}
    
    \subsection{Characterizing degrees of non--\acrshort*{iid} data}\label{subsec:char_noniid}
    \vspace{-0.4em}
    Existing work have developed multiple criteria to characterize non-i.i.d.\ data from out-of-distribution \citep{datasetshift2009,scholkopf_causal_2012,  arjovsky_invariant_2020} to out-of-variable \citep{guoout} generalization. Here we assay common criteria for identifiability and highlight potential gaps. Often identification conditions are descriptive with no clear practical guidance in quantifying how and when to induce non-i.i.d.\ data. Metric computation is also difficult in practice.  %
    \vspace{-0.6em}
        \paragraph{Rank conditions} such as \cref{assum:suff_var} in \gls*{tcl}~\citep{hyvarinen_unsupervised_2016}, our running example for \gls*{ica}, uses a rank-condition to prove identifiability. \Cref{assum:suff_var} expresses that the multi-environment data is non--\acrshort*{iid}
        However, full-rank matrices can differ to a large extent, \eg, by their condition number, which affects numerical stability, thus, matters in practice~\citep{rajendran_interventional_2023}. 
        We expect that the condition number could be used to develop bounds for the required sample sizes in practice---an aspect generally missing from the identifiability literature, as most works assume access to infinite samples, with the exception of~\citep{lyu_finite-sample_2022}. 

\vspace{-0.6em}
        \paragraph{Derivative conditions} such as interventional discrepancy~\citep{wendong_causal_2023} require that between environments, there is a non-trivial (\ie, non-zero measure) shift between the causal mechanisms---\ie, the data is not \acrshort*{iid} The similarity between interventional discrepancy and the derivative-based condition on the structural assignments in \citep{liu_identifiable_2024} (\cref{assum:struct_assign_liu}) also has an interesting interpretation: \citet{liu_identifiable_2024} does not require interventional data \textit{per se}, only non--\acrshort*{iid} data that is akin to being generated by a \gls*{sem} that was intervened on. This assumption is similar to the concepts of Mendelian randomization~\citep{Didelez2007} or natural experiments~\citep{Angrist1991,Imbens1994}, which assume that an intervention not controlled by the experimenter (but by, \eg, genetic mutations) provides sufficiently diverse data.
\vspace{-0.6em}
        \paragraph{Mechanism shift-based conditions} quantify the number of shifted causal mechanisms.  The distribution shift perspective was already present in, \eg, \citep{ZhaGonSch15,arjovsky_invariant_2020}.  \citet{perry_causal_2022} explore the \gls*{sms} hypothesis \citep{scholkopf2022causality}, postulating that domain shifts are due to a sparse change in the set of mechanisms. Their \gls*{mss} counts the number of changing conditionals, which is minimal for the true \gls*{dag}.
        \citet{richens_robust_2024} characterize mechanism shifts for causal agents solving decision tasks. Their condition posits that the agent's optimal policy should change when the causal mechanisms shift. \looseness = -1

\vspace{-0.5em}
\section{Conclusion}
\label{sec:conclusion}
\vspace{-0.5em}
    We introduced \acrfull*{iem}, a unifying framework that captures a common theme between causal discovery, representation learning, and causal representation learning: access to exchangeable non--i.i.d.\ data. We showed how particular causal structure and representation identifiability results can be reframed in \gls*{iem} as exchangeability conditions, from the \acrlong*{cdf} theorem through auxiliary-variable \acrlong*{ica} and Causal Component Analysis.
    Our unified model also led to new insights: we introduced cause and mechanism variability as a special case of exchangeable but not-\acrshort*{iid} data, which led us to provide relaxed necessary and sufficient conditions for causal structure identification (\cref{thm:extendcdf}), and to formulate  identifiability results for mechanism variability-based time-contrastive learning (\cref{lem:dual_cause_mech})
    We acknowledge that our unified framework might not incorporate all identifiable methods. However, by providing a formal connection between the mostly separately advancing fields of causality and representation learning, more synergies and new results can be developed, just as \cref{thm:extendcdf} and \cref{lem:dual_cause_mech}. This, we hope, will inspire further research to investigate the formal connection between these fields.

\subsubsection*{Acknowledgments}
The authors thank Luigi Gresele and anonymous reviewers for insightful comments and discussions.
This work was supported by a Turing AI World-Leading Researcher Fellowship G111021. Patrik Reizinger acknowledges his membership in the European Laboratory for Learning and Intelligent Systems (ELLIS) PhD program and thanks the International Max Planck Research School for Intelligent Systems (IMPRS-IS) for its support. This work was supported by the German Federal Ministry of Education and Research (BMBF): Tübingen AI Center, FKZ: 01IS18039A. Wieland Brendel acknowledges financial support via an Emmy Noether Grant funded by the German Research Foundation (DFG) under grant no. BR 6382/1-1 and via the Open Philantropy Foundation funded by the Good Ventures Foundation. Wieland Brendel is a member of the Machine Learning Cluster of Excellence, EXC number 2064/1 – Project number 390727645. This research utilized compute resources at the Tübingen Machine Learning Cloud, DFG FKZ INST 37/1057-1 FUGG.

\bibliography{references,references2}
\bibliographystyle{iclr2025_conference}

\newpage
\appendix
\onecolumn
\section{Proofs and extended theory}
\label{sec:app_proofs}

    \begin{figure}
            \begin{subfigure}[b]{0.48\textwidth}
    		\centering
                \begin{tikzpicture}[node distance=1cm, font=\large, scale=.6]

    \node[draw, circle, scale=0.7] (z12) {
            \tikz[baseline,anchor=base,node distance=1.1cm]{
                \node[draw,circle, outer sep=0pt,inner sep=.2ex] (z111){$Z_1^1$};
                \node[draw,circle,right of=z111, outer sep=0pt,inner sep=.2ex] (z112) {$Z_2^1$};
                \node[draw,circle,above of=z111,outer sep=0pt,inner sep=.2ex] (s111){$S_1^1$};
                \node[draw,circle,above of=z112,outer sep=0pt,inner sep=.2ex] (s112){$S_2^1$};
                \draw[-{Latex[length=2mm]}] (s111) -- (z111);
                \draw[-{Latex[length=2mm]}] (s112) -- (z112);
                \draw[-{Latex[length=2mm]}] (z111) -- (z112);
            }
    };
    \draw[dotted] (z12.north) -- (z12.south);

    \node[draw, above = of z12, scale=0.9] (theta) {
        {
            \tiny $\scriptscriptstyle {\tikz[baseline,anchor=base]{\node (theta1) {\large{$\theta_1$}};}}
            {\quad \quad }
            {\tikz[baseline,anchor=base]{\node (theta2) {\large{$\theta_2$}};}}$
        }
    };
    \draw[-{Latex[length=2mm]}] (theta.220) -- (z12.112);
    \draw[-{Latex[length=2mm]}] (theta.315) -- (z12.70);

    \node[draw, circle, right=of z12] (x12) {$O^1$};
    \node[draw, above= of x12,shift={(0, 0.52)}] (phi) {$\psi$};

    \node[draw, circle, above = of theta, scale=0.7] (z22) {
            \tikz[baseline,anchor=base,node distance=1.1cm]{
                \node[draw,circle, outer sep=0pt,inner sep=.2ex] (z211){$Z_1^2$};
                \node[draw,circle,right of=z211, outer sep=0pt,inner sep=.2ex] (z212) {$Z_2^2$};
                \node[draw,circle,above of=z211,outer sep=0pt,inner sep=.2ex] (s211){$S_1^2$};
                \node[draw,circle,above of=z212,outer sep=0pt,inner sep=.2ex] (s212){$S_2^2$};
                \draw[-{Latex[length=2mm]}] (s211) -- (z211);
                \draw[-{Latex[length=2mm]}] (s212) -- (z212);
                \draw[-{Latex[length=2mm]}] (z211) -- (z212);
            }
    };
    \draw[dotted] (z22.north) -- (z22.south);

    \node[draw, circle, right=of z22] (x22) {$O^2$};

    \draw[-{Latex[length=2mm]}] (theta.140) -- (z22.248);
    \draw[-{Latex[length=2mm]}] (theta.45) -- (z22.290);

    \draw[-{Latex[length=2mm]}] (z12) -- (x12);
    \draw[-{Latex[length=2mm]}] (z22) -- (x22);

    \draw[-{Latex[length=2mm]}] (phi) -- (x12);
    \draw[-{Latex[length=2mm]}] (phi) -- (x22);

\end{tikzpicture}
                \caption{\acrlong*{iem}}
    	\end{subfigure}
            \vrule
            \begin{subfigure}[b]{0.48\textwidth}
    		\centering
                \begin{tikzpicture}[node distance=1cm, font=\large, scale=.6]

    \node[draw, circle, scale=0.7] (z12) {
            \tikz[baseline,anchor=base,node distance=1.1cm]{
                \node[draw,circle, outer sep=0pt,inner sep=.2ex,fill=figred!50!white] (z111){$Z_1^1$};
                \node[draw,circle,right of=z111, outer sep=0pt,inner sep=.2ex,fill=figred!50!white] (z112) {$Z_2^1$};
                \node[draw,circle,above of=z111,outer sep=0pt,inner sep=.2ex,fill=gray,opacity=0.3] (s111){$S_1^1$};
                \node[draw,circle,above of=z112,outer sep=0pt,inner sep=.2ex,fill=gray,opacity=0.3] (s112){$S_2^1$};
                \draw[-{Latex[length=2mm]},color=gray,opacity=0.3] (s111) -- (z111);
                \draw[-{Latex[length=2mm]},color=gray,opacity=0.3] (s112) -- (z112);
                \draw[-{Latex[length=2mm]},color=figblue!100!white] (z111) -- (z112);
            }
    };
    \draw[dotted] (z12.north) -- (z12.south);

    \node[draw, above = of z12, scale=0.9] (theta) {
        {
            \tiny $\scriptscriptstyle {\tikz[baseline,anchor=base]{\node (theta1) {\large{$\theta_1$}};}}
            {\quad \quad }
            {\tikz[baseline,anchor=base]{\node (theta2) {\large{$\theta_2$}};}}$
        }
    };
    \draw[-{Latex[length=2mm]}] (theta.220) -- (z12.112);
    \draw[-{Latex[length=2mm]}] (theta.315) -- (z12.70);

    \node[draw, circle, above = of theta, scale=0.7] (z22) {
            \tikz[baseline,anchor=base,node distance=1.1cm]{
                \node[draw,circle, outer sep=0pt,inner sep=.2ex,fill=figred!50!white] (z211){$Z_1^2$};
                \node[draw,circle,right of=z211, outer sep=0pt,inner sep=.2ex,fill=figred!50!white] (z212) {$Z_2^2$};
                \node[draw,circle,above of=z211,outer sep=0pt,inner sep=.2ex,fill=gray,opacity=0.3] (s211){$S_1^2$};
                \node[draw,circle,above of=z212,outer sep=0pt,inner sep=.2ex,fill=gray,opacity=0.3] (s212){$S_2^2$};
                \draw[-{Latex[length=2mm]},color=gray,opacity=0.3] (s211) -- (z211);
                \draw[-{Latex[length=2mm]},color=gray,opacity=0.3] (s212) -- (z212);
                \draw[-{Latex[length=2mm]},color=figblue!100!white] (z211) -- (z212);
            }
    };
    \draw[dotted] (z22.north) -- (z22.south);

    \draw[-{Latex[length=2mm]}] (theta.140) -- (z22.248);
    \draw[-{Latex[length=2mm]}] (theta.45) -- (z22.290);

\end{tikzpicture}
                \caption{\acrlong*{cd}}
    	\end{subfigure}
    	\begin{subfigure}[b]{0.514\textwidth}
    		\centering
                \begin{tikzpicture}[node distance=1cm, font=\large, scale=.6]

    \node[draw, circle, scale=0.7] (z12) {
            \tikz[baseline,anchor=base,node distance=1cm]{
                \node[draw,circle,outer sep=0pt,inner sep=.2ex,fill=figblue!50!white] (s111){$S_1^1$};
                \node[draw,circle,right of=s111,outer sep=0pt,inner sep=.2ex,fill=figblue!50!white] (s112){$S_2^1$};
            }
    };
    \draw[dotted] (z12.north) -- (z12.south);
    \node[draw, circle, right=of z12,fill=figred!50!white] (x12) {$O^1$};

    \node[draw, above = of z12, scale=1.1] (theta) {$\theta_1 \quad \theta_2$};
    \draw[-{Latex[length=2mm]}] (theta.220) -- (z12.120);
    \draw[-{Latex[length=2mm]}] (theta.315) -- (z12.64);

    \node[draw, right= of theta, above= of x12,shift={(0, 0.27)}] (phi) {$\psi$};

    \node[draw, circle, above = of theta, scale=0.7] (z22) {
            \tikz[baseline,anchor=base,node distance=1.1cm]{
                \node[draw,circle,outer sep=0pt,inner sep=.2ex,fill=figblue!50!white] (s211){$S_1^2$};
                \node[draw,circle,right of=s211,outer sep=0pt,inner sep=.2ex,fill=figblue!50!white] (s212){$S_2^2$};
            }
    };
    \draw[dotted] (z22.north) -- (z22.south);

    \draw[-{Latex[length=2mm]}] (theta.140) -- (z22.240);
    \draw[-{Latex[length=2mm]}] (theta.45) -- (z22.296);
    
    \node[draw, circle, right=of z22,fill=figred!50!white] (x22) {$O^2$};

    \draw[-{Latex[length=2mm]}] (z12) -- (x12);
    \draw[-{Latex[length=2mm]}] (z22) -- (x22);

    \draw[-{Latex[length=2mm]}] (phi) -- (x12);
    \draw[-{Latex[length=2mm]}] (phi) -- (x22);

\end{tikzpicture}
                \caption{\acrlong*{ica}}
    	\end{subfigure}
            \vrule
            \begin{subfigure}[b]{0.48\textwidth}
    		\centering
                \begin{tikzpicture}[node distance=1cm, font=\large, scale=.6]

    \node[draw, circle, scale=0.7] (z12) {
            \tikz[baseline,anchor=base,node distance=1.1cm]{
                \node[draw,circle, outer sep=0pt,inner sep=.2ex,fill=figblue!50!white] (z111){$Z_1^1$};
                \node[draw,circle,right of=z111, outer sep=0pt,inner sep=.2ex,fill=figblue!50!white] (z112) {$Z_2^1$};
                \node[draw,circle,above of=z111,outer sep=0pt,inner sep=.2ex,fill=gray,opacity=0.3] (s111){$S_1^1$};
                \node[draw,circle,above of=z112,outer sep=0pt,inner sep=.2ex,fill=gray,opacity=0.3] (s112){$S_2^1$};
                \draw[-{Latex[length=2mm]},color=gray,opacity=0.3] (s111) -- (z111);
                \draw[-{Latex[length=2mm]},color=gray,opacity=0.3] (s112) -- (z112);
                \draw[-{Latex[length=2mm]},color=figblue!100!white] (z111) -- (z112);
            }
    };
    \draw[dotted] (z12.north) -- (z12.south);
    \node[draw, circle, right=of z12,fill=figred!50!white] (x12) {$O^1$};

    \node[draw, above = of z12, scale=0.9] (theta) {
        {
            \tiny $\scriptscriptstyle {\tikz[baseline,anchor=base]{\node (theta1) {\large{$\theta_1$}};}}
            {\quad \quad }
            {\tikz[baseline,anchor=base]{\node (theta2) {\large{$\theta_2$}};}}$
        }
    };
    \draw[-{Latex[length=2mm]}] (theta.220) -- (z12.112);
    \draw[-{Latex[length=2mm]}] (theta.315) -- (z12.70);

    \node[draw, above= of x12,shift={(0, 0.52)}] (phi) {$\psi$};

    \node[draw, circle, above = of theta, scale=0.7] (z22) {
            \tikz[baseline,anchor=base,node distance=1.1cm]{
                \node[draw,circle, outer sep=0pt,inner sep=.2ex,fill=figblue!50!white] (z211){$Z_1^2$};
                \node[draw,circle,right of=z211, outer sep=0pt,inner sep=.2ex,fill=figblue!50!white] (z212) {$Z_2^2$};
                \node[draw,circle,above of=z211,outer sep=0pt,inner sep=.2ex,fill=gray,opacity=0.3] (s211){$S_1^2$};
                \node[draw,circle,above of=z212,outer sep=0pt,inner sep=.2ex,fill=gray,opacity=0.3] (s212){$S_2^2$};
                \draw[-{Latex[length=2mm]},color=gray,opacity=0.3] (s211) -- (z211);
                \draw[-{Latex[length=2mm]},color=gray,opacity=0.3] (s212) -- (z212);
                \draw[-{Latex[length=2mm]},color=figblue!100!white] (z211) -- (z212);
            }
    };
    \draw[dotted] (z22.north) -- (z22.south);

    \draw[-{Latex[length=2mm]}] (theta.140) -- (z22.248);
    \draw[-{Latex[length=2mm]}] (theta.45) -- (z22.290);
    
    \node[draw, circle, right=of z22,fill=figred!50!white] (x22) {$O^2$};

    \draw[-{Latex[length=2mm]}] (z12) -- (x12);
    \draw[-{Latex[length=2mm]}] (z22) -- (x22);

    \draw[-{Latex[length=2mm]}] (phi) -- (x12);
    \draw[-{Latex[length=2mm]}] (phi) -- (x22);

\end{tikzpicture}
                \caption{\acrlong*{crl}}
    	\end{subfigure}
            \caption{\textbf{\acrfull*{iem}--A unified model for structure and representation identifiability%
            :} 
            Here we show that exchangeable but non-i.i.d.\ data enables identification in key methods across \acrfull*{cd}, \acrfull*{ica}, and \acrfull*{crl}. 
            The graphical model in \cref{fig:unified_unified} shows the \gls*{iem} model, which subsumes \acrfull*{cd} (\cref{subsec:exchg_cd}), \acrfull*{ica} (\cref{subsec:exchg_ica}), and \acrfull*{crl} (\cref{subsec:exchg_crl}). $S$ denotes latent, $Z$ causal, and $O$ observed variables with corresponding latent parameters $\boldsymbol{\theta}, \psi$, superscripts denote different samples.
            {\color{figred!70!white}Red} denotes observed/known quantities, {\color{figblue!70!white}blue} stands for target quantities, and {\color{gray!70!white} gray} illustrates components that are \textit{not} explicitly modeled in a particular paradigm. $\theta_i$ are latent variables controlling separate probabilistic mechanisms, indicated by dotted vertical lines. \textbf{{\gls*{cd}}} (\cref{fig:unified_cd}) corresponds to the left-most layer of \gls*{iem}, focusing on the study of cause-effect relationships between observed causal variables; \textbf{{\gls*{ica}}} (\cref{fig:unified_ica}) infers source variables from observations, but without causal connections in the left-most layer of \gls*{iem}; \textbf{{\gls*{crl}}} (\cref{fig:unified_crl}) shares the most similar structure with \gls*{iem}, as it has both layers, including the intermediate causal representations
            }
            \label{fig:unified_large}
        \end{figure}

    \subsection{Cause/mechanism variability for bivariate \gls*{cd}: Thm.~\ref{thm:extendcdf}}\label{subsec:app_extendcdf}
        \thmextendcdf*

        \begin{proof}
            The impossibility of both mechanisms being degenerate (\ie, the \acrshort*{iid} case) is well-known \citep{pearl_causal_2009}. For distributions that are Markov and faithful to \cref{fig:cdf}, \cref{fig:cdf_cause} and \cref{fig:cdf_mech}, one can differentiate the causal direction through checking $Y^1 \perp X^2 | X^1$, $X^1 \perp Y^2 | Y^1$ and $X^1 \perp Y^1$. One can observe $Y^1 \perp X^2 | X^1$ only holds in \cref{fig:cause_var} and fails at \cref{fig:mech_var}. 
        \end{proof}

         \begin{figure}[htb]
    	\begin{subfigure}[b]{0.45\textwidth}
    		\centering
                \begin{tikzpicture}[node distance=0.8cm, font=\small]
    \node[draw, circle] (n1) {$X^1$};
    \node[draw, circle, right=of n1] (x1) {$Y^1$};

    \draw[-{Latex[length=2mm]}] (n1) -- (x1);

    \node[draw, circle, above=of n1] (theta) {$\theta$};
    \node[draw, circle, above=of x1] (psi) {$\psi$};

    \node[draw, circle, above=of theta] (n2) {$X^2$};
    \node[draw, circle, right=of n2] (x2) {$Y^2$};

    \draw[-{Latex[length=2mm]}] (n2) -- (x2);

    \draw[-{Latex[length=2mm]}] (theta) -- (n1);
    \draw[-{Latex[length=2mm]}] (theta) -- (n2);

    \node[draw, circle, right= of x1] (n12) {$X^1$};
    \node[draw, circle, right=of n12] (x12) {$Y^1$};

    \draw[-{Latex[length=2mm]}] (x12) -- (n12);

    \node[draw, above=of n12] (theta2) {$\theta$};
    \node[draw, above=of x12] (psi2) {$\psi$};

    \node[draw, circle, above=of theta2] (n22) {$X^2$};
    \node[draw, circle, right=of n22] (x22) {$Y^2$};

    \draw[-{Latex[length=2mm]}] (x22) -- (n22);

    \draw[-{Latex[length=2mm]}] (theta2) -- (n12);
    \draw[-{Latex[length=2mm]}] (theta2) -- (n22);

    \draw (1,-1) node [inner sep=0.75pt]    {$ \begin{array}{l}
        X^2 \not\perp Y^1 |Y^2\\
        X^1 \perp Y^2 | X^2
    \end{array}$};

    \draw (4,-1) node [inner sep=0.75pt]    {$ \begin{array}{l}
        X^2 \perp Y^1 |Y^2\\
        X^1 \not\perp Y^2 | X^2
    \end{array}$};
    
\end{tikzpicture}
                \caption{Cause variability}
                \label{fig:cause_var}
    	\end{subfigure}
    	\begin{subfigure}[b]{0.45\textwidth}
    		\centering
                \begin{tikzpicture}[node distance=0.8cm, font=\small]
    \node[draw, circle] (n1) {$X^1$};
    \node[draw, circle, right=of n1] (x1) {$Y^1$};

    \draw[-{Latex[length=2mm]}] (n1) -- (x1);

    \node[draw, above=of n1] (theta) {$\theta$};
    \node[draw, above=of x1] (psi) {$\psi$};

    \node[draw, circle, above=of theta] (n2) {$X^2$};
    \node[draw, circle, right=of n2] (x2) {$Y^2$};

    \draw[-{Latex[length=2mm]}] (n2) -- (x2);

    \draw[-{Latex[length=2mm]}] (psi) -- (x1);
    \draw[-{Latex[length=2mm]}] (psi) -- (x2);

    \node[draw, circle, right= of x1] (n12) {$X^1$};
    \node[draw, circle, right=of n12] (x12) {$Y^1$};

    \draw[-{Latex[length=2mm]}] (x12) -- (n12);

    \node[draw, circle, above=of n12] (theta2) {$\theta$};
    \node[draw, circle, above=of x12] (psi2) {$\psi$};

    \node[draw, circle, above=of theta2] (n22) {$X^2$};
    \node[draw, circle, right=of n22] (x22) {$Y^2$};

    \draw[-{Latex[length=2mm]}] (x22) -- (n22);

    \draw[-{Latex[length=2mm]}] (psi2) -- (x12);
    \draw[-{Latex[length=2mm]}] (psi2) -- (x22);

    \draw (1,-1) node [inner sep=0.75pt]    {$ \begin{array}{l}
        Y^1 \perp X^2 |X^1\\
        X^1 \not\perp Y^2 | Y^1
    \end{array}$};

    \draw (4,-1) node [inner sep=0.75pt]    {$ \begin{array}{l}
        Y^1 \not\perp X^2 |X^1\\
        X^1 \perp Y^2 | Y^1
    \end{array}$};
    
\end{tikzpicture}
                \caption{Mechanism variability}
                \label{fig:mech_var}
    	\end{subfigure}
    	\caption{We show that the richness argument in \gls*{cdf}~\citep{guo2024causal} can be realized, in the bivariate case, via either only varying the prior of the causes' parameters $\theta$ (\cref{fig:cause_var}) or the prior of the mechanism' parameters $\psi$ (\cref{fig:mech_var}). That is, it is not necessary to have rich priors for both $\theta, \psi$ }
    	\label{fig:fig2}
        \end{figure}

    \subsection{Exchangeability in TCL: Lem.~\ref{lem:tcl}}\label{subsec:app_exchg_tcl}

       \lemtcl*

        \begin{proof}
            Latent variables violating the sufficient variability (\cref{assum:suff_var}) condition in nonlinear ICA imply that those variables are \acrshort*{iid}; thus, if more than one latent variables violate this condition, then they become non-identifiable; thus, making non-delta priors a necessary condition for identifiability of factorizing priors (assuming that no further constraints can be applied, \eg, on the function class). An important fact for the proof is that \textit{the parameters $\theta_i$ are continuous \glspl{rv}}, as they parametrize an exponential family distribution. That is, their support has infinitely many distinct values, each with a probability of zero---this will be important to reason about the zero-measure of edge cases where two parameters happen to be ``tuned" to each other, violating the sufficient variability condition.
            
            The full rank condition of the matrix \mat{L} means that  $\forall i\neq j$ environment indices for two rows in matrix \mat{L}
            \begin{align}
                \boldsymbol{\theta}^i - \boldsymbol{\theta}^0 &\neq c\cdot \parenthesis{\boldsymbol{\theta}^j - \boldsymbol{\theta}^0} ; c>0 \label{eq:L_row_indep}
            \end{align}

            \textit{$\impliedby$: Non-delta priors imply sufficient variability.}
            Assume a real-valued \gls*{rv} $\theta_i$ with corresponding non-delta parameter prior $p(\theta_i)$. Then, outside of a zero-measure set, the sufficient variability condition holds. 
            Assume that $\forall i: p(\theta_i)\neq \delta(\theta_i-\theta_{i0})$. Then, as all $\theta_i$ are real \glspl{rv}, their support has infinitely many values. Thus, the probability of \eqref{eq:L_row_indep} being violated (by setting both sides to be equal and solving for $\boldsymbol{\theta}^i$) is zero:
            \begin{align}
                \mathrm{Pr}\brackets{\boldsymbol{\theta}^i = c\cdot \boldsymbol{\theta}^j - (c-1)\boldsymbol{\theta}^0} = 0,
            \end{align}
            from which it follows that there exists such $\boldsymbol{\theta}^i, \boldsymbol{\theta}^j$ where \eqref{eq:L_row_indep} holds, implying that \mat{L} has full rank. Thus, \cref{assum:suff_var} holds.

            \textit{$\implies$: Sufficient variability implies non-delta priors.}
            When  $\rank{\mat{L}} = n,$ then none of $p(\theta_i)$ is delta. $\text{rank}\mat{L} = n = \dim \mathbf{s}$ means that $\forall i\neq j$ environment indices for two rows in matrix \mat{L} \eqref{eq:L_row_indep} holds.
            That is, we can construct \mat{L} such that any two rows are linearly independent\footnote{Note that $\theta_i$ can be correlated, as \citet{hyvarinen_nonlinear_2019} pointed out in the proof of their Thm.~2}.
            If for coordinate $k$, $\theta_k^e = \theta_k = const$, then \mat{L} cannot have full column rank. Since $\theta_k$ cannot be constant for all $k$, this requires that  $p(\theta_k)$ is non-delta.
        \end{proof}

    \subsection{Exchangeability in GCL~\citep{hyvarinen_nonlinear_2019}: extending Lem.~\ref{lem:tcl}}\label{subsec:app_exchg_gcl}

        \citet{hyvarinen_nonlinear_2019} proposed a generalization of TCL (and other auxiliary-variable ICA methods), called GCL. GCL uses a more general conditional distribution, it only assumes that assumes that the conditional log-density
    $\log p(\mathbf{s}|u)$ is a sum of components $q_i(s_i, u)$:
    \begin{align}
        \log p(\mathbf{s}|u) = \sum_i q_i(s_i, u)
    \end{align}

        For this generalized model, they define the following variability condition:
        \begin{assum}[Assumption of Variability]\label{assum:var_gcl}
            For any $\mathbf{y} \in \mathbb{R}^n$ (used as a drop-in replacement for the sources $\mathbf{s}$), there exist $2 n+1$ values for the auxiliary variable $\mathbf{u}$, denoted by $\mathbf{u}_j, j=$ $0 \ldots 2 n$ such that the $2 n$ vectors in $\mathbb{R}^{2 n}$ given by
            $$
            \begin{array}{r}
            \left(\mathbf{w}\left(\mathbf{y}, \mathbf{u}_1\right)-\mathbf{w}\left(\mathbf{y}, \mathbf{u}_0\right)\right),\left(\mathbf{w}\left(\mathbf{y}, \mathbf{u}_2\right)-\mathbf{w}\left(\mathbf{y}, \mathbf{u}_0\right)\right) 
            \ldots,\left(\mathbf{w}\left(\mathbf{y}, \mathbf{u}_{2 n}\right)-\mathbf{w}\left(\mathbf{y}, \mathbf{u}_0\right)\right)
            \end{array}
            $$
            with
            $$
            \begin{aligned}
            \mathbf{w}(\mathbf{y}, \mathbf{u})= & \left(\frac{\partial q_1\left(y_1, \mathbf{u}\right)}{\partial y_1}, \ldots, \frac{\partial q_n\left(y_n, \mathbf{u}\right)}{\partial y_n},\right. 
            & \left.\frac{\partial^2 q_1\left(y_1, \mathbf{u}\right)}{\partial y_1^2}, \ldots, \frac{\partial^2 q_n\left(y_n, \mathbf{u}\right)}{\partial y_n^2}\right)
            \end{aligned}
            $$      
            are linearly independent.
        \end{assum}

        \cref{assum:var_gcl} puts a constraint on the components of the first- and second derivatives of the functions constituting the conditional log-density of the source/latent variables, conditioned on the auxiliary variable $\mathbf{u}$. As the authors write: \textit{``[\cref{assum:var_gcl}] is basically saying that the auxiliary variable must have a sufficiently strong and diverse effect on the distributions of the independent components."
    }

    We focus on a special case, which assumes that the source conditional log-densities $q_i(s_i, u)$ are conditionally exponential, \ie:
        \begin{align}
            q_i\left(s_i, u\right) = \sum_{j=1}^k\brackets{\tilde{q}_{ij }\left(s_{i}\right) \theta_{ij }(u)} -\log N_i(u) +\log Q_i(s_i),
        \end{align}
        where 
        $k$ is the order of the exponential family, 
        $N_i$ is the normalizing constant, $Q_i$ the base measure,
        $\tilde{q}_{i}$ is the sufficient statistics, and the modulation parameters $\theta_i := \theta_{i}(u)$ depend on $u$. 
    In this case, \cref{assum:var_gcl} becomes similar to \cref{assum:suff_var}, but the modulation parameter matrix 
    now has $(E-1)\times nk$ dimensions, where the rows are:
    \begin{align}
        \rows{\mat{L}}{j} &= (\boldsymbol{\theta}^j - \boldsymbol{\theta}^0)^\top\\
        \boldsymbol{\theta}^j &= \brackets{\theta^j_{11}, \dots, \theta^j_{nk}}.
    \end{align}

        In this case, we can generalize \cref{lem:tcl} to GCL:
        \begin{cor}[GCL with conditionally exponential family sources is identifiable due to exchangeable non--i.i.d. sources]\label{cor:gcl_exchg_cond_exp}
            The sufficient variability condition in GCL with conditionally exponential family sources corresponds to cause variability, \ie, exchangeable non--i.i.d. source variables with fixed mixing function, which leads to the identifiability of the latent sources.
        \end{cor}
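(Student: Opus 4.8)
The plan is to run the proof of \cref{lem:tcl} essentially verbatim, replacing the source dimension $n$ with $nk$, where $k$ is the order of the exponential family. First I would record the specialization of GCL's \cref{assum:var_gcl} to conditionally exponential sources $q_i(s_i,u) = \sum_{j=1}^{k}\tilde q_{ij}(s_i)\theta_{ij}(u) - \log N_i(u) + \log Q_i(s_i)$: differentiating once and twice in $s_i$, each entry of the vector $\mathbf{w}(\mathbf{y},\mathbf{u})$ in \cref{assum:var_gcl} equals a $\mathbf{u}$-independent additive term plus a linear function of the stacked modulation vector $\boldsymbol{\theta}(\mathbf{u}) = (\theta_{11}(\mathbf{u}),\dots,\theta_{nk}(\mathbf{u}))$, with coefficients built from the first and second derivatives of the sufficient statistics $\tilde q_{ij}$ and depending only on $\mathbf{y}$. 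Hence the differences $\mathbf{w}(\mathbf{y},\mathbf{u}_j) - \mathbf{w}(\mathbf{y},\mathbf{u}_0)$ are linearly independent once the parameter differences $\boldsymbol{\theta}^j - \boldsymbol{\theta}^0$ are, so GCL's variability holds whenever the $(E-1)\times nk$ matrix $\mat{L}$ with $\rows{\mat{L}}{j} = (\boldsymbol{\theta}^j - \boldsymbol{\theta}^0)^\top$ has full column rank --- exactly \cref{assum:suff_var} with $d$ replaced by $nk$.

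I would then repeat the two-sided argument of \cref{lem:tcl}. Since the $\theta_{ij}$ parametrize an exponential family, they are continuous random variables whose supports contain infinitely many values; thus if every $p(\theta_{ij})$ is non-delta, the realizations for which the rows of $\mat{L}$ are linearly dependent form the solution set of finitely many linear equations in the $\theta^j_{ij}$ and therefore have measure zero, giving $\rank{\mat{L}} = nk$ almost surely. Conversely, if $\rank{\mat{L}} = nk$ then no coordinate $\theta_{ij}$ can be constant across environments --- a constant coordinate zeroes the corresponding column of $\mat{L}$, contradicting full column rank --- so $p(\theta_{ij}) \neq \delta_{\theta_{ij0}}(\theta_{ij})$ for all $i,j$.

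Finally I would observe that the $\theta_{ij}(u)$ are precisely the de Finetti parameters of the conditionally exponential sources, so $\mathbf{s}$ is a marginal copy of an exchangeable sequence (\eqref{eq:repr}), and that sequence is non--i.i.d.\ exactly when these priors are non-delta. Since the mixing function $f$ is deterministic and independent of $u$, this places GCL with conditionally exponential sources in the cause variability regime (\cref{fig:cdf_cause}), and invoking GCL's identifiability theorem \citep{hyvarinen_nonlinear_2019} then yields recovery of the latent sources, which is the statement of the corollary.

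The step with real content is the reduction in the first paragraph: one must check that the per-coordinate coefficient matrix assembled from $\tilde q_{i1},\dots,\tilde q_{ik}$ and their second derivatives is nondegenerate enough (for all, or almost all, $\mathbf{y}$) that full column rank of $\mat{L}$ transfers to the linear independence of the $\mathbf{w}$-differences that GCL requires. For $k=1$ this is immediate, as the per-coordinate coefficient is the scalar $\partial_{y_i}\tilde q_i(y_i)$, exactly as in TCL; for higher order $k$ it follows from the non-degeneracy of the sufficient statistics already implicit in GCL's regularity assumptions (it holds, for instance, for location--scale Gaussian components with $k=2$). The measure-zero arguments in the second paragraph are then routine given \cref{lem:tcl}.
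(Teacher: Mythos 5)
Your proposal is correct and follows essentially the same route as the paper, which simply invokes the proof of \cref{lem:tcl} with the modulation parameter matrix enlarged to $(E-1)\times nk$. Your first paragraph is actually more careful than the paper's one-line argument: the paper leaves implicit the step you flag as having ``real content,'' namely that full column rank of $\mat{L}$ transfers to linear independence of the $\mathbf{w}$-differences only when the coefficient matrix built from the derivatives of the sufficient statistics $\tilde q_{ij}$ is nondegenerate.
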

        \begin{proof}
            The proof follows from the proof of \cref{lem:tcl}, the only difference is that for each source $s_i$, there are $k$ sufficient statistics $\tilde{q}_{ik}$ and modulation parameters $\theta_{ik}(u).$ Thus, the modulation parameter matrix \mat{L} (\cref{assum:suff_var}) will be $[(E-1)\times nk]-$dimensional where $n = \dim \mathbf{s}.$ 
        \end{proof}

    \subsection{Duality of cause and mechanism variability for \gls*{tcl}: Lem.~\ref{lem:dual_cause_mech}}\label{subsec:app_dual}

        \lemdualcausemech*

        \begin{proof}
            The proof follows from the observation that it is a modelling choice which component provides the source of non-stationarity. That is, we can incorporate the transformation of the source variables into the source distribution (cause variability, as \gls*{tcl} does) or we can think of that as stochasticity in the mixing function (mechanism variability).
        \end{proof}

    \subsection{Exchangeability in CauCA: Lem.~\ref{lem:cauca}}\label{subsec:app_exchg_cauca}

            \lemcauca*
            \begin{proof}
                     We define each mechanism ${p}(z_i|\mathbf{Pa}(z_i))$ as
                \begin{align}
                    p(z_i|\mathbf{Pa}(z_i)) &= \int_{\theta_i} p(z_i|\mathbf{Pa}(z_i), \theta_i)p(\theta_i)d\theta_i\label{eq:mix_mech}
                    \intertext{Thus, the observational and interventional mechanisms, respectively, are:}
                    p_i &= p(z_i|\mathbf{Pa}(z_i), \theta_i = \theta_i^0)\\
                    \tilde{p}_i &= p(z_i|\mathbf{Pa}(z_i), \theta_i = \tilde{\theta_i})
                \end{align}
                That is, each intervention corresponds to a specific parameter value $\theta_i$ (which exist by the \gls*{cdf} theorem (\cref{thm:cdf_bivariate}). 
                Thus, \eqref{eq:mix_mech}
                is akin to mixtures of interventions~\citep[Defn.~3]{richens_robust_2024}. 
                
                \textit{
                $\implies$: 
                Interventional discrepancy implies non-delta priors.}
                
                    We prove this direction by contradiction. Assume that \cref{def:interventional_discrepancy} is fulfilled and $p(\theta_i) = \delta_{\theta_{i0}}$. Then $\tilde{\theta_i} = {\theta_i},$ so \cref{def:interventional_discrepancy} cannot hold.

                  \textit{$\impliedby$: Non-delta priors for real-valued parameters imply interventional discrepancy.}

                    If $\theta_i$ is real-valued, then the following probability is zero:
                    \begin{align}
                        \mathrm{Pr}\brackets{\tilde{\theta_i}  = \theta_i^0} = 0,
                    \end{align}
                    thus, there must exist $\tilde{\theta_i} \neq {\theta_i^0},$ and this inequality holds almost everywhere since:
                     \begin{align}
                        \mathrm{Pr}\brackets{\tilde{\theta_i} \neq  \theta_i^0} = 1 - \mathrm{Pr}\brackets{\tilde{\theta_i}  = \theta_i^0} = 1 - 0 = 1,
                    \end{align}

            \end{proof} 

            \begin{remark}[Non-delta priors do not always imply interventional discrepancy almost everywhere]\label{rem:cauca_non_delta_not_implies}
                 When the parameter priors $p(\theta_i)$ are not a delta distribution, then, barring the case when sampling the interventional mechanism parameter $\tilde{\theta_i}$ yields the same $\theta_i^0$, then the distributions $p_i$ and $\tilde{p}_i$ would differ. However, this is not necessarily a zero-measure event, \eg, when $p(\theta)$ has a Bernoulli distribution with parameter $1/2$. Thus, \cref{def:interventional_discrepancy} cannot hold almost everywhere without further restrictions.
            \end{remark}

    \subsection{Exchangeability and the genericity condition from \citet{von_kugelgen_nonparametric_2023} and \citet{jin_learning_2023}}\label{subsec:joint_crl}

        \citet{von_kugelgen_nonparametric_2023} extends CauCA~\citet{wendong_causal_2023} by providing identifiability proofs for \gls*{crl} without parametric assumptions on the function class. Their assumption~\citep[(A3') in Thm.~3.4]{von_kugelgen_nonparametric_2023} is stronger than \cref{def:interventional_discrepancy} as it requires that two interventional densities differ \textit{everywhere}---or that the observational and one interventional density differ~\citep[(A3) in Thm.~3.2]{von_kugelgen_nonparametric_2023}. Furthermore, \citep[(A4) in Thm.~3.2]{von_kugelgen_nonparametric_2023} excludes the pathogological case of fine-tuned densities (thus the name \textit{genericity condition})---this might be thought to be an analog of faithfulness~\cref{def:faithful}.

        \citet{jin_learning_2023} leverages a similar assumption as \cref{def:interventional_discrepancy} in their \citep[Def.~6]{jin_learning_2023}. They strengthen the nonparametric identifiability results of \citet{von_kugelgen_nonparametric_2023} by showing that $\dim \mathbf{Z}$ single-node soft interventions with unknown targets are sufficient to identify the causal variables.

        \paragraph{Towards identifying the exogenous variables in CRL.}
            Both \citet{von_kugelgen_nonparametric_2023,jin_learning_2023} derive identifiability results for the \textit{causal variables} from interventional data. However, they do not make claims about the exogenous variables.
            Based on the insights of our unified model, \gls*{iem}, provides, we investigate whether there is an identifiability proof that encompasses the whole hierarchy. 

            Consider that the mechanism $ p(Z_i|\mathbf{Pa}(Z_i))$ can be intervened upon by changing how the other causal variables $Z_j \in \mathbf{Pa}(Z_i)$ affect $Z_i$. Alternatively, $p(Z_i|\mathbf{Pa}(Z_i))$ can also change by modifying the distribution of the corresponding exogenous variable $S_i$---note that this corresponds to a one-node soft intervention. Thus, if the other assumptions of \citet{jin_learning_2023} holds, we can say that: single-node soft interventions on $S_i$ can satisfy the genericity condition of \citet{jin_learning_2023}.
                
            To reason about the identifiability of the exogenous variables, we need the variability of their distribution across the available environments. Our summary of the assumptions in \cref{tab:method_comp} suggests that with sufficiently many environments, it should be possible to identify the exogenous variables as well. 
            If the only assumption we make is exchangeability, then, following the reasoning of \citet{khemakhem_ice-beem_2020}, we might need $\dim \mathbf{Z}+1$ additional environments ($2\dim \mathbf{Z}+1$ in total). 
            If we further restrict the source distributions to belong to the exponential family, then we can apply \gls*{tcl} (and linear \gls*{ica} on top) to identify the source variables. Thus, we can state:
            \lemjointcrl*
            \begin{proof}
                Our goal is to prove that performing \gls*{crl} can lead to the additional identifiability of the exogenous sources, with a negligible overhead in terms of assumptions on the data, compared to performing only \gls*{crl}. 
                Also, we aim to show that the joint identifiability requires less data (less environments) than performing both tasks separately.
                We start by assuming that \citep[Thm.~4]{jin_learning_2023} holds, which implies \cref{def:interventional_discrepancy}.
                By \cref{lem:cauca}, we know that when \cref{def:interventional_discrepancy} holds, then the parameter priors are non-degenerate. 
                We assumed that the single-node soft interventions only affect $S_i.$
                Following the reasonings of \citet{von_kugelgen_nonparametric_2023,jin_learning_2023}, \wolog, if the structural assignments in the nonparametric \gls*{sem} are not fine-tuned (\ie, they are generic), then  \cref{def:interventional_discrepancy} should hold.
                Then, as the source distributions are exchangeable, we can apply  \cref{lem:tcl}, which states that \cref{assum:suff_var} also holds. 
                Thus, we can identify the exogenous variables as well, concluding the proof.
            \end{proof}

            We leave it to future work to investigate whether identifying both causal and exogenous variables is possible from fewer environments. Nonetheless, we believe that this example shows the potential advantage of the \gls*{iem} framework for providing new identifiability results.

    \subsection{Exchangeability in the unified model: Lem.~\ref{lem:liu_ident_exchg}}\label{subsec:app_liu_ident_exchg}
        \paragraph{Interventional discrepancy and the derivative condition of \citep{liu_identifiable_2024}}

        The identifiability result of \citet{liu_identifiable_2024} combines the results from \gls*{ica} and \gls*{crl}. As they use \gls*{tcl} to learn the latent sources, we can apply \cref{lem:tcl}. To see how the causal variables and the edges between them can also be learned, we first relate the derivative condition on the structural assignments to the interventional discrepancy condition of \citet{wendong_causal_2023} (\cref{def:interventional_discrepancy}).

        \cref{assum:struct_assign_liu} requires access to a set of environments (indexed by auxiliary variable $u$), such that for each parent $ z_j \in \textbf{Pa}(z_i)$ node, there is an environment, where the edge $z_j \to z_i$ is blocked.
        To relate \cref{assum:struct_assign_liu} to the interventional discrepancy \cref{def:interventional_discrepancy}, we recall that \citet{wendong_causal_2023} note that for perfect interventions, the conditioning on the parents for the interventional density in \cref{def:interventional_discrepancy} disappears. Thus, we interpret \cref{assum:struct_assign_liu} as ``emulating" perfect interventions for each $z_i$. By this, we mean that we need data from such environments, where the structural assignments change \textit{as if} a perfect intervention is carried out to remove the $z_j \to z_i$ edge.

        \lemliuidentexchg*

        \begin{proof}
            As the authors rely on TCL and a form of the interventional discrepancy~\cref{def:interventional_discrepancy}, the proof follows from \cref{lem:tcl} and \cref{lem:cauca}.
        \end{proof}

    \subsection{Independent source and structural assignment \gls*{cdf} parameters in \glspl*{anm}: Lem.~\ref{lem:liu_disent_cdf_params}}\label{subsec:app_liu_indep}
        \lemliudisentcdf*

        \begin{proof}
            In the model of \citet{liu_identifiable_2024}, the two identifiability results impose two non-\acrshort*{iid} requirements: to identify the latent sources, a sufficient variability condition from TCL is required (\cf the generalized version in  \cref{assum:var_gcl}), whereas for \gls*{crl}, a derivative-based condition on the mechanisms (akin to \cref{def:interventional_discrepancy}) is required. As the \gls*{sem} is an \gls*{anm} in this case, defined by $z_i := g_i(\mathbf{Pa}(z_i)) + s_i$, and the exogenous variables are assumed to be independent from $g_i(\mathbf{Pa}(z_i))$. Thus, it is impossible (assuming faithfulness) that a change in $\theta_i^s$ would change $\theta_i^g$; otherwise,  $g_i(\mathbf{Pa}(z_i))$ and $s_i$ would be dependent. That is, their parameters are independent.
        \end{proof}

\section{Definitions}

\begin{definition}[$d$-separation (adapted from Defn.~6.1 in~\citep{peters_elements_2018})]\label{def:dsep}
    Given a DAG $\mathcal{G}$, the disjoint subsets of nodes ${A}$ and ${B}$ are $d$-separated by a third (also disjoint) subset ${S}$ if every path between nodes in ${A}$ and ${B}$ is blocked by ${S}$. We then write
    $$
    {A} \perp_{\mathcal{G}} {B} \mid {S} .
    $$
\end{definition}

\begin{definition}[Global Markov property (adapted from Defn.~6.21(i) in ~\citep{peters_elements_2018})]\label{def:markov}
     Given a \gls*{dag} $\mathcal{G}$ and a joint distribution $P$, $P$ satisfies  the global Markov property \wrt the $\mathcal{G}$ if
     \begin{align}
         {A} \perp_{\mathcal{G}} {B}|{C} \Rightarrow {A} \perp {B}| {C}
     \end{align}
  
    for all disjoint vertex sets ${A}, {B}, {C}$ (the symbol $\perp_{\mathcal{G}}$ denotes $d$-separation, \cf \cref{def:dsep}).
\end{definition}

\begin{definition}[Faithfulness (adapted from Defn.~6.33 in \citep{peters_elements_2018})]\label{def:faithful}
    Consider a distribution $P$ and a \gls*{dag} $\mathcal{G}$. Then, $P$ is faithful to $\mathcal{G}$ if for all disjoint node sets $A,B,C$:
    \begin{align}
        {A} \perp {B}\left|{C} \Rightarrow {A} \perp_{\mathcal{G}} {B}\right| {C}.
    \end{align}
    That is, if a conditional independence relationship holds in $P$, then the corresponding node sets are d-separated in $\mathcal{G}$.
\end{definition}

    \begin{definition}[Markov equivalence class of graphs (adapted from Defn.~6.24 in \citep{peters_elements_2018}]\label{def:markov_eq_class}
         We denote by $\mathcal{M}(\mathcal{G})$ the set of distributions that are Markovian \wrt $\mathcal{G}$ :
    $\mathcal{M}(\mathcal{G}):=\{P: P$ satisfies the global Markov property \wrt $\mathcal{G}\}$.
    Two DAGs $\mathcal{G}_1$ and $\mathcal{G}_2$ are Markov equivalent if $\mathcal{M}\left(\mathcal{G}_1\right)=\mathcal{M}\left(\mathcal{G}_2\right)$, \ie. if and only if $\mathcal{G}_1$ and $\mathcal{G}_2$ satisfy the same set of $d$-separations, which means the Markov condition entails the same set of (conditional) independence conditions.
    The set of all DAGs that are Markov equivalent to some DAG is called Markov equivalence class of $\mathcal{G}$.
    \end{definition}

\section{Why does i.i.d. data fail?} \label{app:sec_iid_fails}
We next assay key results and provide concrete examples to illustrate why i.i.d. data fails to enable identification for both structure and representation learning. 
        \begin{example}[Bivariate \gls*{cd} is impossible from \acrshort*{iid} data ~\citep{pearl_causality_2009}]
            One cannot distinguish ${X\to Y}$ from ${X\leftarrow Y}$ from i.i.d. data as both structures imply identical graphical conditional independence, \ie, $\emptyset$. Thus, bivariate \gls*{cd} is impossible in i.i.d. data without further parametric assumptions. \looseness=-1
        \end{example}
        Learning disentangled latent factors is also impossible without further parametric assumptions in \acrshort*{iid} data~\citep{hyvarinen_nonlinear_1999,locatello_challenging_2019}:
        \begin{example}[Gaussian latent factors are not identifiable from \acrshort*{iid} data]\label{ex:gauss}
            Assume independent latents with Gaussian components, \ie $p(\mathbf{s}) = \prod_i p_i(s_i),$ where $p_i(s_i) = \normal{\mu_i}{\sigma_i^2}$. Even if $\forall i,j : \sigma_i^2 \neq \sigma_{j\neq i}^2$, Gaussian sources are not identifiable to their rotational symmetry and the scale-invariance of \gls*{ica}.
        \end{example}

        We show in \cref{sec:exchg} how exchangeability unifies the non-\acrshort*{iid} conditions (often termed, weak supervision or auxiliary information) in many causal structure and representation identifiability methods.

\section{Related Work}
\label{sec:related}

    \paragraph{Identifiable representation learning and \glsentryname{ica}.}
        Identifiable representation learning aims to learn (low-dimensional) \glspl*{lvm} from (high-dimensional) observations. The most prevalent family of models is that of \acrfull*{ica}~\citep{comon1994independent, hyvarinen_independent_2001}, which assumes that the observations are a mixture of \textit{independent} variables via a deterministic mixing function. 
        Identifiability means that the latents can be can be recovered up to indeterminacies (\eg, permutation,  element-wise transformations). As this is provably impossible in the nonlinear case without further assumptions~\citep{darmois1951analyse, hyvarinen_nonlinear_1999,locatello_challenging_2019}, recent work has focused \textit{auxiliary-variable} \gls*{ica}, where the latents are conditionally independent given the auxiliary variable $u$~\citep{hyvarinen_nonlinear_2019, gresele_incomplete_2019, khemakhem_variational_2020, halva_disentangling_2021,hyvarinen_nonlinear_2017,hyvarinen_unsupervised_2016,halva_hidden_2020,morioka_independent_2021,monti_causal_2020,hyvarinen_estimation_2010,klindt_towards_2021,zimmermann_contrastive_2021}---despite the latents are not marginally independent, the literature still refers to these models are \gls*{ica}. Several such methods model multiple environments with an auxiliary variable, which is also known as using ensembles~\citep{eastwood_self-supervised_2023,kirchhof_probabilistic_2023}.
        We note that some methods make functional assumptions~\citep{shimizu_linear_2006,hoyer_nonlinear_2008,zhang_identifiability_2012,gresele_independent_2021}, but our focus is on the auxiliary-variable methods. Recently, \citet{bizeul_probabilistic_2024} developed a probabilistic model for self-supervised representation learning, including auxiliary-variable ICA methods.

    \paragraph{Causality.}
        \glspl*{sem} model cause-effect relationships between causal variables $z_i$, where each $z_i$ is determined by a deterministic function $g_i(\textbf{Pa}(z_i))$, where $\textbf{Pa}(z_i)$ includes all the $z_{j< i}$ causal variables that cause $z_i$ and also the stochastic exogenous variable $x_i.$ Learning causal models enables to make more fine-grained (interventional, counterfactual) queries compared to observational data~\citep{pearl_causality_2009}. \gls*{cd} aims to uncover the graph between the $z_i$ from observing $z_i$. This admits interventional queries. \gls*{crl} also learns that $z_i$ from high-dimensional observations~\citep{scholkopf_towards_2021}.
        Causal methods need to rely on certain assumptions, either restricting the distribution of the exogenous variables~\citep{kalainathan_structural_2020,lachapelle_gradient-based_2020,shimizu_linear_2006,monti_causal_2020}, and/or the function class of the \gls*{sem}~\citep{shimizu_linear_2006,zheng_dags_2018,squires_linear_2023,montagna_scalable_2023,montagna_causal_2023,gresele_independent_2021,hoyer_nonlinear_2008,ng_masked_2020,lachapelle_gradient-based_2020,annadani_variational_2021,yang_causalvae_2021}.
        Concurrently with our work, \citet{yao_unifying_2024} provides an invariance-based framework to unify \gls*{crl}. Their framework can encompass multi-view, multi-environment, and also temporal settings---ouw work focuses on the multi-environment case, but it also includes representation learning and \gls*{cd}.

    \paragraph{Connections between representation learning and causality.}

          Causality and identifiability both aim to recover some ground truth structures (latent factors, \glspl*{dag}, or functional relationships), thus, several works explored possible connections~\citep{reizinger_jacobian-based_2023,morioka_connectivity-contrastive_2023,hyvarinen_identifiability_2023,zecevic_relating_2021,richens_robust_2024,monti_causal_2020}.
          Several methods connected \gls*{ica} to the \gls*{sem} model in causality~\citep{gresele_independent_2021,monti_causal_2020,shimizu_linear_2006,von_kugelgen_self-supervised_2021,hyvarinen_identifiability_2023}.
          An important observation we rely on is that identifiability guarantees from require a notion of non-\acrshort*{iid}ness, \eg, both the \gls*{ica} and the causal literature often relies on the multi-environmental setting.

        \begin{table}[H]
            \setlength{\tabcolsep}{2pt}
            \caption{\textbf{Mixing assumptions:} $p(\mathbf{s})$ stands for assumptions on the source distribution, $f$ on the mixing function, $\perp$ stands for independence (the superscript $\mathbf{u}$ denotes conditional independence given $\mathbf{u}$), \textit{CEF} for conditional exponential family (the superscript $+$ denotes monotonicity, $2$ denotes a CEF of order two), \textit{ING} for independent non-Gaussian (in \citet{jin_learning_2023} maximum one Gaussian component allowed, the distributions need to be different; in \citet{zimmermann_contrastive_2021}, the $L^{\alpha}$ metric is such that $\alpha\geq 1, \alpha \neq 2$), \textit{exg.} stands for exchangeability, \textit{AG} for an anistropic Gaussian on the hypersphere, \textit{EnAG} for an ensemble of such Gaussians, \textit{inj.} for injectivity, \textit{surj.} for subjectivity, $C^2$ for diffeomorphism; \textbf{SEM assumptions:}  $\mathbf{Z}$ stands for assumptions on the causal variables,  $g$ on the SEM, $\mathcal{M}$ denotes the Markov assumption, $\mathcal{F}$ faithfulness (or lack thereof), \textit{NP} stands for non-parametric; \textbf{Interventional (variability) assumptions:} $\#$ denotes the number of nodes affected by the intervention, \textit{P/S} denotes perfect or soft interventions, the \textit{target} column whether the intervention targets are known, $|E|$ stands for the number of environments ($d=\dim \mathbf{Z}$), $k$ is the order of the exponential family; \textbf{Identifiability ambiguities:} DAG denotes identifiability of the causal graph ({\color{figred}\checkmark} means the DAG is known; {\color{figblue}\checkmark}'s come from the result of \citet{reizinger_jacobian-based_2023}), $h$ denotes identifiability up to elementwise (non-)linear transformations, $\mat{D}$ denotes scaling, $\pi$ permutations, $c$ a constant shift, \mat{O} an orthogonal,  $\{\mat{O}\}$ a block-orthogonal, $\mat{A}$ an invertible matrix.}
            \label{tab:method_comp}
            \vskip 0.15in
            \begin{center}
            \begin{small}
            \begin{tabular}{lcc|cc|cccc||ccccc||ccccc} \toprule
                         & \multicolumn{2}{c|}{Mixing} & \multicolumn{2}{c|}{SEM} & \multicolumn{4}{c||}{Interventions} & \multicolumn{5}{c||}{Ident. $Z$} & \multicolumn{5}{c}{Ident. $S$}\\
                Method & $\mathbf{S}$ & $f$ &  $\mathbf{Z}$  & $g$ & $\#$ & type & target & $|E|$ & DAG & $h$ & $\mat{D}$ & $\pi$ & $c$ & \mat{A} &$h$ & $\mat{D}$ & $\pi$ & $c$ \\ \midrule
                {\tiny\citet{guo2024causal}} & & & $\mathcal{F}$ & exg. & 0 & & & &\checkmark & & & & & & & & \\ 
                {\tiny\citet{hyvarinen_unsupervised_2016}} & CEF & $C^2$ & & & - & S & \xmark & $d\!+\!1$ & & & & & & \checkmark &  \checkmark & \checkmark & \checkmark & \checkmark\\
                {\tiny\citet{hyvarinen_unsupervised_2016}} & CEF${}^+$ & $C^2$ & & & - & S & \xmark & $d\!+\!1$ &{\color{figblue}\checkmark} & & & & & \xmark &  \checkmark & \checkmark & \checkmark & \checkmark\\
                {\tiny\citet{hyvarinen_nonlinear_2019}} & $\perp^{\mathbf{u}}$ & $C^2$ & & & - & S & \xmark & $2d\!+\!1$ &{\color{figblue}\checkmark} & & & & & \xmark &\checkmark & \checkmark & \checkmark & \xmark\\
                {\tiny\citet{hyvarinen_nonlinear_2019}} & CEF & $C^2$ & & & - & S & \xmark & $dk\!+\!1$ &  & & & & & \checkmark & \checkmark & \checkmark & \checkmark & \checkmark\\
                {\tiny\citet{zimmermann_contrastive_2021}} & vMF & $C^2$ & & & $d$ & S & \xmark & 1 &  & & & & & \mat{O} & \xmark & \xmark & \checkmark & \xmark\\
                {\tiny\citet{zimmermann_contrastive_2021}} & $\mathbb{R}$ & $C^2$ & & & $d$ & S & \xmark & 1 &  & & & & & \checkmark & \xmark & \xmark & \checkmark & \checkmark\\
                {\tiny\citet{zimmermann_contrastive_2021}} & ING & $C^2$ & & & $d$ & S & \xmark & 1 & {\color{figblue}\checkmark} & & & & & \xmark & \xmark & \checkmark & \checkmark & \xmark\\
                {\tiny\citet{rusak_infonce_2024}} & AG & $C^2$ & & & $d$ & S & \xmark & 1 &  & & & & & $\{\mat{O}\}$ & \xmark & \xmark & \checkmark & \xmark\\
                {\tiny\citet{rusak_infonce_2024}} & EnAG & $C^2$ & & & - & S & \xmark & 1< & {\color{figblue}\checkmark} & & & & & \xmark & \xmark & \xmark & \checkmark & \xmark\\
                {\tiny\citet{khemakhem_variational_2020}} & CEF$^2$ & inj. &  &  & - & S & \xmark & $dk\!+\!1$  & {\color{figblue}\checkmark} &   &  &  &  & \xmark& \checkmark & \checkmark & \checkmark & \checkmark\\
                {\tiny\citet{khemakhem_ice-beem_2020}} & exg. & surj.\tablefootnote{In ICE-BeeM~\citep{khemakhem_ice-beem_2020}, the assumption is on the feature extractor} &  &  & - & S & \xmark & $2d\!+\!1$   & &   &  &  &  & \checkmark& \xmark & \checkmark & \checkmark & \checkmark\\
                {\tiny\citet{khemakhem_ice-beem_2020}} & exg. & surj.$^+$ &  &  & - & S & \xmark & $2d\!+\!1$   &{\color{figblue}\checkmark} &   &  &  &  & \xmark& \xmark & \checkmark & \checkmark & \checkmark\\
                  {\tiny\citet{khemakhem_ice-beem_2020}} & exg. & surj.$^2$ &  &  & - & S & \xmark & $2d\!+\!1$   &{\color{figblue}\checkmark} &   &  &  &  & \xmark& \xmark & \checkmark & \checkmark & \checkmark\\
                {\tiny\citet{reizinger_jacobian-based_2023}} &  &  & & & & & & & {\color{figblue}\checkmark} & & & & & \xmark & \checkmark & \checkmark & \checkmark & \checkmark\\
                {\tiny\citet{wendong_causal_2023}}  & $\perp$ & $C^2$ & $\mathcal{M}$ & & 1& S& \checkmark  & $d$ & {\color{figred}\checkmark} & \checkmark & \checkmark & \xmark & \checkmark & & & & &\\
                {\tiny\citet{wendong_causal_2023}}  & $\perp$ & $C^2$ & $\mathcal{M}$ & & 1 & P & \checkmark  & $d$ & {\color{figred}\checkmark} & \xmark & \checkmark & \xmark & \xmark & & & & & \\
                {\tiny\citet{von_kugelgen_nonparametric_2023}} & $\perp$ & $C^2$ & $\mathcal{F}$ & NP & 1 & P & \xmark & $2d$ & \checkmark & \checkmark & \checkmark & \checkmark & \checkmark & & & & &\\
                {\tiny\citet{jin_learning_2023}} & ING & $C^2$ & $\not{\mathcal{F}}$ & lin & 1 & S & \xmark & $d^2$ & \checkmark & \checkmark & \checkmark & \checkmark &\checkmark & & & &\\
                {\tiny\citet{jin_learning_2023}} & ING & $C^2$ & $\not{\mathcal{F}}$ & lin & - & S & \xmark & $d$ & \checkmark & \checkmark & \checkmark & \checkmark &\checkmark & & & &\\
                {\tiny\citet{jin_learning_2023}} & $\perp$ & $C^2$ & $\not{\mathcal{F}}$ & NP & 1 & S & \xmark & $d$ & \checkmark & \checkmark & \checkmark & \checkmark &\checkmark & & & &\\
                {\tiny\citet{liu_identifiable_2024}} & CEF$^2$ & inj. &$\mathcal{F}$ & ANM & 1& P & \xmark & $2d\! {\tiny+}1$ & \checkmark &  \xmark & \checkmark & \xmark & \checkmark & \xmark& \xmark & \checkmark & \checkmark & \checkmark\\
                \bottomrule
            \end{tabular}
            \end{small}
            \end{center}
            \vskip -0.1in
        \end{table}

\section{Intuition and examples for cause and mechanism variability}\label{app:sec_var_intuition}

    \paragraph{The \acrlong*{sms} hypothesis motivates cause and mechanism variability.}
    In \cref{sec:exchg}, we relaxed exchangeability into cause and mechanism variability. In this section, we show that both cause and mechanism variability can be used to describe many real-world scenarios. Intuitively, 
    \begin{center}
        \textit{Cause and mechanism variability can be seen as particular realizations of the \acrfull*{sms} hypothesis~\citep{perry_causal_2022}.}
    \end{center}
    The \gls*{sms} posits that the causal mechanisms (the factors in the causal Markov factorization) tend to change sparsely, \ie, interventions or distribution shifts can be described by changing a (strict) subset of mechanisms. This is one main argument for the efficiency of causal modelling, as the modularity implies that only parts of the model need to be adapted in case of a distribution shift---in contrast to a non-causal factorization, where the whole learned model needs to be fine-tuned.

    Indeed, the \gls*{sms} hypothesis captures the reasoning behind many works in causality~\citep{gendron_disentanglement_2023,perry_causal_2022,lachapelle_disentanglement_2021,lachapelle_synergies_2022,scholkopf_causal_2012,lachapelle_discovering_2021,ahuja_interventional_2022}. Sparse changes have been also connected to causal modeling~\citep{rajendran_interventional_2023,von_kugelgen_self-supervised_2021,fumero_leveraging_2023,mansouri_object-centric_2023,ahuja_weakly_2022}.

    \subsection{Real-world examples}
        In this section, we draw on prior works to provide real-world examples of cause and mechanism variability---for examples of the exchangeable case, we refer the reader to \citep{guo2024causal}. As with any model, we will make certain simplifications, though we aim to convey that the principle of cause and mechanism variability still applies. We will restrict ourselves to the bivariate case, as in \cref{fig:fig2}. 
        The causal factorization for $X\to Y$ is $p(Y|X,\psi)p(X|\theta),$ where the \gls*{cdf} parameters are $\theta, \psi$. Cause variability means that $p(\psi)$ is a delta distribution, whereas mechanism variability means that $p(\theta)$ is a delta distribution.

        \subsubsection{Cause variability.}

            \begin{example}[Lung cancer]
                Assume that $\theta$ parametrizes the lifestyle, socioeconomic, and environmental factors of people, whereas $\psi$ parametrizes how lung cancer develops. In this case, we can assume that $p(X|\theta)$ differs across cities, whereas the mechanism for developing lung cancer, $p(Y|X,\psi)$ is the same. That is, only $p(\psi)$ is a delta distribution.
            \end{example}

            \begin{example}[Altitude and temperature]
                 Assume that $\theta$ parametrizes the altitude distribution of countries, whereas $\psi$ parametrizes how altitude affects temperature. In this case, we can assume that $p(X|\theta)$ differs across countries, whereas the effect of altitude on temperature $p(Y|X,\psi)$ is the same. That is, only $p(\psi)$ is a delta distribution.
            \end{example}

        \subsubsection{Mechanism variability}
            \begin{example}[Natural experiments]
                In natural experiments in economics~\citep{Angrist1991,Imbens1994}, it is possible to select two populations such that we can assume that their distributions are the same, \ie, the corresponding $\theta$ parameter has a delta distribution, whereas the economic situation, parametrized by $\psi$, differs, \eg, by the two cities having different local taxes.
            \end{example}

            \begin{example}[Medical diagnoses]
                Assume that several people having the same lifestyle, socioeconomic, and environmental status are admitted to the same hospital after food poisoning at a local restaurant. Then, the probability distribution describing the symptoms, parametrized by $\theta$, will have a delta prior, as each person suffers from the same disease. If we assume that multiple doctors are required to diagnose and treat all patients, then we can posit that there will be (slight) differences in their decisions and prescribed treatments, which means that the corresponding parametric mechanism $p(Y|X,\psi)$ for the treatment has a non-delta prior for $\psi$.
            \end{example}

\section{Experimental results: cause and mechanism variability for causal discovery}\label{sec:app_exp}

    \paragraph{Setup.}
        To demonstrate that both cause and mechanism variability enable causal structure identification, we ran synthetic experiments based on the publicly available repository of the Causal de Finetti paper\footnote{\url{https://github.com/syguo96/Causal-de-Finetti}. Our code is available at \url{https://github.com/rpatrik96/IEM}}. We focus on the continuous case, as problems can arise for discrete \glspl{rv} (\eg, in \cref{lem:cauca})---\ie, we follow the protocol described in the \textit{``Bivariate Causal Discovery"} paragraph in~\citep[Sec.~6]{guo2024causal}.  The continuous experiments used in the original CdF paper consider the bivariate case, which we follow to be comparable. The only change in the evaluation protocol is not evaluating the CD-NOD method~\citep{huang2017behind}, as we do not have access to a MatLab license. That is, we compare against FCI~\citep{spirtes2013causal}, GES~\citep{chickering2002optimal}, NOTEARS~\citep{zheng_dags_2018}, DirectLinGAM~\citep{shimizu2011directlingam},
        plus a random baseline.

        Following~\citep[Sec.~6]{guo2024causal}, we describe the \gls*{dgp} in detail. The \gls*{cdf} parameters $\mathbf{N}=\brackets{\psi, \theta}$ were randomly generated with distinct and independent elements in each environment. Samples within each environment have the noise variables ${\mathbf{S}}$ generated via Laplace distributions conditioned on the corresponding \gls*{cdf} parameters---\ie, the \gls*{cdf} parameter is the location (mean) of the Laplace distribution. We observe a bivariate vector $\mathbf{X}= \brackets{X_1, X_2} \in \mathbb{R}^2$ and aim to uncover the causal direction between $X_1$ and $X_2$. Let the superscript ${(\cdot)}^e$ denote variables contained in environment $e$.

        Then, the data is generated as follows:
        \begin{align}
             \mathbf{N}^{e} &\sim\text {Uniform}[-1,1] \\
             {\mathbf{S}}^{e} &\sim \text {Laplace}(\mathbf{N}, 1) \label{eq:dgp_cdf_params}\\
             \mathbf{X}^{e} &= \mathbf{A}^{e} {\mathbf{S}}^{e}+\mathbf{B}^{e} \parenthesis{{\mathbf{N}^{e}}}^{\mathrm{o2}} \mathds{1}_{\text {nonlinear }}(e),
        \end{align}
        where $\circ 2$ denotes elementwise squaring. $\mathbf{A}^e \in \mathbb{R}^{2 \times 2}$ is a randomly sampled triangular matrix and $\mathbf{B}^{e}=\mathbf{A}^{e}-\mathbf{I}$, where $\mathbf{I}$ is the identity matrix. The causal direction is randomly sampled from $X_1 \rightarrow X_2, X_2 \rightarrow$ $X_1, X_1 \perp X_2$---this ensures that $\mathbf{A}$ is either a lower triangular, upper triangular or diagonal matrix. 
        $\mathds{1}_{\text {nonlinear }}(e)$ is an environment-dependent, randomly sampled nonlinear-dependence indicator, which models the realistic scenario of invariant causal structure but changing functional relationships. 
       
        We implement cause and mechanism variability in the above synthetic \gls*{dgp}, which by changing the \texttt{scm\_bivariate\_continuous} function in the original GitHub repository. There, we set the noise variables for the delta-distributed \gls*{cdf} parameter ($\theta$ for mechanism and $\psi$ for cause variability) to be equal to the corresponding parameter value (as that is used as the location of the Laplace distribution). This means collapsing the Laplace distribution to a delta distribution for the corresponding \gls*{cdf} parameter in \eqref{eq:dgp_cdf_params}. 
        
        For comparison, we evaluate three settings: the original scenario (with non-delta priors for both parameters), cause variability, and mechanism variability. We use, as in the original code, two samples per environment and ablate over \braces{100,200,300,400,500} environments. Each experiment is repeated 100 times. We measure causal structure identification by three conditional independence tests with a significance level of $\alpha=0.05$. We choose the estimated causal structure to be the one corresponding to the test with the highest $p$-value.

    \paragraph{Results.}

         \begin{figure}
            \begin{subfigure}[b]{0.32\textwidth}
    		\centering
                \includegraphics[width=0.99\linewidth]{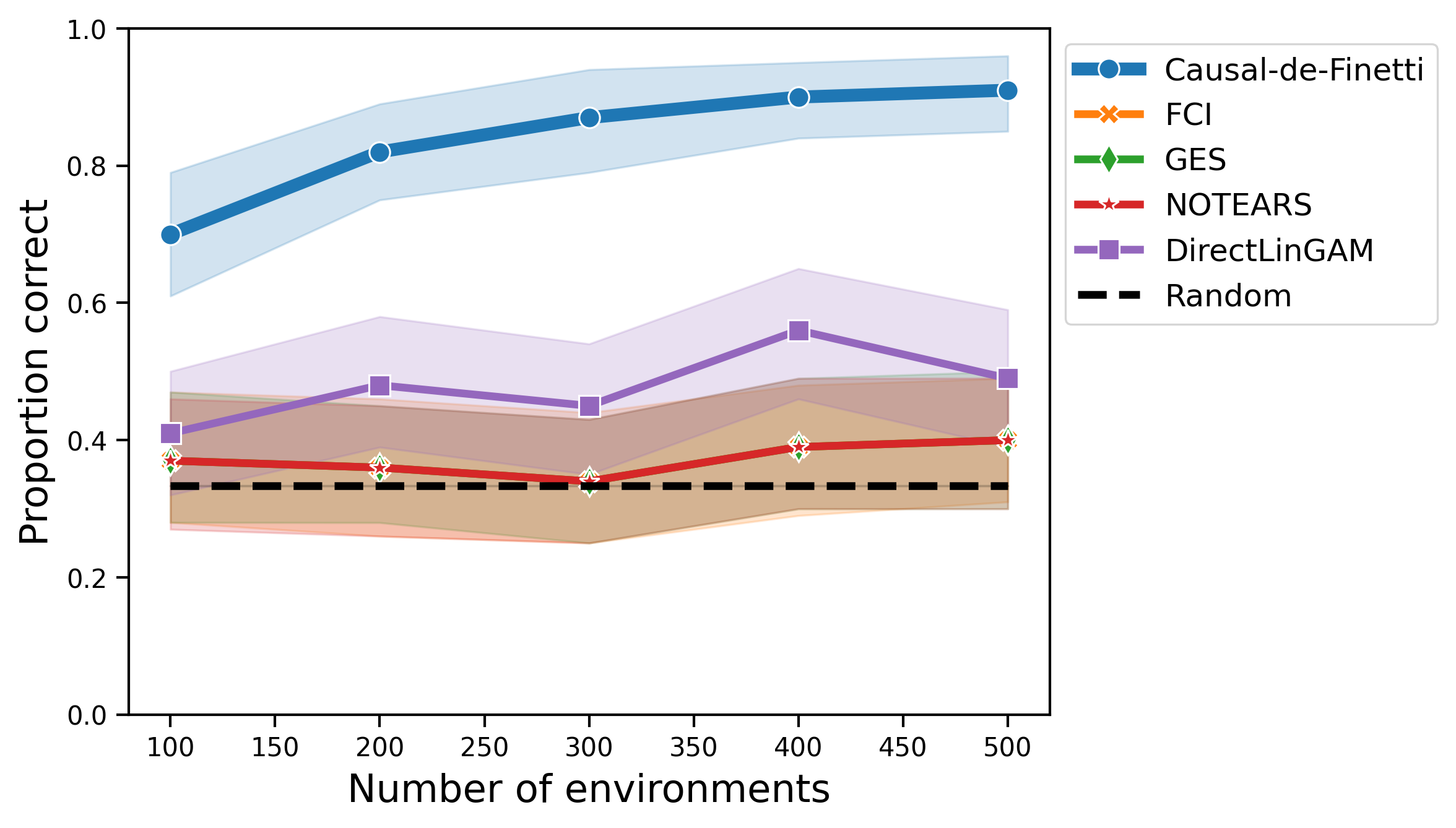}
                \caption{\acrlong*{cdf}}
                \label{subfig:exp_cdf}
    	\end{subfigure}
            \begin{subfigure}[b]{0.32\textwidth}
    		\centering
                \includegraphics[width=0.99\linewidth]{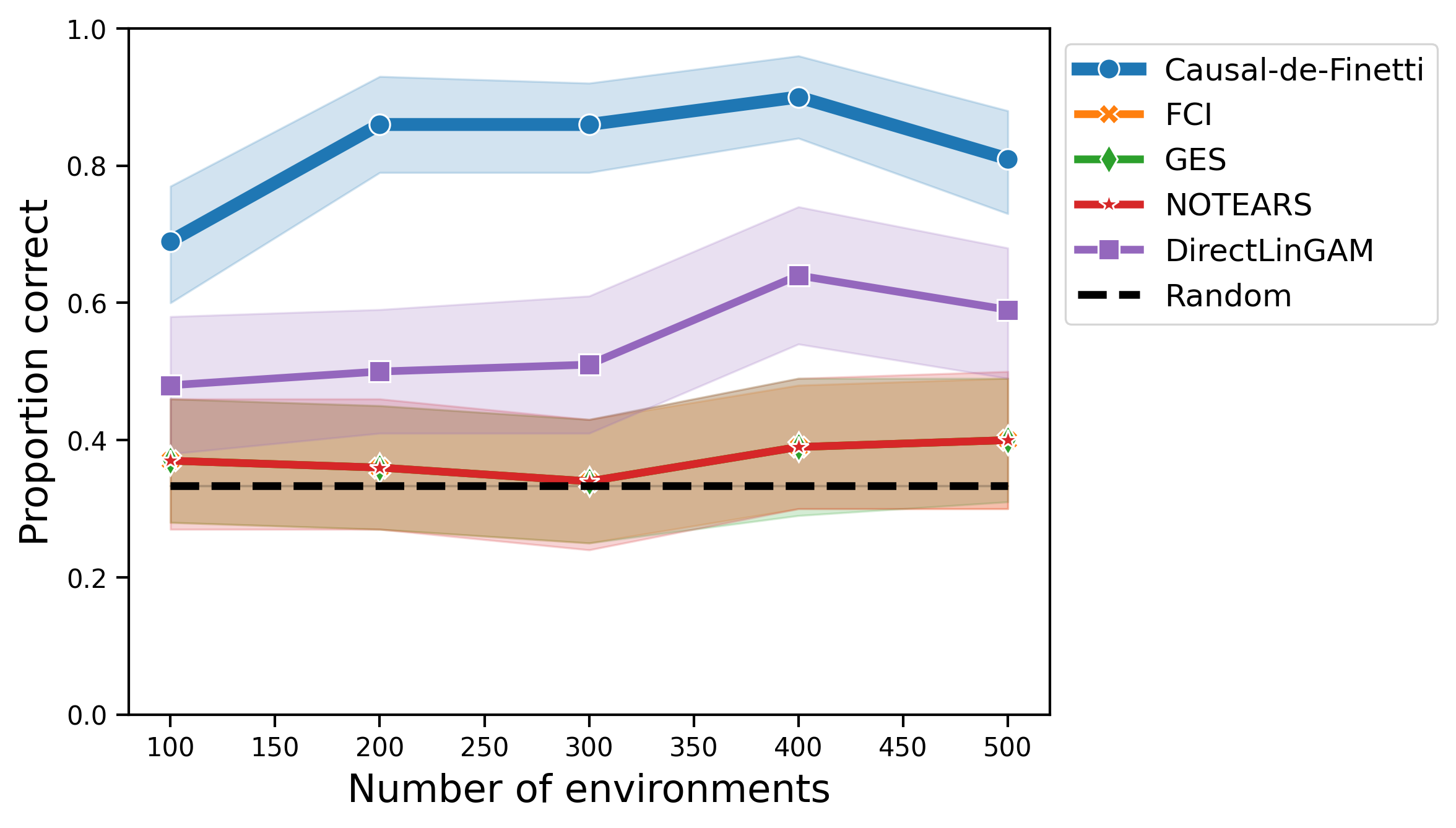}
                \caption{Cause variability}
                \label{subfig:exp_cause}
    	\end{subfigure}
            \begin{subfigure}[b]{0.32\textwidth}
    		\centering
                \includegraphics[width=0.99\linewidth]{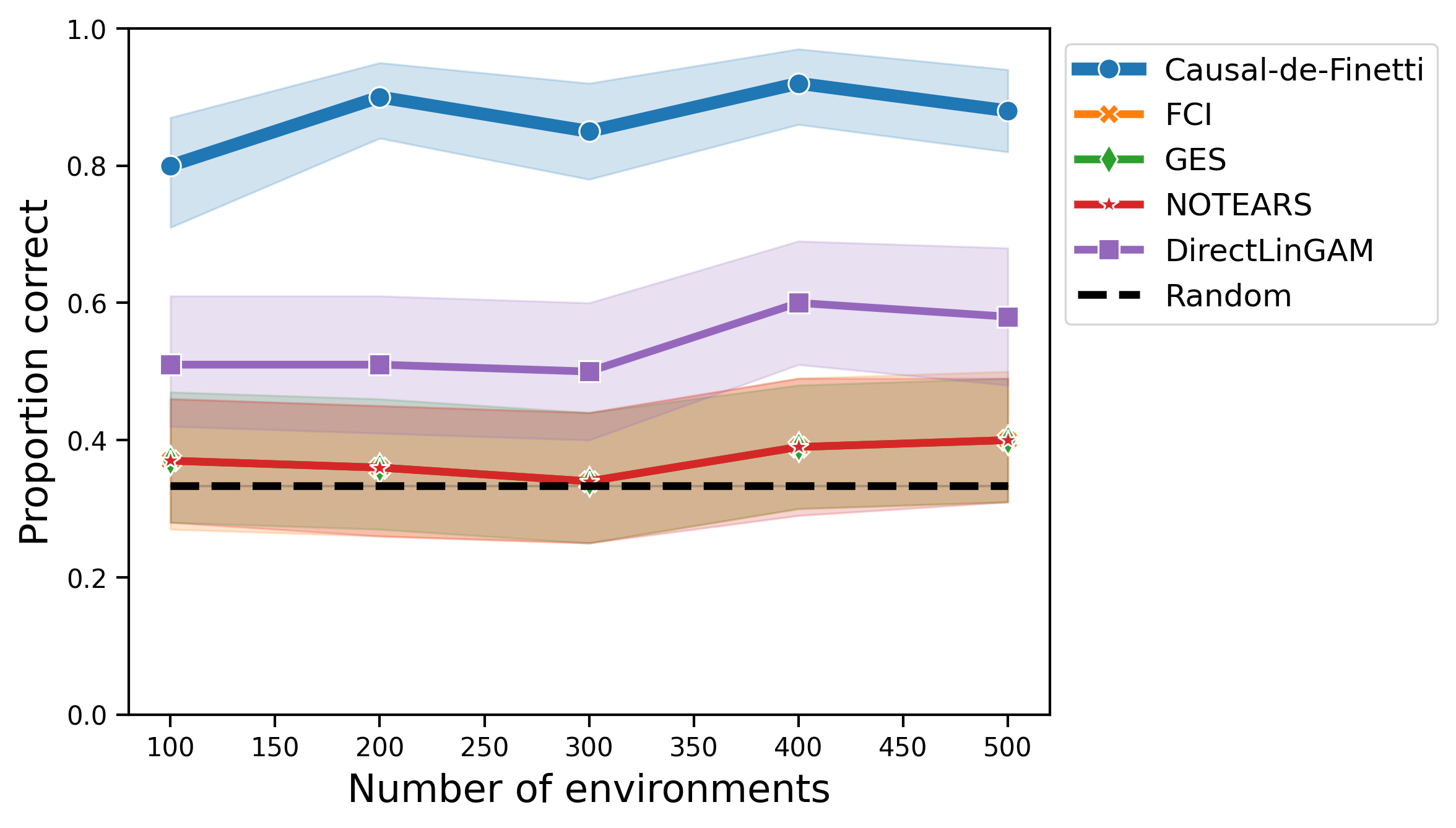}
                \caption{Mechanism variability}
                \label{subfig:exp_mech}
    	\end{subfigure}
            \caption{\textbf{Bivariate causal discovery is possible with cause and mechanism variability:} 
            Comparison of the \gls*{cdf} protocol with FCI, GES, NOTEARS, DirectLinGAM, and a random baseline for causal structure discovery in the bivariate case with continuous random variables. The proportion of correctly identified causal structures is shown against a different number of environments, chosen from \braces{100,200,300,400,500}. Shading shows the standard deviation across 100 seeds. \textbf{(a):} the original \gls*{cdf} setting, reproducing~\citep[Fig.~3(a)]{guo2024causal} with non-delta priors for both \gls*{cdf} parameters; \textbf{(b):} cause variability with a delta parameter prior for the effect-given-the-cause parameter $\psi$; \textbf{(c):} mechanism variability with a delta parameter prior for the cause parameter $\theta$; 
            For details, \cf \cref{sec:app_exp}
            }
            \label{fig:cdf_exp}
        \end{figure}

        \cref{fig:cdf_exp} shows the proportion of correctly identified causal structures for different numbers of environments. The Causal-de-Finetti algorithm outperforms all the other methods with an accuracy close to $100 \%$. This holds not just in the original scenario proposed by \citet{guo2024causal} (\cref{subfig:exp_cdf}), but also in the case of cause and mechanism variability (\cref{subfig:exp_cause,subfig:exp_mech}), corroborating our \cref{thm:extendcdf}.

\newpage
\printacronyms

\end{document}